\def\set@curr@file#1{\def\@curr@file{#1}} %
\newcommand{\disteq}{\overset{d}{=}}
\newcommand{\distneq}{\overset{d}{\neq}}
\newcommand{\R}{\mathbb{R}}
\newcommand{\x}{\mathbf{x}}
\definecolor{jet_min}{rgb}{0.0, 0.0, 0.5}
\definecolor{jet_max}{rgb}{0.5, 0.0, 0.0}
\definecolor{mimic_w}{RGB}{73, 80, 132}
\definecolor{mimic_b}{RGB}{72, 156, 158}
\title[Disparate Censorship \& Undertesting: A Source of Label Bias]{Disparate Censorship \& Undertesting: A Source of Label Bias in Clinical Machine Learning}
\author{\Name{Trenton Chang}
\Email{ctrenton@umich.edu}\\ 
\addr Division of Computer Science and Engineering\\
University of Michigan\\
Ann Arbor, MI, USA
\AND
\Name{Michael W. Sjoding}
\Email{msjoding@med.umich.edu}\\
\addr Department of Internal Medicine\\
University of Michigan\\
Ann Arbor, MI, USA
\AND
\Name{Jenna Wiens}
\Email{wiensj@umich.edu}\\ 
\addr Division of Computer Science and Engineering\\
University of Michigan\\
Ann Arbor, MI, USA
} 
\begin{document}

\maketitle

\begin{abstract}
As machine learning (ML) models gain traction in clinical applications, understanding the impact of clinician and societal biases on ML models is increasingly important. While biases can arise in the labels used for model training, the many sources from which these biases arise are not yet well-studied. In this paper, we highlight \textit{disparate censorship} (\textit{i.e.,} differences in testing rates across patient groups) as a source of label bias that clinical ML models may amplify, potentially causing harm. 
Many patient risk-stratification models are trained using the results of clinician-ordered diagnostic and laboratory tests of labels.
Patients without test results are often assigned a negative label, which assumes that untested patients do not experience the outcome.
Since orders are affected by clinical and resource considerations, testing may not be uniform in patient populations, giving rise to disparate censorship.
Disparate censorship in patients of equivalent risk leads to \textit{undertesting} in certain groups, and in turn, more biased labels for such groups.
Using such biased labels in standard ML pipelines could contribute to gaps in model performance across patient groups.
Here, we theoretically and empirically characterize conditions in which disparate censorship or undertesting affect model performance across subgroups. Our findings call attention to disparate censorship as a source of label bias in clinical ML models.
\end{abstract}

\begin{figure}[t]
    \centering
    \includegraphics[width=\linewidth]{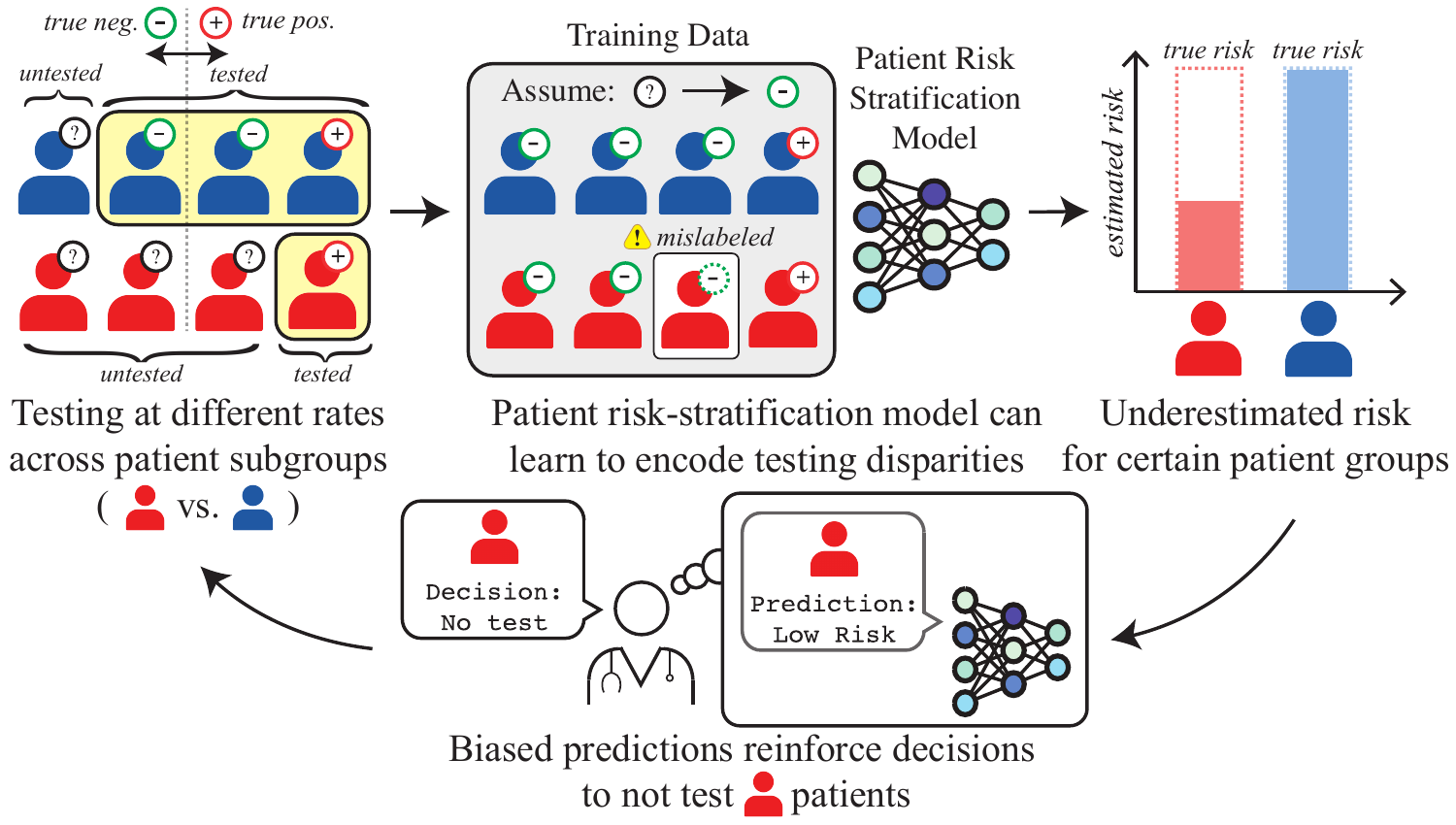}
    \caption{Disparate censorship can cause a harmful feedback loop in clinical ML workflows. Clockwise from top left: \textbf{a)} disparate censorship occurs when a patient subgroup is tested for some condition at a lower rate compared to other groups. \textbf{b)} When untested patients are assumed negative, if patients of equivalent risk are subject to disparate censorship (\emph{i.e.}, undertesting), standard ML training may learn to encode label bias (\emph{i.e.}, missed positives by clinicians) from undertesting. \textbf{c)} Such models may underestimate risk for certain patients. \textbf{d)} Acting on such predictions perpetuates undertesting---further harming already-underserved populations.}
    \label{fig:pull}
\end{figure}

\section{Introduction}

Medical applications are increasingly considering the usage of machine learning (ML) models. However, researchers have found that ML models may perform disproportionately poorly on marginalized groups~\citep{buolamwini2018gender, obermeyer2019dissecting, pierson2021algorithmic}. Biases in training data resulting from spurious correlations between the inputs and outputs have received much attention (\emph{i.e.}, ``shortcuts"~\cite{geirhos2020shortcut, jabbour2020deep}). However, biases can also arise in the labels used for model training. \cite{obermeyer2019dissecting} highlighted one such type of \textit{label bias} in equating healthcare need with healthcare cost led to downstream inequities in clinical care. We hypothesize that label bias may arise from other sources as well. Specifically, many researchers rely on assumptions about labels that could exacerbate pre-existing disparities in healthcare delivery.%

For the purposes of ML model training, patient outcomes are often defined based on laboratory/diagnostic test results extracted from the electronic health record (EHR; e.g.~\citep{rhee2020sepsis, seymour2016assessment, henry2019comparison}), since clinical chart review on large patient databases can be prohibitively costly.
In doing so, many researchers assign ``negative" labels to untested patients (the \emph{negativity} assumption in positive-unlabeled learning~\citep{bekker2020learning}). For example, many sepsis prediction models derive labels from laboratory test-based definitions, such that untested patients are negative by definition \citep{adams2022prospective,henry2015targeted, fleuren2020machine,reyna2019early}.
Beyond sepsis, this is also the case when building models to predict healthcare-associated infections~\citep{oh2018generalizable,teeple2020clinical, hartvigsen2018early}. Researchers typically justify this assumption, since without it, a model trained on \textit{only} patients who were tested may only apply to the small fraction of tested patients, limiting its utility.

This common approach to labeling patient outcomes in ML model development can have harmful downstream effects when patient groups are tested at different rates. We refer to this setting as \textit{disparate censorship}, which serves as the focus of our analyses. Examples of disparate censorship in clinical care include lower rates of colon cancer screening for Black patients~\citep{dolan2005colorectal}, or biases in cardiac evaluations for women~\citep{schulman1999effect}. %
When disparate censorship occurs in patients of equivalent risk from different groups, one group is \emph{undertested} relative to the other. Undertesting can result in higher rates of missed diagnoses/positives in certain patient group(s) as compared to other group(s), leading to disproportionate harm. In practice, disparate censorship and undertesting can be caused by pre-existing healthcare disparities such as clinician biases~\citep{schulman1999effect, daugherty2017implicit}, different levels of healthcare access or consent~\citep{spector2021respecting}, or different test performance across groups~\citep{gaffin2010clinically}.

\textbf{While the immediate harm of undertesting and missed diagnoses is clear~\citep{magesh2021disparities, berry2009examining}, ML has the potential to amplify this harm.} Patient risk-stratification models that do not account for the potential impact of disparate censorship and undertesting may underestimate the risk of the condition of interest.
This could reinforce a harmful feedback loop during ML model deployment (Figure~\ref{fig:pull}), in which models reinforce biased ``do not test" decisions.

In this paper, we characterize when disparate censorship and undertesting can result in ML model performance gaps across patient subgroups. %
We study three different settings.
In the first setting, both the covariates and the drivers of the outcome/disease are drawn from the same distributions for all patient subgroups of interest.
In the second setting, the \textit{marginal distribution} of covariates varies across groups. For example, social determinants of health such as education, socioeconomic status, and healthcare access may differ across race and gender~\citep{singh2017social}. Racial disparities in biomarkers indicating COVID-19 severity have also been documented~\citep{price2020hospitalization}.
In the third setting, the \textit{conditional probability} distribution of the outcome given the covariates may vary across subgroups. For example, the symptoms most indicative of coronary heart disease may differ between female and male patients~\citep{lichtman2018sex}. 

First, if the marginal and conditional risk distributions across groups are identical (first setting), neither disparate censorship nor undertesting result in performance gaps. However, in many healthcare settings, it is unlikely that the distribution of covariates and drivers of outcome/disease are identical. When differences in the marginal and conditional risk distributions arise, we show theoretically and validate empirically that disparate censorship can contribute to model performance gaps via certain patterns of undertesting. %
Then, we identify disparate censorship in clinical data and suggest practical approaches for identifying when disparate censorship may disproportionately negatively impact one group more than another. We encourage the ML for healthcare community to further explore methods for detecting and mitigating the negative effects of disparate censorship and undertesting.

\subsection*{Generalizable Insights about Machine Learning in the Context of Healthcare}

This paper highlights and analyzes how disparate censorship and undertesting can result in performance gaps in risk-stratification models when the underlying data generation processes differ across patient subgroups. Our contributions are as follows:

\begin{itemize}
    \item We introduce ``disparate censorship", in which patients are tested at different rates across groups, and formalize how undertesting, or testing disparities in patients with equivalent risk, can lead to gaps in ML model performance across patient subgroups.
    \item We show that undertesting can lead to model performance gaps across subgroups when certain differences in the marginal and conditional distributions of risk emerge.
    \item We validate our theory, demonstrating empirically how disparate censorship and undertesting lead to performance gaps across subgroups via a simulation study.
    \item We identify instances of disparate censorship in clinical data (MIMIC-IV) and suggest practical approaches for mitigating negative impacts.
\end{itemize}

\section{Problem Setup \& Definitions}

In this section, we propose a causal model of disparate censorship (Section~\ref{subsec:causal_model}), and outline three settings in which we study the impacts of disparate censorship (Section~\ref{subsec:distributional_diff}).

\subsection{Causal Model of Disparate Censorship}
\label{subsec:causal_model}

We formalize disparate censorship using a causal directed acyclic graph (DAG; Figure~\ref{fig:causal_dag}). We define the problem using five variables: $a \sim P(A)$ (binary),\footnote{In practice, $a$ is often categorical, for which our analysis generalizes, but we restrict $a$ to be binary for simplicity. We leave analyses of non-categorical/overlapping $a$ as future work.} representing a patient subgroup (\textit{e.g.}, race, biological sex, hospital location), $\x \sim P(X, A)$ (continuous), representing feature vectors/covariates for an individual patient, $t \sim P(T \mid A, X)$ (binary), representing whether a patient was tested for a condition of interest, $y \sim P(Y \mid X, A)$ (binary, unobserved), representing whether a patient has the condition of interest, and $\tilde{y} \sim P(\tilde{Y} \mid Y, T)$ (binary), representing whether the patient tested positive for the condition. We assume $X, Y$ may or may not depend on $A$. Throughout the paper, we notate subgroup-level versions of various distributions as $P_a(\cdot)$ for $a \in \{0, 1\}$. For example, $P_0(\x)$ denotes ``the distribution of covariates for patients in group $a=0$". %

The DAG (Figure~\ref{fig:causal_dag}) encodes two additional assumptions. First, we assume clinician decisions to test, \textit{i.e.}, $T$, depend on both $A$ and $X$. In other words, the level of testing disparity depends on both the values of subgroup $A$ and covariates $X$. 
Second, we assume no unobserved confounding between $X$ and $Y$, such that $X$ contains all direct causes of $Y$.
Later, we will relax our modeling assumptions by allowing either $X$ or $Y$ to causally depend on $A$ (indicated by the dashed blue arrows).\footnote{Although $A$ is an exogenous variable, we caution against general interpretations of social categories as inherent/static characteristics.} 

Suppose that we aim to train a patient risk-stratification model. We frame this as a supervised ML task, in which one aims to learn a mapping $s: \mathcal{X} \mapsto \R$, where $\mathcal{X}$ is the support of $\x$, and predicted values of $y$ are generated by thresholding $s(\x)$. In our setting, we assume that the true label, $y$, is unobserved. Instead, we observe $\tilde{y}$ (\emph{i.e.}, a test result), a (potentially) noisy proxy for $y$. The model $s$ is trained on a dataset $\mathcal{D} = \{(\x_i, \tilde{y}_i)\}_{i=1}^n$ by minimizing some loss function $\mathcal{L}: (x_i, \tilde{y}_i, s) \to \R$ (\textit{e.g.}, regularized binary cross-entropy loss) that aims to make model predictions $s(\x_i)$ close to the observed labels $\tilde{y}_i$. 

\begin{figure}[t]
    \centering
    \includegraphics[width=\linewidth]{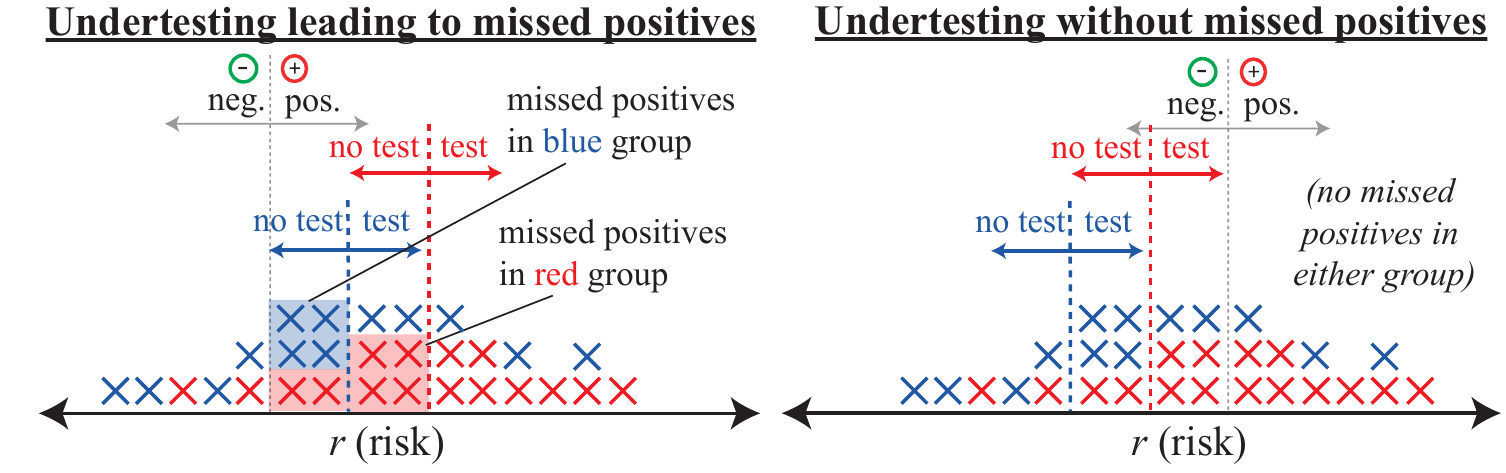}
    \caption{Stylized undertesting patterns. In both examples, the red group is undertested (higher risk $r$ required for a test). Left: undertesting is harmful due to the disproportionate rates of missed diagnoses. Right: the absence of missed diagnoses means that undertesting has no disproportionate impact in terms of missed positives. In this paper, we focus on the scenario on the left.}
    \label{fig:undertesting}
\end{figure}

We assume that the dataset contains disjoint subgroups $\mathcal{D}_a$, where $a \in \{0, 1\}$. Let $t \in \{0, 1\}$ be a variable indicating whether a patient was tested (0 = no, 1 = yes). To simplify, we assume  perfectly accurate tests.\footnote{In practice, tests may not be perfectly accurate, and may have different sensitivities in each group. Our analysis remains the same, since our argument is based on label noise rates. In such cases, we can define $T$ as a variable indicating whether a patient was tested \emph{and} the correctness of the test result, from which the results follow identically.} We define disparate censorship as follows:

\begin{definition}[Disparate censorship.]
Let $P_a(t)$ be the probability that a patient in group $a$ was tested for a condition of interest $y$. Disparate censorship occurs if $P_0(t) \neq P_1(t)$.
\end{definition}

Under disparate censorship, the true label $y$ is censored/unobserved at different rates in each patient subgroup. Consequently, if a clinician decides to not test a patient for condition $y$ (\textit{i.e.}, $t = 0$), then $y$ is censored, so $\tilde{y} = 0$. Conversely, if $t = 1$, then $\tilde{y} = y$---we observe whether the patient has condition/outcome $y$.

Similarly, undertesting can be defined as follows:

\begin{definition}[Undertesting.]
\label{def:undertesting}
Define $r$ as a random variable representing the probability of condition $y$ given covariates $\x$; i.e., $r \propto P(y \mid \x)$.
Let $P_a(t|r)$ denote the probability that a patient in group $a$ with risk $r$ received a test. Without loss of generality, we say that group $A=1$ is undertested relative to group $A=0$ if $\int_r \max(0, [P_0(t|r) - P_1(t|r)]) dr > 0$.
\end{definition}

This definition captures all \textit{positive} testing gaps between two groups across all levels of risk. %
Furthermore, this definition demonstrates how undertesting and disparate censorship differ: the absence of disparate censorship does not guarantee the absence of undertesting. However, undertesting can arise due to disparate censorship. 
In our work, we focus on undertesting such that one group suffers disproportionately high rates of missed diagnoses/positives compared to the other group (Figure~\ref{fig:undertesting}, left). In this stylized example, undertesting as defined raises fewer concerns if it occurs in patients without the condition of interest, since the missed positive rate is already zero (Figure~\ref{fig:undertesting}, right). %

\begin{figure}[t]
\centering
    \begin{minipage}[t]{0.5\textwidth}
        \centering
        \includegraphics[width=0.85\linewidth]{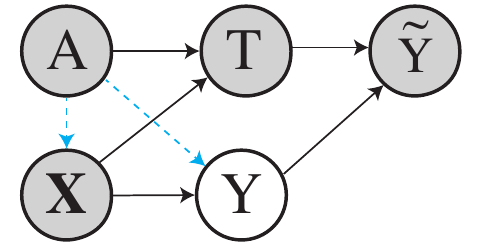}
        \captionsetup{format=plain}
        \caption{Causal DAG used in our analysis. (Un)shaded variables are (un)observed. Patient risk-stratification models aim to predict $y$ given $x$, but only $\tilde{y}$ is observed. Disparate censorship arises when $a$ affects $t$. Also, $a$ may affect $\x$ and $y$ (dashed arrows).}
        \label{fig:causal_dag}
    \end{minipage}\hfill%
\begin{minipage}[t]{0.48\textwidth}
        \centering
        \includegraphics[width=0.6\linewidth]{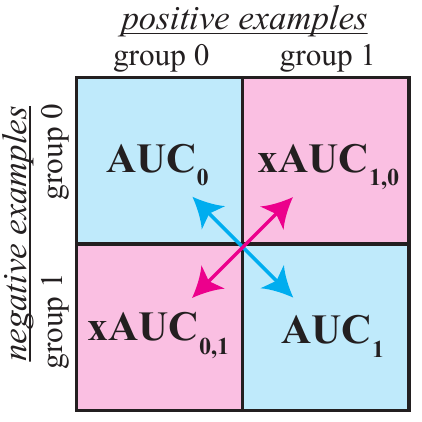}
        \captionsetup{format=plain}
        \caption{Decomposition of overall AUC into within-group AUCs (cyan) and xAUCs (magenta). Zero ranking performance gap between groups $a$ requires parity for both within-group AUCs (cyan arrow) and xAUCs (magenta arrow).}
        \label{fig:auc}
\end{minipage}
\end{figure}

\subsection{Data Generating Processes}
\label{subsec:distributional_diff}

We analyze undertesting and disparate censorship in three settings defined by different assumptions for the data generating processes. %
Within each setting, we assume the same distributional conditions hold at training and inference time.

\paragraph{Setting 1: No difference in marginal and conditional distributions.} Formally, $P_0(\x) \disteq P_1(\x), P_0(y \mid \x) \disteq P_1(y \mid \x)$.\footnote{$\disteq$: Equal in distribution.} This means that $A$ has no effect on $X$ or $Y$. In this setting, both population subgroups have the same distribution of features/covariates, and the same probability of disease given specific covariate values. Note that $A$ is independent of $X$: not only is $A$ not included in $X$, but no component of $X$ is affected by $A$. This is an idealized setting in which there are no differences between subgroups in (1) the distribution of covariates and (2) their relationship to the outcome of interest. However, this is unlikely in practice: even when $A$ is not used as a model input, proxies for $A$ can appear in $X$ such that $P_0(\x) \distneq P_1(\x)$, as discussed in~\cite{vyas2020hidden, ioannidis2021recalibrating}.\footnote{$\distneq$: not equal in distribution.} We provide examples of such proxies in the next setting.

\paragraph{Setting 2: Difference in the marginal distribution only.} In this setting, $P_0(y \mid \x) \disteq P_1(y \mid \x)$, but $P_0(\x) \distneq P_1(\x)$. This means that patients with the same covariates are at equal risk for the disease, but the covariate distribution may vary across subgroups $a$. In causal terms, variable $A$ is a cause of/associated with components of $X$; the distribution of the covariates of interest varies between patient subgroups. Examples of this setting arise in the study of social determinants of health~\citep{singh2017social}, disparities in hypertension rates~\citep{lackland2014racial}, or multicenter datasets~\citep{bhuva2019multicenter}.
In addition, differences in healthcare utilization (alluded to in~\cite{price2020hospitalization}) and/or consent~\citep{spector2021respecting} may also give rise to covariate differences.

\paragraph{Setting 3: Difference in the conditional distribution only.} Formally, $P_0(\x) \disteq P_1(\x)$, but $P_0(y \mid \x) \distneq P_1(y \mid \x)$. In causal terms, this means that variable $A$ is a cause of/associated with $Y$. In this setting, we make a simplifying assumption that covariate distributions are identical across $a$. However, patients with the same covariates that belong to different subgroups may have different probabilities of disease. For example, female patients with acute myocardial infarction (heart attack) may present differently than male patients~\citep{lichtman2018sex}.%

If both the marginal risk distribution (Setting 2) and the conditional risk distribution (Setting 3) differ (\textit{i.e.}, $P_0(\x) \distneq P_1(\x)$ and $P_0(y \mid \x) \distneq P_1(y \mid \x)$, then negative impacts from either setting would apply as well. 

\section{Theoretical Analysis of Disparate Censorship \& Undertesting}
\label{sec:theory}

We develop a theoretical framework for analyzing disparate censorship.
We characterize settings under which disparate censorship and undertesting are unlikely to lead to performance gaps across patient subgroups. We focus on ranking performance gaps: differences in within-group area under the receiver operating characteristic curve (AUC) and cross-AUC (xAUC, ~\cite{kallus2019fairness}).%

\subsection{Measuring Ranking Performance Gaps}
\label{subsec:metrics}

To quantify performance gaps, we focus on ranking metrics, which are frequently used for evaluating clinical risk-stratification models. In this setting, clinicians aim to identify the top-$k$ patients to treat, where $k$ may be determined by resource constraints. %
To evaluate ranking performance gaps, we report two metrics: (1) the gap in within-group AUC and (2) the gap in cross-AUC (xAUC, ~\cite{kallus2019fairness}).

As intuition, recall that the AUC is the probability that a randomly chosen positive example, $\x_i$, has a greater risk score, than a randomly chosen negative example, $\x_j$ (\textit{i.e.}, $P(s(\x_i)>s(\x_j))$). Then the within-group AUC for group $a$, written as $\text{AUC}_a$, is the probability that two patients from group $a$ (one positive, one negative) were correctly ranked (Figure~\ref{fig:auc}, cyan). The xAUC has a similar interpretation: $\text{xAUC}_{a, a'}$ is the probability that a random positive patient in group $a$ was ranked above a negative patient from group $a'$ (Figure~\ref{fig:auc}, magenta). Both within-group AUC and xAUC are key to explaining ranking performance gaps: while within-group AUC only captures misranking error between patients in one group, xAUC quantifies misranking error between groups. %

Ideally, group ranking performance is equal across groups (Figure~\ref{fig:auc}, cyan arrow), while cross-group ranking performance should be symmetric (Figure~\ref{fig:auc}, magenta arrow).
This yields the following performance gap metrics---$\Delta \text{AUC}$ and $\Delta \text{xAUC}$ (lower is better):
\begin{align}
    \Delta \text{AUC} &\triangleq \left|\text{AUC}_1 - \text{AUC}_0\right|\\
    \Delta \text{xAUC} &\triangleq \left|\text{xAUC}_{0,1} - \text{xAUC}_{1,0}\right| %
\end{align}

Assuming \emph{perfect separation}, all patient pairs can be perfectly ranked, and the optimal overall AUC is 1.
In such cases, the optimal $\text{AUC}_a, \text{xAUC}_{a, a'}$ values are also 1, so $\Delta \text{AUC} = \Delta \text{xAUC} = 0$ (no performance gap across groups). For details, see Appendix~\ref{appdx:eval}.

\subsection{Impact of distributional differences on performance gaps}
\label{subsec:diffs}

Here, we prove conditions that lead to zero $\Delta \text{AUC}, \Delta \text{xAUC}$ in the presence of disparate censorship and/or undertesting for the three distributional settings studied.

\subsubsection{No difference in marginal or conditional risk distributions}
\label{subsec:nodiff}

In this idealized case (\textit{i.e.}, Setting 1), a model would converge to zero performance gap in expectation, even when trained on data exhibiting disparate censorship. 
In fact, Setting 1 is a special case in which even undertesting, or different testing rates for patients of equal risk, would not result in performance gaps. 
This is because the two groups are indistinguishable in $X, Y$, since $A$ is independent of both $X$ and $Y$, so any risk-stratification model will have identical performance across groups.
Thus, $\Delta \text{AUC}, \Delta \text{xAUC}$ converge to zero.

\subsubsection{Difference in marginal distribution only}
\label{subsec:marginal_diff}

In Setting 2, the marginal distribution of the covariates varies across groups. We show that undertesting the low-risk group leads to zero $\Delta\text{AUC}$, $\Delta \text{xAUC}$.

First, we consider a censorship model in which clinicians apply thresholds to clinical risk estimates to make testing/treatment decisions (the ``threshold approach"  in clinical decision-making~\citep{pauker1980threshold}).
In covariate space, we call such testing thresholds ``censorship boundaries."
Suppose that clinician decisions to test ($P(T \mid X, A)$) and the true condition indicators ($P(Y \mid X$)) %
can be written as $\mathbbm{1}[s(\x) > (\cdot)]$ for some ``scoring function" $s: \mathcal{X} \to \R$.
An example is provided in Figure~\ref{fig:boundaries}: note that the censorship boundaries are ``parallel" to the true decision boundary.
Concretely, we assume that there exists some $\tau_a \in \R$ for $a \in \{0, 1\}$, which act as a group-wise censorship thresholds, such that $t = \mathbbm{1}[s(\x) > \tau_a \lor p = 1]$, where $p \sim Bernoulli(c)$ for a suitably small $c \in (0, 1]$.\footnote{\label{foot:c}The constraint $c > 0$ ensures that no one is tested with zero probability, or else there may be no signal to learn for sufficiently high censorship thresholds.}
Furthermore, we assume there exists some $b \in \R$, which acts as a decision boundary, such that $y = \mathbbm{1}[s(\x) > b]$.
That is; if a patient in group $a$ has risk $s(\x)$ greater than $b$, their true outcome $y$ is 1.
The patient is tested for the condition if $s(\x) > \tau_a$; otherwise, they are tested with probability $c$.

We show that, in Setting 2, when training a model using $\x$, when undertesting results in missed positives, either undertesting the low-risk group or the absence of undertesting (given that $a=0$ is the low-risk group, $\tau_0 \geq \tau_1$) yields zero performance gap.
When undertesting does not result in any missed positives, no performance gap is expected. 

\begin{figure}
    \centering
    \includegraphics[width=0.7\linewidth]{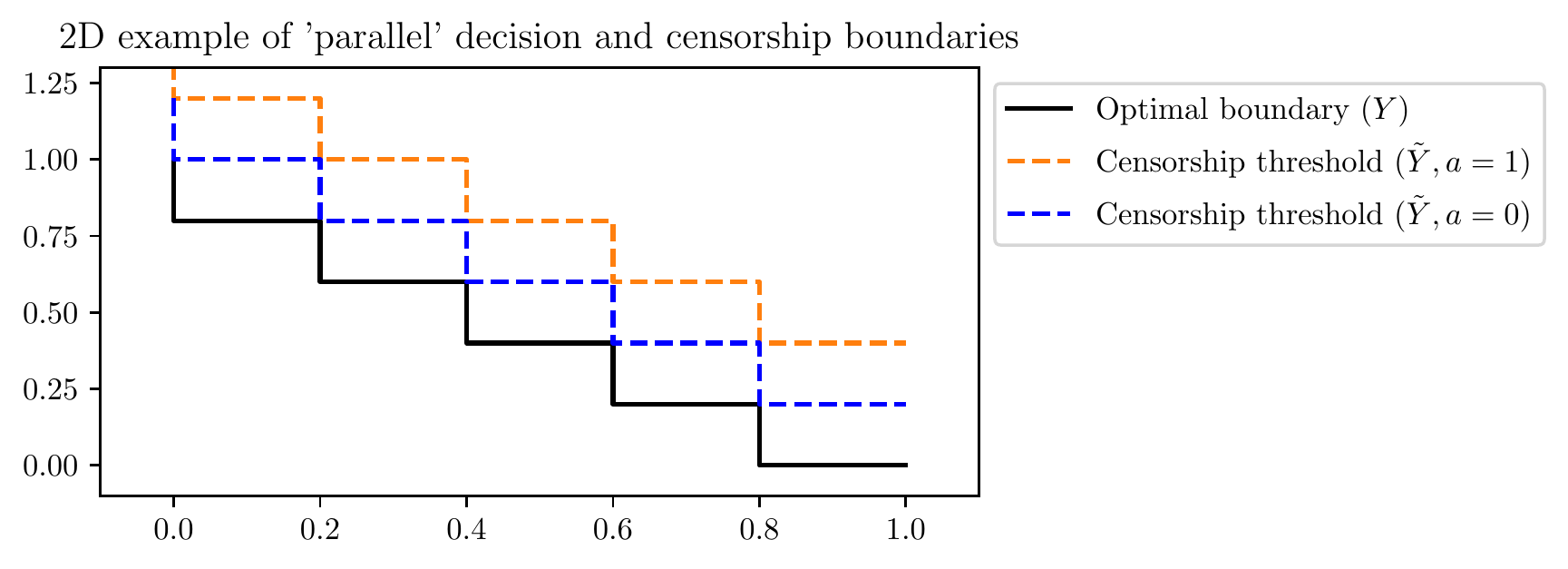}
    \caption{Example decision (black) and censorship boundaries (blue, orange) in 2D. In our noise model, we assume that the clinician ``test decision" ($t$) and true ``condition status" ($y$) decision boundaries are based on a threshold with the same functional form (\textit{i.e.} ``parallel").}
    \label{fig:boundaries}
\end{figure}

\begin{theorem}[Zero performance gap under different marginal distributions.]
\label{thrm:marginal}
Suppose that the causal graph in Figure~\ref{fig:causal_dag} is a correctly-specified structural model, $A$ causally affects $X$, and $A$ is not associated with $Y$. Assume that $Y$ is perfectly separable in $\x$, and $z \triangleq s(\x)$ is distributed for each group as $z \mid a=0 \sim \mathcal{N}(\mu_0, \sigma^2), z \mid a=1 \sim \mathcal{N}(\mu_1, \sigma^2)$.
Without loss of generality, suppose that $\mu_1 \geq \mu_0$. Then if $\tau_1 \leq \tau_0$ or no positives are missed, the $\Delta \text{AUC}, \Delta \text{xAUC}$ of a model using $\x$ as features converges to 0 for suitable values of $\tau_1$.\footnote{In practice, there is a broad range of suitable values for $\tau_1$; see Appendix~\ref{appdx:theory} for details.}
\end{theorem}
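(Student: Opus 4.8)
The plan is to reduce the ranking question to a one-dimensional computation over the score $z \triangleq s(\x)$, exploiting that both the outcome and the test decision are thresholds in $z$. First I would write the observed label as $\tilde{y} = 1 \iff (y = 1 \wedge t = 1)$, which under the stated model makes $P(\tilde{y} = 1 \mid z, a)$ a step function of $z$: it equals $0$ for $z \leq b$, equals $c$ on the ``missed-positive'' band $(b, \tau_a]$, and equals $1$ for $z > \tau_a$. Two consequences drive everything. First, this function is nondecreasing in $z$, so the population-loss minimizer (the Bayes-optimal predictor of $\tilde{y}$) is a monotone transform of $z$ within each group, and the model essentially ranks patients by $z$. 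Second, because $c > 0$, every true positive ($z > b$) has strictly positive observed-positive probability while every true negative has probability $0$; this pins the relative ordering of positives and negatives and is what I would use to control the cross-group comparisons.

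Next I would compute the within-group AUC (against the observed labels $\tilde{y}$, which are all that is available at train and inference time) by splitting the $z$-axis into $\{z \leq b\}$, $\{b < z \leq \tau_a\}$, and $\{z > \tau_a\}$. Observed positives occupy the middle band (with weight $c$) and the high region; observed negatives occupy the low region and the middle band (with weight $1 - c$). The key simplification is that, conditioned on lying in the middle band, an observed positive and an observed negative have \emph{identical} $z$-distributions (the test coin is independent of $z$), so a random such pair is correctly ranked with probability exactly $\tfrac12$. The case analysis then yields the clean closed form $\text{AUC}_a = 1 - \tfrac12 \beta_a \gamma_a$, where $\beta_a$ is the fraction of group-$a$ observed negatives in $(b, \tau_a]$ and $\gamma_a$ the fraction of group-$a$ observed positives in $(b, \tau_a]$. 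The ``no missed positives'' branch is then immediate: $\tau_a \leq b$ empties the middle band, forcing $\beta_a = \gamma_a = 0$, hence $\text{AUC}_a = 1$ for both groups and $\Delta\text{AUC} = 0$; the same region argument gives $\text{xAUC}_{a, a'} = 1$, so $\Delta\text{xAUC} = 0$.

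For the substantive branch ($\tau_1 \leq \tau_0$ with missed positives), I would write $\text{xAUC}_{0,1}$ and $\text{xAUC}_{1,0}$ using the same three-region decomposition, now with the two groups' truncated Gaussians. Under $\mu_0 \leq \mu_1$ and $\tau_1 \leq \tau_0$, the ordering of the thresholds guarantees that every ``high versus low/middle'' comparison is ranked correctly, so the only residual error is concentrated in the overlap of the two middle bands $(b, \tau_1]$ and $(b, \tau_0]$. I would then regard all four ranking quantities as functions of the free threshold $\tau_1 \in (-\infty, \tau_0]$, show they vary continuously (and monotonically as $\tau_1$ sweeps the band), and invoke an intermediate-value argument---using the common-variance Gaussian structure to match the residual mid-band mass---to produce a suitable $\tau_1$ at which $\beta_0 \gamma_0 = \beta_1 \gamma_1$ and the two xAUC residuals coincide simultaneously. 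The flatness of these functions where a group's band is empty is what would yield the ``broad range'' of suitable $\tau_1$.

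The main obstacle I anticipate is the cross-group term: unlike the within-group case, the mid-band comparison in the xAUC pits a truncated Gaussian of group $0$ against a differently-truncated Gaussian of group $1$, so it is not simply $\tfrac12$, and I must show that a \emph{single} $\tau_1$ drives both $\Delta\text{AUC}$ and $\Delta\text{xAUC}$ to zero rather than each separately. This is precisely where the equal-variance assumption and the sign constraints $\mu_0 \leq \mu_1$, $\tau_1 \leq \tau_0$ do the work, by making the two residuals comparable functions of the shared band geometry. A secondary subtlety is justifying the global ranking-by-$z$ reduction: since the model does not observe $a$ and $z$ only partially reveals it, the learned score in the middle band is a mixture over groups, so I would argue that this mixing stays monotone in $z$ and never reorders a true positive below a true negative, keeping the reduction valid for the metrics of interest.
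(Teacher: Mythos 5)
There is a genuine gap, and it begins with the evaluation target. The theorem's $\Delta \text{AUC}$ and $\Delta \text{xAUC}$ are ranking gaps measured against the \emph{true} labels $y$: the paper's argument is that a model trained on $\tilde{y}$ nevertheless converges to a ranker that is Bayes-optimal for $y$, and under perfect separation that ranker attains $\text{AUC}_a = \text{xAUC}_{a,a'} = 1$ for every group (Lemma~\ref{lemma:no_gap}), so both gaps vanish. You instead compute AUC and xAUC against the observed labels $\tilde{y}$. Your closed form $\text{AUC}_a = 1 - \frac{1}{2}\beta_a\gamma_a$ is a correct computation of that noisy-label quantity, but it is the wrong quantity: it carries irreducible mid-band error, so its gap does \emph{not} vanish for generic $\tau_1 \leq \tau_0$, which is why you are forced into an intermediate-value argument to locate special values of $\tau_1$ where $\beta_0\gamma_0 = \beta_1\gamma_1$ and the two xAUC residuals happen to coincide. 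At best that proves existence of knife-edge thresholds at which noisy-label gaps cancel, which is a different (and weaker) statement than the theorem: the claim is convergence to zero true-label gaps for \emph{all} $\tau_1 \leq \tau_0$ within a broad one-sided range (in the paper's proof, ``suitable $\tau_1$'' means $\tau_1$ below a fixed offset of $(\mu_0+\mu_1)/2$ ensuring $f_1(\tau_1) \leq 1-c$, not a matching condition).

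The second problem is that the step you set aside as a ``secondary subtlety''---that the group \emph{mixture} stays monotone in $z$---is actually the central technical content, and it is precisely where the assumptions $\mu_1 \geq \mu_0$, $\tau_1 \leq \tau_0$, and equal variances are consumed. The paper proves that the flip probability $f_1(z) = P(\tilde{Y}=0 \mid Y=1, Z=z)$ is non-increasing: it equals $1-c$ below $\tau_1$, zero above $\tau_0$, and on $[\tau_1, \tau_0)$ it is a group-posterior term expressible as a sigmoid whose derivative has the sign of $\mu_0 - \mu_1 \leq 0$. This establishes boundary-consistent-noise admissibility, which feeds into Theorem~\ref{thrm:bcn_menon} (the regret-transfer result of \cite{menon2018learning}) to conclude that a model optimized for ranking on $\tilde{y}$ has vanishing ranking regret on $y$; Lemma~\ref{lemma:no_gap} then closes the argument. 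Your proposal has no analogue of this consistency/regret-transfer step, so even if the monotonicity were supplied, the proposal never connects ``model trained on $\tilde{y}$'' to ``ranking performance on $y$,'' which is what the theorem asserts. Your opening observations (the step-function structure of $P(\tilde{y}=1 \mid z, a)$ and the fact that every true positive has observed-positive probability at least $c$ while every true negative has probability $0$) are correct and could seed a proof along the paper's lines---they imply the Bayes-optimal predictor of $\tilde{y}$ separates $y$ perfectly---but the route you actually take abandons them in favor of the noisy-label bookkeeping.
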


This result depends on $\tau_0, \tau_1$ directly instead of the absolute difference in testing rates. 
First, when undertesting does not result in missed positives, disparate censorship and undertesting may not lead to performance gaps.
Furthermore, when patients in the low-risk group $a=0$ undertested with respect to $a=1$ (or no undertesting is present), the performance gap converges to zero.
This is a special case of the ``boundary-consistent noise model"~\citep{menon2018learning}.
As applied to our setting, this noise model requires that the probability of a missed positive (testing) only decreases (increases) as the risk of condition $y$ increases.

\subsubsection{Difference in conditional distribution only}
\label{subsec:conditional_diff}

In Setting 3, the probability of condition $y$ differs across patient groups given features $\x$. We show that, for the case of linear group-wise decision boundaries and censorship boundaries, disparate censorship does not result in a performance gap if the decision boundary for each group is parallel to the censorship boundary (\textit{i.e.,} the two differ by a scalar offset).%

\begin{theorem}[Zero performance gap under different conditionals.]
\label{thrm:conditional}
Suppose that the causal graph in Figure~\ref{fig:causal_dag} is a correctly-specified structural model, and $A$ causally affects $Y$. Assume that $Y$ is perfectly separable in $(\x, a)$. For each $a$: suppose that censorship/testing decisions for group $a$ are expressible as $t = \mathbbm{1}[\bm{\theta}^\top \x + \beta > 0 \lor p = 1]$ for some $\bm{\theta} \in \R^d, \beta \in \R$, $p \sim Bernoulli(c)$, and all $\x \in \mathcal{X}$. Furthermore, assume that $s_a: \mathcal{X} \times \{0, 1\} \to \R$ has functional form $\bm{\theta}_a^\top \x + b_a$ for each group $a$, where $y = \mathbbm{1}[s_a(\x) > 0]$ for $\bm{\theta}_a \in \R^d, b_a \in \R$. If there exists $\delta \in \R, \delta > 0$ such that $\bm{\theta}_a = \delta \bm{\theta}$, the $\Delta \text{AUC}, \Delta \text{xAUC}$ of a model with features $(\x, a)$ converges to 0.
\end{theorem}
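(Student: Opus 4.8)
The plan is to collapse the problem onto a single effective one-dimensional coordinate determined by the common censorship direction $\bm{\theta}$. Since $\bm{\theta}_a = \delta\bm{\theta}$ with $\delta > 0$, the true-label rule $y = \mathbbm{1}[\bm{\theta}_a^\top \x + b_a > 0]$ can be rewritten as $y = \mathbbm{1}[u > \gamma_a]$ with $u \triangleq \bm{\theta}^\top \x$ and $\gamma_a \triangleq -b_a/\delta$, while the testing rule becomes $t = \mathbbm{1}[u > -\beta \lor p = 1]$. Thus both the decision boundary and the censorship boundary are level sets of the \emph{same} scalar $u$, differing only by scalar offsets---this is the precise content of the ``parallel'' assumption, and it is the step where $\delta > 0$ is essential, since it preserves the orientation of $u$ relative to the clean score $s_a$. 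I would state this reduction first, because every subsequent step is phrased in terms of $u$.

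Next I would compute the observed-label regression function. Using accurate tests, $\tilde{y} = 1$ iff the patient is both tested and truly positive; given $(\x, a)$ the event $y = 1$ is deterministic while $t$ is independent through $p$, so $\eta(\x, a) \triangleq P(\tilde{Y} = 1 \mid \x, a) = \mathbbm{1}[u > \gamma_a]\, P(t = 1 \mid u)$, where $P(t=1\mid u) = 1$ for $u > -\beta$ and $P(t=1 \mid u) = c$ otherwise. Two features of $\eta$ drive the result: it is a non-decreasing step function of $u$ (equivalently of $s_a$, since $\delta > 0$), and it equals $0$ exactly on the true negatives $\{u \le \gamma_a\}$ while being at least $c > 0$ on the true positives $\{u > \gamma_a\}$. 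Monotonicity of $\eta$ in $s_a$ is exactly the boundary-consistent noise condition, so---as in Theorem~\ref{thrm:marginal} via~\citep{menon2018learning}---a model trained on $\tilde{y}$ converges to a ranking that agrees with $s_a$ within each group; since $y$ is perfectly separable by $s_a$, this yields $\text{AUC}_0 = \text{AUC}_1 = 1$ and hence $\Delta\text{AUC} \to 0$.

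For the cross-group term I would argue perfect separation of the whole population. Because the model has access to $a$, in the limit its per-group score is a monotone transform of $\eta(\cdot, a)$; the floor value $0$ is attained precisely on the true negatives of each group, and every true positive receives a strictly larger value, in both groups simultaneously. Consequently a true positive from group $0$ outranks a true negative from group $1$ (and symmetrically), giving $\text{xAUC}_{0,1} = \text{xAUC}_{1,0} = 1$ and $\Delta\text{xAUC} \to 0$; equivalently, overall separation forces every sub-AUC to $1$ as noted in Section~\ref{subsec:metrics}.

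I expect the cross-group (xAUC) step to be the main obstacle, since within-group perfect ranking does not by itself imply cross-group perfect ranking: I must ensure the learned scores are comparable across groups, i.e., that the true negatives of both groups collapse to the common floor while positives remain strictly above it. This is where the \emph{shared} direction $\bm{\theta}$, not merely per-group parallelism, matters---it keeps the observed-label signal one-dimensional in $u$ for both groups, so the calibrated/AUC-optimal scorer does not reorder positives and negatives across the group boundary. A secondary point to verify is that the learned decision direction aligns with $\bm{\theta}$ rather than with a mixture of $\bm{\theta}$ and $\bm{\theta}_a$, which is exactly what fails when the boundaries are not parallel and which would otherwise reintroduce a performance gap.
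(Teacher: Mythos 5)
Your proof is correct, and its within-group half is essentially the paper's own argument: your observation that the noisy regression function $\tilde\eta(\x,a)=P(\tilde Y=1\mid \x,a)$ is a non-decreasing step function of $s_a$ is exactly the paper's verification of BCN-admissibility (the paper phrases it through the flip-probability function $f_1$, shown non-increasing in $s_a$ via the $L,R$ case analysis; with $f_0\equiv 0$ and binary clean labels the two formulations are equivalent), and both proofs then invoke Theorem~\ref{thrm:bcn_menon}. Where you genuinely diverge is the cross-group step. The paper obtains $\Delta\text{xAUC}\to 0$ by carrying the regret transfer to the clean pooled distribution and then applying Lemma~\ref{lemma:no_gap}, i.e., perfect separation forces every sub-AUC of the clean Bayes-optimal ranking to be $1$. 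You instead argue directly on the pooled noisy regression function: $\tilde\eta$ equals $0$ exactly on true negatives and is at least $c>0$ on true positives, in both groups, so any scorer that is one \emph{global} monotone transform of $\tilde\eta$ already ranks every positive above every negative across groups. That route is cleaner on precisely the point the paper glosses over---per-group ranking consistency alone does not yield cross-group comparability---and you were right to single that out as the crux.

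One correction to your commentary, though. The floor-and-gap argument you use for xAUC does not rely on the parallel-boundaries assumption at all, nor on $\bm{\theta}$ being shared across groups (note that the theorem's hypothesis is stated per group, and the paper's proof treats an arbitrary group with its own censorship direction); it needs only accurate tests and $c>0$. What rescues cross-group comparability is not the shared one-dimensional coordinate $u$ but the fact that the learner targets the \emph{single} pooled function $\tilde\eta$ (or one global monotone transform of it) rather than separate per-group transforms. Parallelism is genuinely used only where you first used it: making $\tilde\eta$ monotone in the natural clean score $s_a$ so that the BCN machinery of Theorem~\ref{thrm:bcn_menon} applies, exactly as in Theorem~\ref{thrm:marginal}. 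Indeed, pushed to its logical end, your floor/gap observation would establish the asymptotic conclusion with no parallelism whatsoever; the assumption earns its keep in the regret-transfer constants and under restricted hypothesis classes and finite data, which is where the paper's simulations exhibit gaps for non-parallel boundaries.
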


This result states that a model using both $\x$ \emph{and} $a$ as features may achieve a zero performance gap if, within each group, the corresponding decision and censorship boundaries are ``parallel" with one other.
We refer to this as the ``parallel boundaries" assumption.
For intuition, we reason about each group separately. 
We proceed as if the censorship boundary lies ``above" the decision boundary, such that some positives are censored, or else there are no missed positives and the result is immediate. A formal proof can be found in Appendix~\ref{appdx:theory}.

The ``parallel boundaries" assumption is a special case in which undertesting leading to disproportionate missed positives in one group does not affect ranking. 
This is because any censorship threshold parallel to the decision boundary preserves relative ordering of patient risk within each group, because higher-risk positives are always less likely to be missed. 
Thus, if the ``parallel boundaries" assumption holds, the resulting pattern of missed positives does not affect ranking.
We can generalize this argument to non-linear decision boundaries if the decision and testing boundary parameters are expressible in an appropriate reproducing kernel Hilbert space.
However, the ``parallel boundaries" assumption is restrictive. We later explore violations of this assumption in our simulation study.

\vspace{12pt}

Our theoretical results show that disparate censorship and undertesting do not always lead to performance gaps. If there are no differences across groups in the marginal or conditional distribution of covariates (Setting 1), no gap is expected, even if undertesting is present. When the marginal distribution of covariates differs by group (Setting 2), zero gap is possible if the lower-risk group is undertested (\textit{i.e.}, for $\mu_1 \geq \mu_0$, we must have $\tau_0 \geq \tau_1$). Lastly, when the conditional distribution of covariates differs by group (Setting 3), zero gap is possible if within each group, the censorship and true condition boundaries are ``parallel," differing only in an offset term.
However, these conditions are restrictive.
In the next section, we empirically demonstrate how performance gaps emerge across distributional settings when differences in the marginal or conditional distributions across groups emerge.

\section{Empirical Analysis of Disparate Censorship \& Undertesting}

We empirically investigate the impacts of disparate censorship via a simulation study. In this section, we describe the data generation processes and ML modeling details. Then, we present our findings on ranking performance gaps under disparate censorship. In summary, when undertesting leads to missed positives, performance gaps arise when the higher-risk group is undertested (Setting 2) or if testing standards vs. the condition decision boundary increasingly violate the ``parallel boundaries" assumption (Setting 3).  %

\subsection{Simulation data generating process}

We simulate a setting in which zero performance gap is theoretically possible when all patients are tested (\emph{i.e.}, perfect separation). %
Following the theoretical setup, we consider the case of binary $a \in \{0, 1\}$ (without loss of generality). We generate an equal number of ``patients" with $a = 0$ and $a = 1$, and simulate 10 covariates $\x$ for each patient by randomly sampling a multivariate Gaussian. We generate true labels $y$ and testing decisions $t$ by applying a non-linear decision boundary $s$ to covariates $\x$ (see Figure~\ref{fig:example} for a 2D example), where $s$ may depend on $\x$ and $a$. Concretely, $y=1$ if $s(\x) \geq 5$, and $y=0$ otherwise. Similarly, we use one parameter $\tau_a$ per group to determine testing decisions: $t=1$ if $s(\x) \geq \tau_a$, and $t=0$ otherwise with probability $1-c$ for some small $c > 0$.\footnote{See Footnote~\ref{foot:c}; $c$ ensures that this problem is learnable.}
Note that values of $\tau_a < 5$ have no effect on censorship, since all $\x$ such that $s(\x) < 5$ are negative by definition and cannot be censored. 
We then generate observed labels $\tilde{y}$ using $y$ and $t$, flipping $y$ from 1 to 0 if $t = 0$. 
Full simulation details are provided in Appendix~\ref{appdx:sim}.  %

By design, this problem is perfectly separable, so an overall AUC of 1 is feasible. Thus, when the conditions of Theorem~\ref{thrm:marginal} or Theorem~\ref{thrm:conditional} are met, in Settings 2 and 3, respectively, $\Delta \text{AUC}, \Delta \text{xAUC}$ converge to zero in expectation.

\begin{figure}[t]
    \centering
    \includegraphics[width=\linewidth]{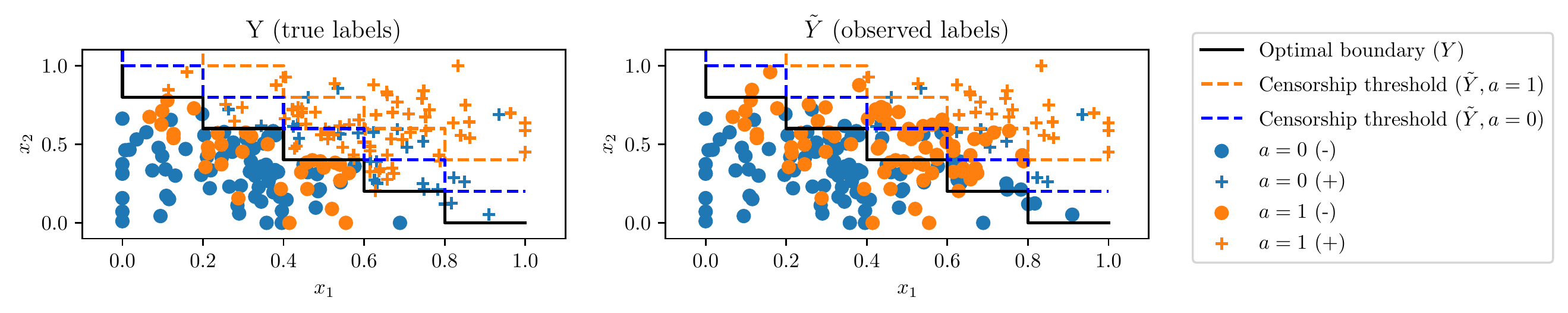}
    \caption{Example of disparate censorship in 2D with groups $a = 0$ (in blue) and $a = 1$ (in orange), and decision boundary in black. All blue positives ($a=0$, \textbf{\textcolor{blue}{+}}) below the blue dashed line are censored; likewise for orange positives ($a=1$, \textbf{\textcolor{orange}{+}}).}
    \label{fig:example}
\end{figure}

\paragraph{Simulating testing disparities.} Following the theory, we represent decision thresholds for testing each group as $\tau_0, \tau_1$ to induce undertesting. Intuitively, $\tau_0, \tau_1$ represent different clinical standards for testing. Under the threshold model, in Setting 2, the level of undertesting (as in Definition~\ref{def:undertesting}) in group $A=1$ relative to group $A=0$ is $(1-c)(\tau_1 - \tau_0)$.
In Setting 3, the level of undertesting depends on both $\tau_1-\tau_0$ and the similarity between $s_0, s_1$, so a general form for undertesting does not exist.

\paragraph{Simulating distributional differences.} To simulate $P_0(\x) \distneq P_1(\x)$ (Setting 2), we generate covariates $\x$ from multivariate Gaussians with different means, but identical covariance matrices. For $P_0(y \mid \x) \distneq P_1(y \mid \x)$ (Setting 3), we rotate the decision boundary by $\phi$ degrees, where $\phi \in [0, 360)$.

\subsection{Model Setup for Evaluating Impact of Disparate Censorship}

Throughout our experiments, we consider two probabilistic models. The first is trained on simulated true condition labels $y$, which serves as an upper bound on performance. The second is trained on observed condition labels $\tilde{y}$. This model represents the realistic setting in which test results are used as labels for model training. %
All models are evaluated with respect to the simulated condition labels $y$. 
Since our focus is on model performance gaps rather than the impact of model specification, we describe modeling details in Appendix~\ref{appdx:models}.

\section{Experiments \& Results}
\label{sec:results}

We examine the impacts of disparate censorship and undertesting under certain distributional differences between groups. We investigate:
\begin{itemize}
    \item How do disparate censorship and undertesting impact model performance gaps when there are differences in the marginal distribution only? (Setting 2, Section~\ref{subsec:marginal})
    \item How do disparate censorship and undertesting impact model performance gaps when there are differences in the conditional distribution only? (Setting 3, Section~\ref{subsec:conditional})
\end{itemize}

Setting 1 (no marginal or conditional distributional difference) is a special case of Setting 2 and 3 where the levels of marginal or conditional distributional difference are both zero.

\subsection{Undertesting may lead to performance gaps when marginal distributions of covariates differ}
\label{subsec:marginal}

In this subsection, we assess the impact of disparate censorship on model performance gaps across groups when the marginal distributions differ across groups. In line with our theoretical results, we find that performance gaps between groups arise when the high-risk group is increasingly undertested.%

\begin{figure}[t]
    \centering
    \includegraphics[width=0.85\linewidth]{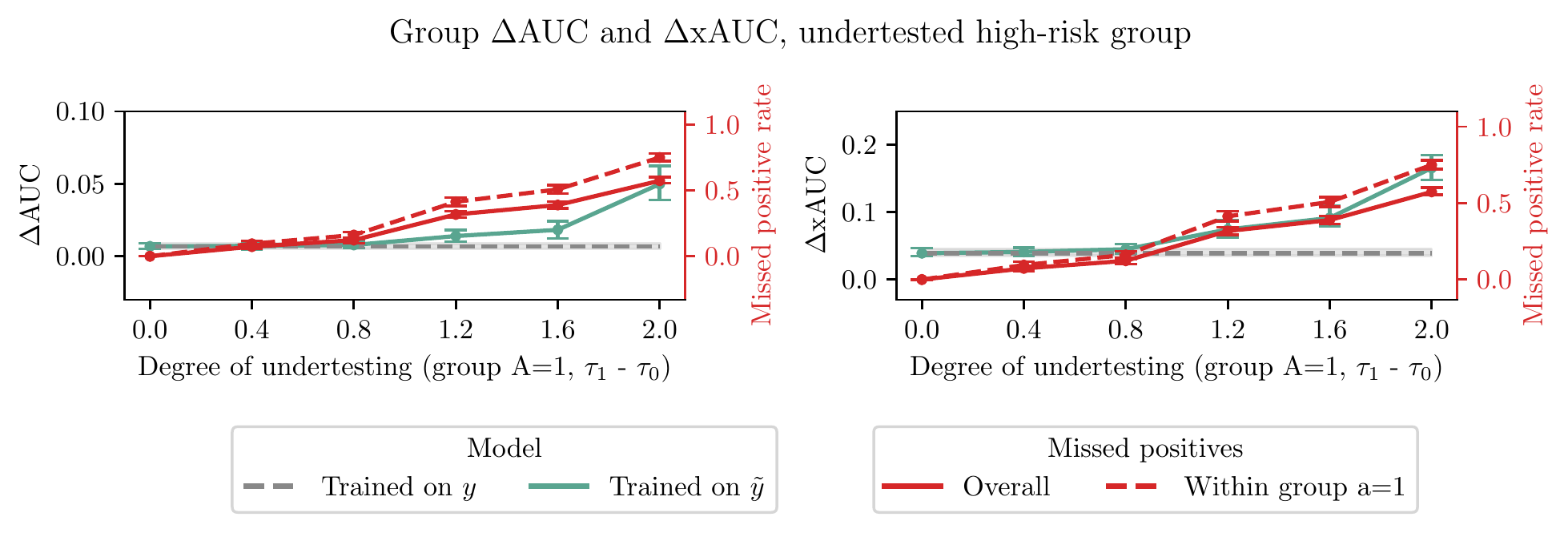}
    \caption{$\Delta \text{AUC}$ (left) and $\Delta \text{xAUC}$ (right) with 95\% empirical CIs for the model trained on $y$ (gray) and the model trained on $\tilde{y}$ (green).  As the degree of undertesting experienced by the high-risk group $a=1$ ($\tau_1 - \tau_0$) increases (dashed red), with a sharply increasing amount of overall missed positives (solid red), $\Delta \text{AUC}, \Delta \text{xAUC}$ rises.} %
    \label{fig:highrisk_undertested}
\end{figure}

\paragraph{Undertesting the high-risk patient group results in large model performance gaps.} %
In this experiment, we define group $a = 1$ to be the ``high-risk" group. Since censorship occurs at values of $\tau_0, \tau_1$ greater than or equal to 5, we set $\tau_0 = 5$ and then vary the amount of undertesting for group $a = 1$ by choosing $\tau_1 \in \{5, 5.4, 5.8, 6.2, 6.6, 7\}$. The degree of undertesting experienced by the high-risk group $a=1$ corresponds to $\tau_1 - \tau_0$.
In this setting, patients in group $a=0$ are fully tested (no missed positives), while patients in group $a=1$ are increasingly undertested, leading to more missed positives in group $a=1$.

At $\tau_0 = 5, \tau_1 = 5$, there is no undertesting; furthermore, there are no missed positives in either group, so the performance gap is near zero, and matches the performance of a model trained using true labels $y$. 
In practice, convergence to $\Delta\text{AUC}, \Delta \text{xAUC} = 0$ is data intensive in high dimensions due to the prevalence of sparsely-sampled regions near the decision boundary (\emph{i.e.}, the curse of dimensionality). Thus, a small performance gap \textit{independent of the degree of undertesting} is expected, as demonstrated by the constant performance gap for the model trained on $y$.
However, as $\tau_1$ increases, both the rate of missed positives in group $a=1$ and overall missed positive rate rise sharply. %
Consistent with Theorem~\ref{thrm:marginal}, $\Delta \text{AUC}, \Delta \text{xAUC}$ increase when the high-risk group is undertested such that missed positives result (Figure~\ref{fig:highrisk_undertested}).%
We also examine increasing $\tau_0$ from 5, inducing missed positives in both groups; the correlation between the performance gap and the value of $\tau_1 - \tau_0$ still holds (Appendix~\ref{appdx:all_results}).

\begin{figure}[t]
    \centering
    \includegraphics[width=0.85\linewidth]{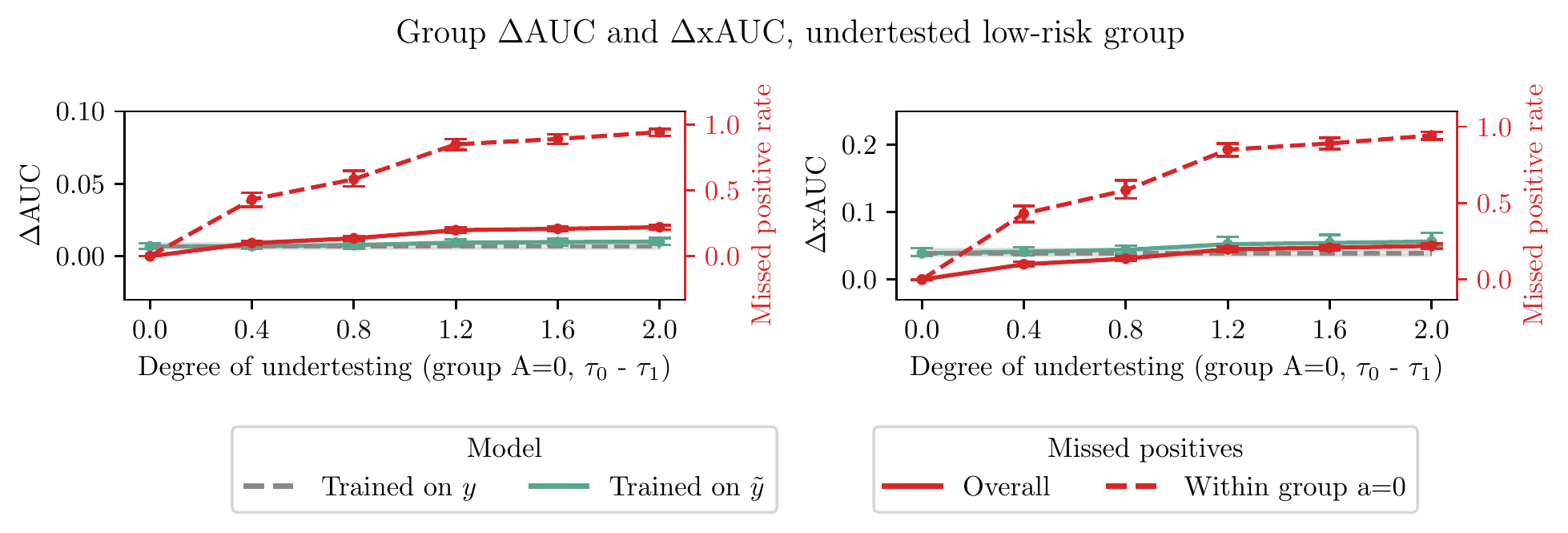}
    \caption{
    $\Delta \text{AUC}$ (left) and $\Delta \text{xAUC}$ (right) with 95\% empirical CIs for the model trained on $y$ (gray) and the model trained on $\tilde{y}$ (green). Even as the degree of undertesting experienced by the low-risk group $a=0$ ($\tau_0- \tau_1$) increases, such that the missed positive rate approaches 1 (dashed red), all metrics stay near oracle performance while the overall missed positive rate plateaus (red).}%
    \label{fig:lowrisk_undertested}
\end{figure}

\paragraph{Undertesting the low-risk group results in small performance gaps.} Reversing the direction of undertesting changes the impact of disparate censorship. We simulate undertesting in group $a = 0$ by setting $\tau_1 = 5$, and selecting $\tau_0 \in \{5, 5.4, 5.8, 6.2, 6.6, 7\}$. 
The level of undertesting experienced by group $a=0$ corresponds to $\tau_0 - \tau_1$ (reversed with respect to the previous experiment).
In this setting, group $a=0$ is increasingly undertested, resulting in more missed positives, while group $a=1$ is fully tested (no missed positives). %
As $\mu_1 < \mu_0$, and $\tau_1 \leq \tau_0$, Theorem~\ref{thrm:marginal} suggests that the performance gap converges to 0, regardless of the level of disparate censorship---even as the missed positive rate grows. 

In contrast to the previous experiment, the number of overall positives in the low-risk group is lower relative to that of the high-risk group, limiting the overall missed positive rate.
Thus, even when the low-risk group is severely undertested (high missed positive rate), training with labels $\tilde{y}$ versus $y$ does not significantly affect $\Delta \text{AUC}$ and $\Delta \text{xAUC}$.
Indeed, Figure~\ref{fig:lowrisk_undertested} shows that the AUC gap is at most 0.01, while the xAUC ranges from 0.04 to 0.06.

\vspace{12pt}
In summary, when the marginal distributions of risk vary by group, the harm (in terms of $\Delta \text{AUC}, \Delta \text{xAUC}$) from undertesting the high-risk group is greater than the harm from undertesting the low-risk group.
Intuitively, this is because the high-risk group comprises the majority of positive patients, such that undertesting them yields more missed positives.
On the other hand, the lower number of positive patients in the low-risk group limits the potential for undertesting to cause missed positives, yielding a smaller performance gap.
This result highlights a need to understand whether testing disparities disproportionately impact higher- or lower-risk patient groups.
We also examined increasing the distributional distance between groups (with $\tau_0, \tau_1$ constant); while increasing distribution distance also widens observed performance gaps, overall trends are similar (Appendix~\ref{appdx:all_results}).

\begin{figure}[t]
    \centering
    \includegraphics[width=0.75\linewidth]{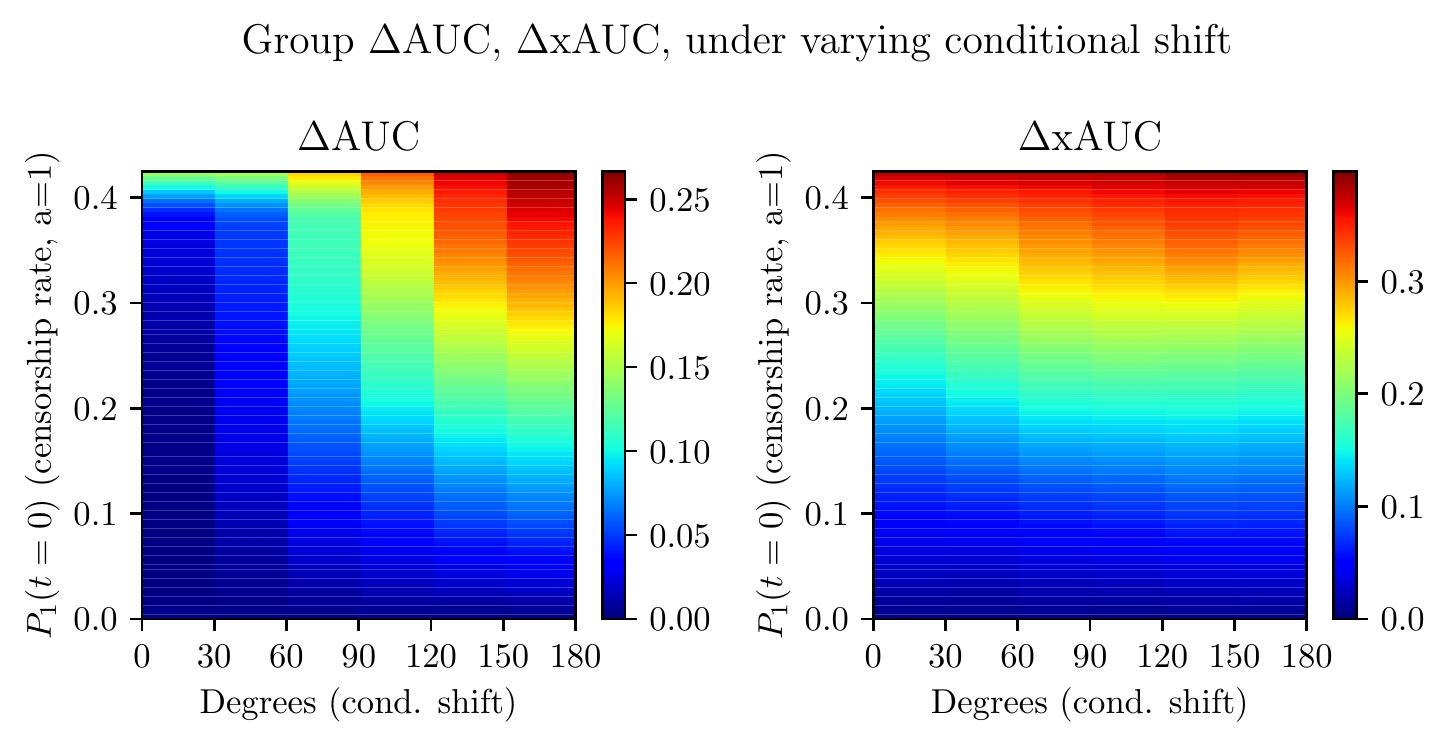}
    \caption{Heatmap showing median $\Delta \text{AUC}$ (left) and $\Delta \text{xAUC}$ (right) at varying levels of conditional shift ($\phi$; $x$-axis) and censorship rate in $a=1$ ($P_1(t=0)$, $y$-axis); 4 dimensions rotated. Regions with smaller performance gap are in \textcolor{jet_min}{dark blue}, while larger  gaps are in  \textcolor{jet_max}{dark red}. Performance gaps widen as conditional shift or disparate censorship intensify.}
    \label{fig:cond_shift}
\end{figure}
\subsection{Differences in conditional risk distribution also lead to performance gaps under clinician bias}
\label{subsec:conditional}

To simulate Setting 3, we vary the level of rotation $\phi$ from $\{0, 30, 60, 90, 120, 150, 180\}$ applied to $d'$ dimensions of the decision boundary (see Table~\ref{tab:settings} for details), and report performance gaps at $\tau_0 = 5$ and varying $\tau_1 \in \{5, 5.4, 5.8, 6.2, 6.6, 7\}$ to induce testing disparities. %

We show results for $d' = 4$, but trends are similar for other $d' \in \{2, 4, 6, 8, 10\}$ (Appendix~\ref{appdx:all_results}). %
We plot median $\Delta \text{AUC}, \Delta \text{xAUC}$ in terms of testing disparity $P_0(t=1) - P_1(t=1)$ (equal to $P_1(t=0)$, since $\tau_0 = 5$), and the level of conditional shift $\phi$ as a heatmap. As $d' > 0, \phi \neq 0$ violates Theorem~\ref{thrm:conditional} assumptions, performance gaps could emerge.

Consistent with Theorem~\ref{thrm:conditional}, increasing $\phi$ leads to larger $\Delta \text{AUC}, \Delta \text{xAUC}$ (Figure~\ref{fig:cond_shift}).
As the decision boundary rotates further, more true positives at varying levels of risk are rotated beneath the censorship boundary---increasing the number of missed positives, and $\Delta \text{AUC}, \Delta \text{xAUC}$ increase. 
Thus, even with no marginal distributional differences between groups, standard ML model training using $\tilde{y}$ may result in significant disparities in model performance when conditional distributional differences are present. This result highlights the importance of recognizing disparities in conditional risk distributions, \emph{i.e.}, when the mechanism of condition $y$ varies by group.

\begin{figure}
    \centering
    \includegraphics[width=\linewidth]{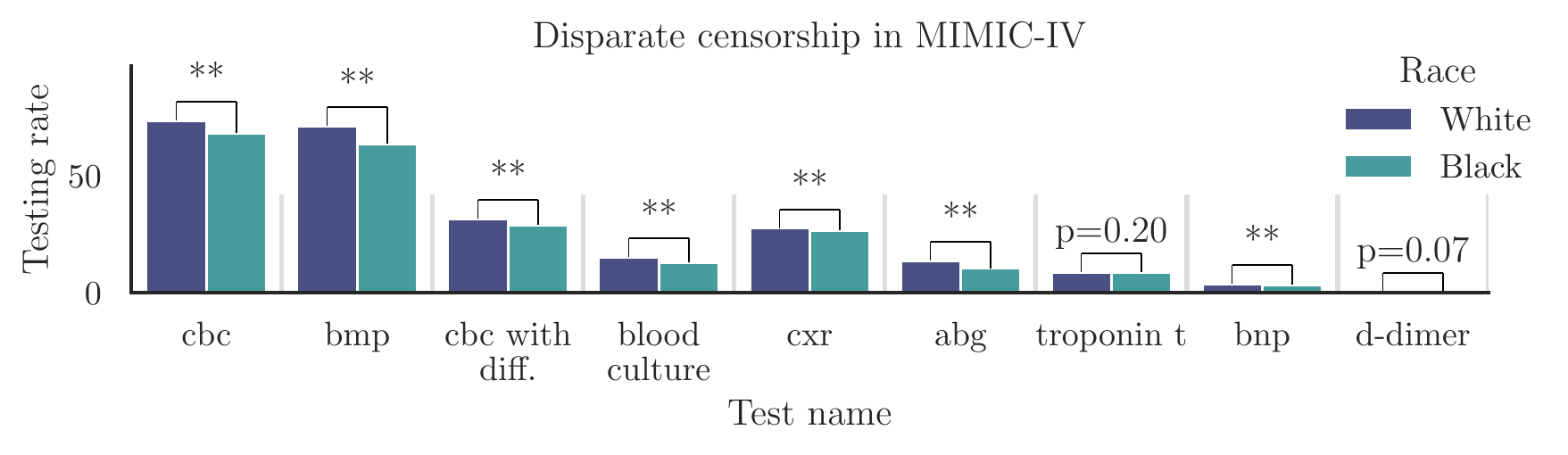}
    \caption{Disparate censorship in terms of the testing rate disparity ($P(T=1 \mid \text{admission for White patient}) - P(T=0 \mid \text{admission for Black patient})$) in common laboratory tests in MIMIC-IV. Testing rates plotted by race (White = \textcolor{mimic_w}{indigo}, Black = \textcolor{mimic_b}{teal}). ``**" denotes a statistically significant difference ($\alpha = 1.1 \times 10^{-3}$ post-Bonferroni); $p$-value noted otherwise.}
    \label{fig:mimic_dc}
\end{figure}

\section{Practical Concerns \& Guidance for Addressing Disparate Censorship}

Our empirical results demonstrate that, in some settings, disparate censorship may lead to performance gaps. Here, we show the extent to which disparate censorship occurs in common laboratory/diagnostic tests in MIMIC-IV, a popular dataset used in ML for healthcare, and suggest ways to address disparate censorship. 

\paragraph{Disparate censorship in the MIMIC-IV dataset.} We validate the existence of disparate censorship in laboratory/diagnostic tests in the MIMIC-IV dataset~\citep{johnson2020mimic} (version 1.0), a widely-used clinical dataset consisting of electronic health record data from hospital admissions to emergency/intensive care units at the Beth Israel Deaconess Medical Center (BIDMC). We limit our analysis to ``White" and ``Black/African-American" (category names from MIMIC-IV) patients, since they constitute the two most prevalent racial groups in the dataset. This yields a sample of 337630 admissions corresponding to White patients and 80293 admissions corresponding to ``Black/African-American" patients (referred to here as ``Black" patients). 

We test for disparate censorship via two-sample $z$-tests (1\% significance threshold with Bonferroni correction; $\alpha=1.1 \times 10^{-3}$) in standard laboratory/diagnostic tests such as complete blood counts (CBC, with and without differential), base metabolic panels (BMP), blood cultures, chest X-ray orders (CXR), arterial blood gas tests (ABG), troponin T tests, brain natriuretic peptide (BNP) tests, and d-dimer tests.
These tests are chosen since they may be used to help diagnose certain conditions (\textit{e.g.}, anemia or infection from CBCs, kidney injury from BMPs) or feature directly in clinical definitions (\textit{e.g.}, blood culture orders and sepsis), which may impact label definitions for training downstream ML models. 
Figure~\ref{fig:mimic_dc} shows that statistically significant disparate censorship occurs in CBCs (with and without diff.), BMPs, blood cultures, CXRs, ABGs, and BNP tests.

Importantly, Black patients are significantly less likely to be tested in instances of disparate censorship. Even though these results do not definitively prove undertesting, the consistently lower testing rates in Black patients raise concerns about whether ML models trained on such data could encode such testing disparities, leading to the negative impacts we highlight in our study. This is particularly concerning due to the wide usage of MIMIC-IV in clinical ML. More detailed results can be found in Appendix~\ref{appdx:all_results}.

\paragraph{Addressing disparate censorship and undertesting.}  Recall that, if the marginal risk distribution differs (Condition 1, Figure~\ref{fig:tree}) and the high-risk group is undertested leading to missed positives (Condition 2, Figure~\ref{fig:tree}), performance gaps may emerge. If the conditional risk distribution differs instead (Condition 3, Figure~\ref{fig:tree}) and the decision and censorship boundaries are non-parallel (Condition 4, Figure~\ref{fig:tree}), model performance gaps may also emerge.
Identifying settings in which disparate censorship and undertesting can have adverse effects could inform interventions in health policy, clinical care delivery, or even computational solutions. Here, we suggest potential methods for detecting and mitigating disparate censorship and undertesting.

Via our causal model (Figure~\ref{fig:causal_dag}), in the presence of disparate censorship (\textit{i.e.}, $A \rightarrow T$), removing the dependence between $A$ and $X$ in Setting 2 (\emph{i.e.}, $P_0(\x) \distneq P_1(\x)$) and/or the dependence between $A$ and $Y$ in Setting 3 (\emph{i.e.}, $P_0(y \mid \x) \distneq P_1(y \mid \x)$) would transform instances of Settings 2/3 into instances of Setting 1, making the patient groups $A$ indistinguishable. 
These approaches could potentially mitigate performance gaps, since a model trained on such data would behave identically across groups. 
Identifying whether differences in the marginal and conditional risk distributions (Settings 2 and 3, respectively) can provide further information for model design decisions.

To check whether $P_0(\x) \disteq P_1(\x)$ (Condition 1 of Figure~\ref{fig:tree}/Setting 2), standard hypothesis tests for distributional equality such as Kolmogorov-Smirnov \citep{massey1951kolmogorov} can be used for each covariate. Non-parametric distributional distances (\textit{e.g.}, 2-Wasserstein, MMD~\citep{gretton2012kernel}) could help quantify to what extent $P_0(\x) \disteq P_1(\x)$ holds. 
For distinguishing the high-risk group in particular, beyond hypothesis testing, incorporating domain knowledge on health disparities in the relevant covariates may be necessary.
To determine if the high-risk group is undertested (and therefore, performance gaps may arise), one may estimate $\tau_a$ (Condition 2, Figure~\ref{fig:tree}) via threshold tests~\citep{simoiu2017problem, pierson2018fast, patel2021learning}.
For mitigation, the condition $P_0(\x) \disteq P_1(\x)$ is reminiscent of covariate shift in domain adaptation. Hence, standard approaches (\textit{e.g.}, reweighing methods~\citep{jiang2008literature}, optimal transport~\citep{courty2017joint}, feature augmentation~\citep{daume2009frustratingly}) may apply.

Checking whether $P_0(y \mid \x) \disteq P_1(y \mid \x)$ holds (Condition 3 of Figure~\ref{fig:tree}/Setting 3) is less straightforward, as we observe $y$ with potentially varying noise rates in each group.
Metrics of conditional distributional similarity (\textit{i.e.}, Bregman correntropy~\citep{yu2020measuring}) have been proposed, but determining the presence of undertesting remains an open question for Setting 3, as the covariates most predictive of $y$ may vary by group. %
Positive-unlabeled (PU) learning is a promising direction here, but instance-dependent/group-wise PU learning remains underexplored. We highlight~\cite{gong2021instance} as a potential approach.
Lastly, resolving the ``parallel boundaries" assumption (Condition 4, Figure~\ref{fig:tree}) requires modeling both boundaries, so the same limitations for checking Condition 3 apply.
Using domain knowledge may be most practical for verifying Conditions 3 and 4.

\section{Discussion} 
\label{sec:discussion}

\begin{figure}[t]
    \centering
    \includegraphics[width=0.85\linewidth]{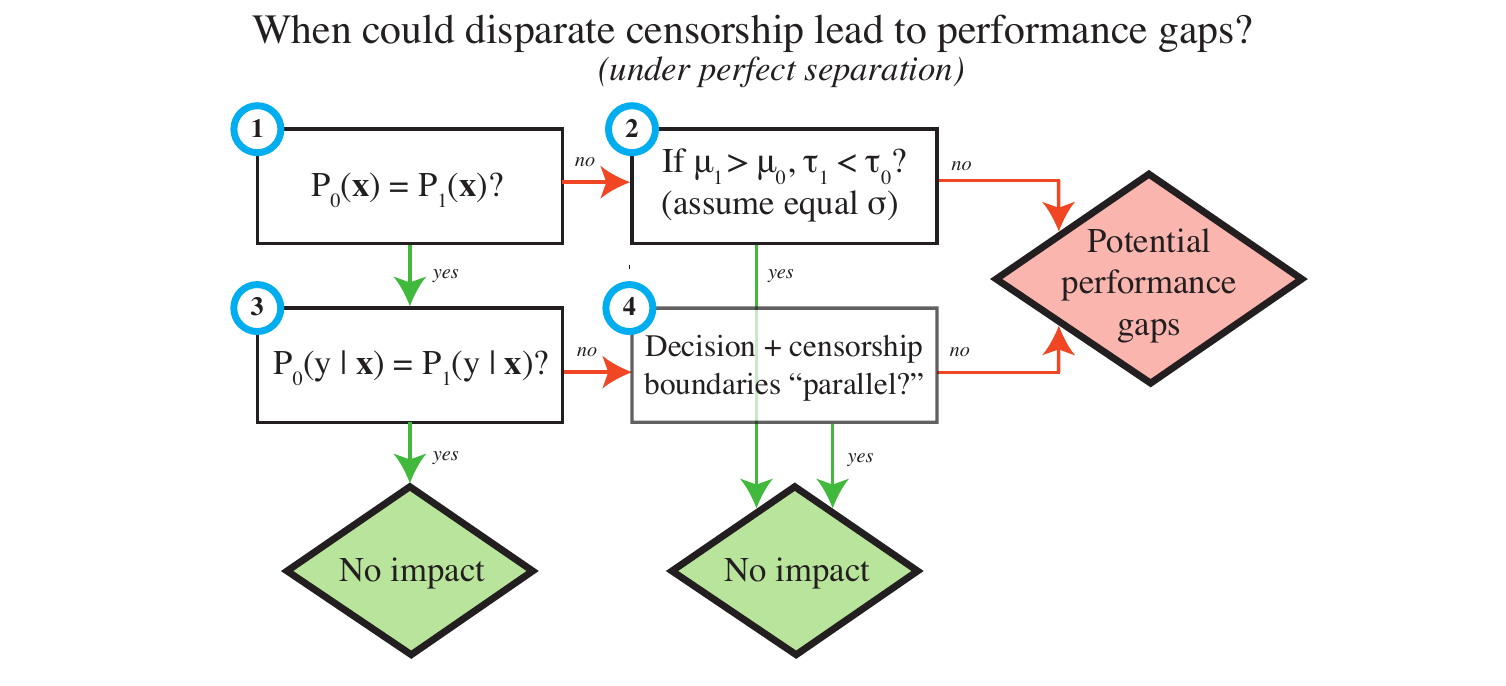}
    \caption{Decision tree for identifying when disparate censorship may negatively impact model performance gaps (assuming perfect separation). In summary, if either the marginal risk distribution differs (1) and the high-risk group is undertested (2), or the conditional risk distribution differs (3) and the decision and censorship boundaries are non-parallel (4), performance gaps may emerge ($\Delta \text{AUC}, \Delta \text{xAUC} > 0$).}
    \label{fig:tree}
\end{figure}
We investigate the impact of disparate censorship and undertesting across different settings.
Recall that disparate censorship can lead to harmful \textit{undertesting} if differences in testing lead to disproportionately higher rates of missed positives in certain groups. While missed positives has immediate negative clinical implications, it can also lead to additional harm if these tests are used in defining labels for training ML models.
We theoretically show that when the marginal or conditional risk distributions differ, performance gaps may arise if certain patient subgroups are undertested. These findings are further supported through a simulation study. 
Our findings raise awareness to undertesting and its impact on training labels---which can potentially drive model performance gaps between groups.

Although systematic biases in any part of the ML pipeline can negatively impact model performance gaps, we focus on biases in clinical data caused by disparities in diagnostic/laboratory test orders.
Specifically, we identify an understudied type of label bias caused by disparities in the delivery of or access to clinical care. Previous work on label bias in ML for healthcare studied label misspecification:~\cite{obermeyer2019dissecting, pierson2021algorithmic} find that using outcomes such as healthcare costs or certain risk scales as proxies for patient need may disproportionately harm Black patients. 
They suggest that training models on redefined outcomes that align better with patient needs may mitigate the negative impacts. 

However, such solutions may be less applicable to \emph{disparate censorship}. In our setting, the labels correspond to the outcome of interest, but are (partially) observed at different rates for each group.
We found such instances of disparate censorship with respect to race in common diagnostic/laboratory tests in the MIMIC-IV dataset, with significantly lower test rates for Black patients as compared to White patients in multiple tests.
This finding is particularly concerning due to the wide usage of MIMIC-IV in clinical ML.

Addressing the adverse effects of disparate censorship and undertesting requires thoroughly understanding one's problem setting. Disparate censorship and undertesting raise few concerns in terms of model performance gaps when marginal and conditional risk distributions are identical across patient groups, but this is unlikely to hold in practice. While we discuss methods for identifying when disparate censorship and undertesting may result in performance gaps, there remain gaps in algorithmic approaches for measuring conditional distributional differences. Addressing algorithmic gaps may require domain knowledge from the clinical literature and health disparities research and/or data with complete observations (\textit{i.e.}, no missed positives).

Once disparate censorship and undertesting are identified, noisy-label and censored ML methods represent a possible direction for mitigating negative impacts~\citep{jiang2020identifying, cheng2020learning, berthon2021confidence,wang2021fair}.
Beyond ML methods, modeling techniques in the presence of censored/missing data include inverse probability weighting-based methods in the epidemiology/causal inference literature~\citep{hernan2004structural} or the Heckman correction in the quantitative social sciences~\citep{heckman1976common}.

In addition to computational solutions, the harmful effects of disparate censorship and undertesting can be minimized by mitigating clinician biases and reducing disparities in covariates across groups.
Mitigating clinician biases targets undertesting, towards ensuring that patients with equal risk are equally likely to be tested. Reducing disparities in covariates addresses Setting 2, mitigating model performance gaps that arise due to undertesting the high-risk group.  While some covariate disparities may be due to physiological differences (\textit{i.e.}, pediatric vs. adult patients), others emerge due to disparities in healthcare access or structural inequality, disadvantaging various groups such as Black and Latinx patients~\citep{brondolo2009race}, immigrant communities~\citep{misra2021structural}, and Black gender minorities intersectionally~\citep{lett2020intersectionality}. These studies further suggest that minimizing covariate differences requires addressing underlying structural inequities in healthcare access and delivery.

The main limitations of our work lie in our theoretical assumptions. 
First, our simulation design implicitly treats testing as diagnosis.
While testing is often a prerequisite to diagnosis, diagnostic decisions may be updated over time, which standard ML development may not capture.
For Settings 2 and 3, our theoretical results predicting convergence to zero performance gap require clinician testing and condition status thresholds to be expressed via a hard threshold with the same functional form.
For Setting 2 in particular, we apply normality assumptions on risk score distributions and patient covariates.
These assumptions are necessary to make the theory tractable. In many cases, we know that clinicians order tests/interventions on the basis of symptoms or clinical suspicion (as studied in~\cite{dolan2005colorectal, schulman1999effect}).
Thus, while we expect our assumptions to partially hold, it is unclear to what extent they hold in practice.

Nevertheless, our work provides a foundation for a deeper exploration of the impacts of disparate censorship and undertesting.
We suggest and identify a plausible mechanism of dataset bias, and demonstrate conditions under which model performance gaps ($\Delta \text{AUC}, \Delta \text{xAUC}$) may arise.
Our findings motivate diligence in understanding and mitigating health disparities, and raise warnings about the responsible deployment of ML systems in healthcare. 
Ultimately, we believe that a combination of computational tools alongside social policy and public health interventions will provide a path to recognize and address the impacts of disparate censorship.

\acks{We thank (in alphabetical order) Donna Tjandra, Erkin \"{O}tle{\c s}, Fahad Kamran, Jiaxuan Wang, Jung Min Lee, Meera Krishnamoorthy, Sarah Jabbour, Shengpu Tang, and Stephanie Shepard for helpful discussions. We also thank Harry Rubin-Falcone, Maggie Makar, Michael Ito for their feedback in preparing the paper presentation. Special thanks to Sarah Jabbour for assisting with chest X-ray data processing. This work was supported by the Agency of Health Research and Quality (grant no. 5R01HS027431-03) and the National Heart Lung and Blood Institute (grant no. 1R01HL158626-01). The views and conclusions in this document are those of the authors and should not be interpreted as necessarily representing the official policies, either expressed or implied, of the Agency of Health Research and Quality or the National Heart Lung and Blood Institute.}

\bibliography{references}

\clearpage

\appendix

\renewcommand{\thesection}{\Alph{section}}

\section{Proofs}
\label{appdx:theory}

\subsection{Preliminaries}

First, we restate the definition of a boundary-consistent noise (BCN) model from~\cite{menon2018learning}, which is useful for our proofs.

\begin{definition}[Boundary-consistent noise (BCN) model.]
\label{def:bcn}
Define class probability function $\eta(\x) = P(Y = 1 \mid \x)$. Consider a data generating process in which $(\x, \tilde{y})$ is generated by drawing an instance $(\x, Y)$ and flipping $Y$ with instance- and label-dependent probability $\rho_Y(\x)$. Suppose that label flip-probability functions $\rho_{y}$ can be written in the form $\rho_{y} = f_y \circ s$, where $f_y: \R \to [0, 1]$ for $y \in \{0, 1\}$ and $s: \mathcal{X} \to \R$, and $\rho_{0}(\x) + \rho_{1}(\x) < 1$ for all $\x$. Then, a noise model $(f_0, f_1, s, \eta)$ is BCN-admissible if the following conditions are satisfied:
\begin{itemize}
    \item \textbf{Feasible ranking:} $s$ is order-preserving in $\x$ for $\eta$; that is, for any $(\x, \x') \in \mathcal{X}$, then $\eta(\x) < \eta(\x')$ implies $s(\x) < s(\x')$.
    \item \textbf{Piecewise-monotonicity:} $f_{0}$ and $f_1$ are non-decreasing where $\eta \leq \frac{1}{2}$, and non-increasing otherwise.
    \item \textbf{Flip-probability monotonicity:} $f_1(z) - f_0(z)$ is non-increasing in $z$.
\end{itemize}
\end{definition}

Note that, for our setting, since $f_0$ is constant, we can combine the two monotonicity constraints: it is sufficient that $f_1$ is non-increasing in $z$. Furthermore, we restate an important property of the BCN model shown in~\cite{menon2018learning} with some notation adapted to our setting:

\begin{theorem}[Theorem 2 of~\cite{menon2018learning}.]
\label{thrm:bcn_menon}
Pick any distribution $D$. Let $\bar{D}$ be a corrupted distribution. Suppose that for any $\x, \x' \in \mathcal{X}$, $\eta(\x) < \eta(\x')$ implies $\tilde{\eta}(\x) < \tilde{\eta}(\x')$, where $\tilde{\eta}$ is the analogue of $\eta$ on noisy labels $\tilde{y}$ (\textit{i.e.} $\tilde{\eta}(\x) - P(\tilde{Y} = 1 \mid X = \x)$), and there exists a constant $C$ such that $|\eta(\x) < \eta(\x')| \leq C \cdot |\tilde{\eta}(\x) < \tilde{\eta}(\x')|$. Then, for any scorer $s$, 

\begin{equation}
    \text{reg}_{rank}(s; D) \leq C \cdot \frac{\bar{\pi}(1 - \bar{\pi})}{\pi (1 - \pi)} \cdot \text{reg}_{rank}(s; \bar{D}) 
\end{equation}

where $\text{reg}_{rank}$ is the excess ranking risk of a scorer $s$, and $\pi = P(Y = 1), \bar{\pi} = P(\tilde{Y} = 1)$. In particular, if $\bar{D} = BCN(D, f_0,f_1, \eta)$
where $(f_0,f_1,s,\eta)$ are BCN-admissible, then $C=(1- 2 \cdot \rho_{max})-1$, where $\rho_{max} = \underset{\x \in \mathcal{X}}{\sup}P(\tilde{Y} = 0 \mid Y = 1, X = \x)$ is sufficient.
\end{theorem}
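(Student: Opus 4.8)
The plan is to prove the bound through the standard pairwise (bipartite) representation of ranking regret, and then transport it across the clean/corrupted pair using the two hypotheses: order preservation and the pointwise control $|\eta(\x) - \eta(\x')| \leq C\,|\tilde\eta(\x) - \tilde\eta(\x')|$ (I read the ``$<$'' in the statement as a ``$-$'' typo, and $(1-2\rho_{max})-1$ as $(1-2\rho_{max})^{-1}$). First I would recall that for a bipartite problem with marginal $P(\x)$, class-probability $\eta$, and prior $\pi = \mathbb{E}_{\x}[\eta(\x)]$, the excess ranking (AUC) risk of a scorer $s$ admits the representation
\begin{equation}
    \text{reg}_{rank}(s; D) = \frac{1}{\pi(1-\pi)}\,\mathbb{E}_{\x,\x'}\bigl[\,|\eta(\x) - \eta(\x')|\cdot \mathbbm{1}[s \text{ misorders } (\x,\x') \text{ relative to } \eta]\,\bigr],
\end{equation}
where $\x, \x'$ are drawn independently from $P$ and ``misorders'' means $s$ ranks the pair opposite to the $\eta$-induced order (ties split). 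This holds because the Bayes-optimal ranker orders by $\eta$, and the per-pair contribution to the AUC gap between $s$ and $\eta$ is exactly $|\eta(\x)(1-\eta(\x')) - \eta(\x')(1-\eta(\x))| = |\eta(\x) - \eta(\x')|$ on misordered pairs. Any symmetrization constant here is identical for $D$ and $\bar D$, so it will cancel in the ratio.

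Next I would apply the same representation to $\bar D$ with its class-probability $\tilde\eta$ and prior $\bar\pi$. The order-preservation hypothesis $\eta(\x) < \eta(\x') \iff \tilde\eta(\x) < \tilde\eta(\x')$ is the crux: it forces the ``misorder'' event against $\eta$ to coincide with the ``misorder'' event against $\tilde\eta$, so the two regrets integrate the \emph{same} indicator. I would then bound the integrand of $\text{reg}_{rank}(s;D)$ pointwise by $C\,|\tilde\eta(\x)-\tilde\eta(\x')|$, pull out $C$, and re-normalize: multiplying and dividing by $\bar\pi(1-\bar\pi)$ turns the remaining integral into $\text{reg}_{rank}(s;\bar D)$ and produces the factor $\bar\pi(1-\bar\pi)/(\pi(1-\pi))$, which is exactly the claimed inequality.

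For the ``in particular'' statement I would specialize to $\bar D = BCN(D, f_0, f_1, \eta)$. Writing the corrupted class-probability as $\tilde\eta(\x) = \eta(\x)(1-\rho_1(\x)) + (1-\eta(\x))\rho_0(\x)$ with $\rho_y = f_y \circ s$, I would (i) use the feasible-ranking and the two monotonicity conditions of BCN-admissibility (Definition~\ref{def:bcn}) to verify that $\tilde\eta$ preserves the $\eta$-order, and (ii) derive the contraction $|\tilde\eta(\x)-\tilde\eta(\x')| \geq (1-2\rho_{max})\,|\eta(\x)-\eta(\x')|$, which yields $C = (1-2\rho_{max})^{-1}$ with $\rho_{max} = \sup_{\x} P(\tilde Y = 0 \mid Y=1, X=\x)$.

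The main obstacle is exactly step (i)--(ii) in the BCN specialization, not the transport argument. Establishing order preservation requires showing that the monotone, boundary-aligned flip functions cannot reverse any $\eta$-ordered pair: since $f_1 - f_0$ is non-increasing in the score and $\eta$ is order-consistent with $s$, the noise never pushes a higher-risk instance below a lower-risk one. Turning this monotonicity into the explicit contraction constant (and hence the clean formula for $C$) is the delicate calculation, since the flip rates $\rho_0, \rho_1$ vary with $\x$; the pairwise regret decomposition and the re-normalization are routine once the representation is in hand.
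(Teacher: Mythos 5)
You could not have known this, but the paper contains no proof of this statement: it is imported from \cite{menon2018learning} (with notation adapted), and the sentence immediately following it reads ``We defer to \cite{menon2018learning} for the proof.'' So there is no in-paper argument to compare yours against; the only meaningful comparison is with the cited source, and your plan follows the same standard route (pairwise regret representation, order preservation, pointwise contraction, renormalization). Your reading of the three typos is also the intended one ($<$ as $-$ in the pointwise hypothesis, $=$ in the definition of $\tilde{\eta}$, and $(1-2\rho_{max})^{-1}$ for the constant). The transport argument itself is sound: label corruption leaves the marginal of $\x$ unchanged, so both regrets integrate over the same pairs; the clean integrand vanishes when $\eta(\x)=\eta(\x')$, and on pairs with $\eta(\x)<\eta(\x')$ the stated one-way implication (you do not need the biconditional you invoke) makes the two misordering events coincide, after which the pointwise bound and the factor $\bar{\pi}(1-\bar{\pi})/(\pi(1-\pi))$ give the displayed inequality.

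The gap, judged as a self-contained proof, is the ``in particular'' clause --- which is precisely the part this paper relies on, since its proofs of Theorems~\ref{thrm:marginal} and~\ref{thrm:conditional} consist of verifying BCN admissibility and then invoking the explicit constant. Your steps (i) and (ii) --- that BCN admissibility implies $\tilde{\eta}$ preserves the $\eta$-order, and that $|\tilde{\eta}(\x)-\tilde{\eta}(\x')| \geq (1-2\rho_{max})\,|\eta(\x)-\eta(\x')|$ --- are asserted rather than derived, and you flag them yourself as ``the delicate calculation.'' That calculation is the entire quantitative content of the clause: writing $\tilde{\eta} = \eta\,(1-\rho_0-\rho_1) + \rho_0$ with instance-dependent $\rho_y = f_y \circ s$, the difference $\tilde{\eta}(\x')-\tilde{\eta}(\x)$ picks up cross terms $\rho_0(\x')-\rho_0(\x)$ and $\eta$-weighted differences of $\rho_1$, and it is exactly the feasible-ranking, piecewise-monotonicity, and flip-probability-monotonicity conditions of Definition~\ref{def:bcn} that keep these terms from destroying the lower bound. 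Until that is carried out, the formula $C=(1-2\rho_{max})^{-1}$ is unproven; your proposal is a correct architecture with the keystone left unset, which is defensible here only because the paper itself treats the result as a black box.
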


We defer to~\cite{menon2018learning} for the proof. This states that optimizing a model for AUC on noisy labels in a BCN model is consistent with optimizing a model for AUC on clean labels: both converge to the same Bayes-optimal scorer.

Lastly, we prove a Lemma relating the Bayes-optimal overall AUC to the Bayes-optimal within-group AUC and xAUC under perfect separability.

\begin{lemma}[No ranking performance gap under perfect separation.]
\label{lemma:no_gap}
Let $\eta(\x) = P(Y = 1 \mid X = \x)$. If $Y$ is perfectly separable in $\x$; that is, there exists some $s: \mathcal{X} \to \{0, 1\}$ such that $s(\x) = y$ for all $\x \in \mathcal{X}$, and $P(X \mid Y = y, A = a) > 0$ for any $y, a \in \{0, 1\}$, then $\Delta \text{AUC}, \Delta \text{xAUC} = 0$. 
\end{lemma}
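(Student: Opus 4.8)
The plan is to exhibit that the perfect separator $s$ guaranteed by the hypothesis simultaneously attains the maximal value of every within-group and cross-group ranking quantity, so that all four terms appearing in $\Delta\text{AUC}$ and $\Delta\text{xAUC}$ equal $1$ and both gaps vanish.

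First I would record what perfect separability buys us. Since there exists $s:\mathcal{X}\to\{0,1\}$ with $s(\x)=y$ for all $\x$, every positive instance ($y=1$) receives score $1$ and every negative instance ($y=0$) receives score $0$; equivalently $\eta(\x)\in\{0,1\}$. In particular no positive and negative are ever assigned equal scores, so the strict-inequality events underlying the AUC and xAUC involve no cross-class ties. Next I would unwind the definitions directly: writing $\text{AUC}_a = P\big(s(\x_i) > s(\x_j)\mid y_i=1,\,y_j=0,\,a_i=a_j=a\big)$ and $\text{xAUC}_{a,a'} = P\big(s(\x_i) > s(\x_j)\mid y_i=1,\,a_i=a,\,y_j=0,\,a_j=a'\big)$, I would substitute $s(\x_i)=1$ and $s(\x_j)=0$ for the conditioned positive/negative pair. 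This gives $1>0$ with probability one, so $\text{AUC}_0=\text{AUC}_1=1$ and $\text{xAUC}_{0,1}=\text{xAUC}_{1,0}=1$, whence $\Delta\text{AUC}=|1-1|=0$ and $\Delta\text{xAUC}=|1-1|=0$. (Equivalently, one can argue via the decomposition in Figure~\ref{fig:auc}: the overall AUC is a convex combination of the within-group AUCs and xAUCs, so an overall AUC of $1$ forces every summand to its maximum of $1$.)

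The one point requiring care---and the main obstacle---is well-definedness of these conditional quantities. Each conditions on an event such as $\{y=1,\,a=a\}$ or $\{y=0,\,a=a'\}$, and the substitution argument is vacuous unless these events carry positive probability. This is exactly what the positivity hypothesis $P(X\mid Y=y,A=a)>0$ for all $y,a\in\{0,1\}$ supplies: every label-group combination is realized with positive mass, so each group contains both positives and negatives and every within-group and cross-group pairing is populated. With that, all four ranking probabilities are genuinely defined and equal to $1$, and the two performance gaps are zero, completing the argument.

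I do not expect any nontrivial computation here; the content of the lemma is conceptual (perfect separation trivializes all pairwise comparisons), and the only subtlety to flag explicitly is that the positivity assumption is what makes the conditional AUC/xAUC terms meaningful rather than degenerate.
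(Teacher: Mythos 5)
Your proof is correct and takes essentially the same approach as the paper's: both use the perfect separator (equivalently the Bayes-optimal ranker $\eta$, which coincides with $s$ under perfect separation) to conclude that every positive--negative pair is correctly ranked within any group-restricted subpopulation, so all four within-group AUC and xAUC terms equal $1$ and both gaps vanish. Your explicit observation that the positivity condition $P(X \mid Y = y, A = a) > 0$ is what makes the conditional AUC/xAUC quantities well-defined is left implicit in the paper's proof, but this is a point of care, not a different argument.
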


\begin{proof}
Under perfect separation, for the Bayes-optimal ranker $\eta$, we know that $P(\eta(\x) < \eta(\x')) = 1$ where $\x \in \{x \mid x \in S, Y = 1\}$, $\x' \in \{x \mid x \in S', Y = 0\}$ for any $S, S' \subseteq \mathcal{X}$. 

First, we show that $\Delta \text{AUC} = 0$. Choose $S = \{x \mid x \in \mathcal{X}, Y = 1, A = a\}$ and $S' = \}x \mid x \in \mathcal{X}, Y = 0, A = a\}$ for arbitrary $a \in \{0, 1\}$. Then the Bayes-optimal within-group AUC is 1 for each $a$, so $\Delta \text{AUC} = 0$ as required.
Now, we show that $\Delta \text{xAUC} = 0$. Choose $S = \{x \mid x \in \mathcal{X}, Y = 1, A = 1\}$ and $S' = \{x \mid x \in \mathcal{X}, Y = 0, A = 0\}$ (without loss of generality in assignment of $A$). Then the Bayes-optimal xAUC is 1 for each assignment of $A$, so $\Delta \text{xAUC} = 0$.
We have shown that $\Delta \text{AUC}, \Delta \text{xAUC} = 0$, concluding the proof.
\end{proof}

\subsection{Proof of Theorem~\ref{thrm:marginal}}

\begin{proof}
First, we show feasible ranking holds. As $y \in \{0, 1\}$, $\eta(\x) < \eta(\x')$ implies that $s(\x) \leq b$ and $s(\x') > b$, from which $s(\x) < s(\x')$ follows as required.

Now, define $\eta(\x) = P(Y = 1 \mid X = \x)$, and $f_1(z) = P(\tilde{Y} = 0 \mid Y = 1, Z = z)$ for $z = s(\x)$, and $p_a$ as $P(A=0)$. We show that $f_1$ is monotonically non-increasing on the interval where $\eta > 1/2$ if either no positives are tested or $\tau_0 \geq \tau_1$. If no positives are tested, then the theorem is vacuously true. Otherwise, we first define $b = \underset{\x \in \mathcal{X}: y=1}{\inf}(s(\x))$. In other words, $b$ is the threshold that perfectly separates negative from positives examples. Then, choose some $\tau_1 > b$, and some $\tau_0 \geq \tau_1$. %
We can write $f_1$, which is the probability that a positive label is flipped, as

\begin{equation}
    f_1(z) = \begin{cases}
            1 - c & z < \tau_1 \\
            \frac{p_a \cdot \exp\left(-\frac{(z - \mu_0)^2}{2\sigma^2}\right)}{p_a  \cdot \exp\left(-\frac{(z - \mu_0)^2}{2\sigma^2}\right) + (1-p_a) \cdot (1-c) \cdot \exp\left(-\frac{(z - \mu_1)^2}{2\sigma^2}\right)} & \tau_1 \leq z < \tau_0 \\
            0 & z \geq \tau_0
        \end{cases}.
\end{equation}

This is clearly non-increasing on $(-\infty, \tau_1)$, $[\tau_0, \infty)$, so it suffices to show that $f_1(z)$ is non-increasing on $[\tau_1, \tau_0)]$, and that $f_1(\tau_1) \leq 1 - c, f_1(\tau_0) \geq 0$.  Note that the portion of $f_1$ for $\tau_0 \leq z < \tau_1$ is simply $Pr[A = 1] / (Pr[A = 0] + Pr[A = 1])$, since group $A = 1$ is censored with probability $1-c$ and group $A = 0$ is not censored in that region. Denote this function as $a(z)$.\footnote{If $\tau_1 \leq b$ instead, then only the portion of $f_1$ where $f_1(z) = 0$ is observed, which is trivially non-increasing.} We rewrite the portion of $f_1$ for $\tau_1 \leq z < \tau_0$ as a sigmoid function:
\begin{align}
    a(z) &= \frac{p_a \cdot \exp\left(-\frac{(z - \mu_0)^2}{2\sigma^2}\right)}{p_a  \cdot \exp\left(-\frac{(z - \mu_0)^2}{2\sigma^2}\right) + (1-p_a) \cdot (1-c) \cdot \exp\left(-\frac{(z - \mu_1)^2}{2\sigma^2}\right)} \\
    &= \frac{1}{1 + \exp\left(\log \frac{(1-p_a)(1-c)}{p_a}+\frac{(z-\mu_0)^2 - (z - \mu_1)^2}{2\sigma^2}\right)}\\
    &= \frac{1}{1 + \exp\left(\log \frac{(1-p_a)(1-c)}{p_a }+\frac{1}{2\sigma^2}\left[(2z - \mu_0 - \mu_1)(\mu_1 - \mu_0)\right]\right)}\label{eq:sigmoid_form}.
\end{align}

We further rewrite Eq.~\ref{eq:sigmoid_form} in the form $\sigma(g(z))$, where 
\begin{equation}
    g(z) = \log \frac{p_a}{(1-p_a)(1-c)} + \frac{(2z - \mu_0 - \mu_1)(\mu_0 - \mu_1)}{2\sigma^2}.
\end{equation}
Then, taking derivatives:
\begin{equation}
    \frac{d}{dz}a(z) = \sigma(g(z)) (1 - \sigma(g(z))) \frac{d}{dz} g(z) = \sigma(g(z)) (1 - \sigma(g(z))) \left((\mu_0 - 
\mu_1) / 2\sigma^2\right) \leq 0,
\end{equation}
where the final inequality follows since $\mu_1 \geq \mu_0$ by assumption. Thus, $a(z)$ is monotonically non-increasing, which is what we wanted to show. To conclude, since $g: \R \to \R$ and $\sigma: \R \to (0, 1)$, clearly $f_1(\tau_0) \geq 0$. 
Furthermore, rewriting the constraint $f_1(z) = \sigma(g(\tau_1)) \leq 1-c$ and simplifying yields
\begin{equation}
    \tau_1 \leq \log \frac{(1-c)^2 (1-p_a)}{c \cdot p_a} \cdot \frac{2\sigma^2}{\mu_0 - \mu_1} + \frac{\mu_0 + \mu_1}{2},
\end{equation}
\textit{i.e.}, some negative offset of the midpoint between the group-wise means $\mu_0, \mu_1$.
So $f_1(\tau_1) \leq 1 - c$ for such choices of $\tau_1$.
Thus, $(f_0, f_1, s, \eta)$ is BCN admissible. Applying Theorem 5 and Lemma 2 concludes the proof.
\end{proof}

\begin{remark}
Our simulation yields a risk score distribution in random variable $s_{\alpha}(\x)$. Assuming the effect of clipping with respect to range $[0, 1]$ is negligible, which is true for $\mu_a$ near 0.5, the distribution $s_{\alpha}(\x)$ is approximately univariate Gaussian, allowing the application of Theorem~\ref{thrm:marginal}. This is because the sum of independent Gaussians is Gaussian, so by the definition of $s_{\alpha}(\x)$, the distribution of scores is approximately a discretized univariate Gaussian distribution.
\end{remark}

\begin{remark}
We can apply the same argument in this proof to risk distributions beyond homoscedastic Gaussians, based on whether $Pr[A = 0] / (Pr[A = 0] + Pr[A = 1])$ is non-decreasing on $[\tau_1, \tau_0)$. Let $R_0(z), R_1(z)$ be the probability density functions of risk scores $z$ for group $a=0, a=1$, respectively. Clearly, if $R_0(z) / [R_0(z) + R_1(z)]$ is non-decreasing on $[\tau_1, \tau_0)$, that would satisfy boundary consistency. Alternately, applying quotient rule shows that $R'_0(z) \cdot R_1(z) - R_0(z) \cdot R'_1(z) > 0$ is also sufficient to violate boundary consistency.
\end{remark}

\subsection{Proof of Theorem~\ref{thrm:conditional}}

\begin{proof}
We prove that each BCN condition is satisfied. First, we show that feasible ranking holds. Choose any $\x, \x' \in \mathcal{X}$. Recall that $t = \mathbbm{1}[\bm{\theta}^\top \x + \beta > 0 \lor p=1]$ where $p \sim Bernoulli(c)$ and $y = \mathbbm{1}[s_a(\x) > 0]$, where $s_a(\x) = \bm{\theta}_a^\top \x + b_a$. Since $y \in \{0, 1\}$, $\eta(\x) < \eta(\x')$ implies that $s_a(\x) \leq 0$ and $s_a(\x') > 0$, from which $s_a(\x) < s_a(\x')$ follows, as required.

Next, we show that the piecewise-monotonicity constraint holds. To do so, we need to show that the flip probability function $f_1$ is non-increasing where $\eta(\x) \geq 1/2$, or on $s(x) = \theta_a^\top x + b_a > 0$ for all $a$. Consider an arbitrary group $a$ with corresponding $\theta_a, b_a$.
We can write $f_1$ as
\begin{equation}
    f_1(\cdot) = \begin{cases}
        c & \bm{\theta}^\top \x + \beta \leq 0 \\
        0 & \text{otherwise}
    \end{cases}.
\end{equation}
Now, suppose that there exists some $\delta \in \R, \delta > 0$ such that $\bm{\theta} = \bm{\theta}_a \delta$. Choose any $\x, \x' \in \mathcal{X}$ such that $0 < \bm{\theta}_a^\top \x + b_a < \bm{\theta}_a^\top \x' + b_a$.\footnote{Note that, by the definition of $y$ (which is $\eta(\cdot)$ in this setting), we only need to (and can only) show monotonicity for $\x$ such that $\theta_a^\top \x + b_a > 0$.} Then:
\begin{align}
   \bm{\theta}_a^\top (\x - \x') > 0 &\Longleftrightarrow \delta\bm{\theta}_a^\top (\x - \x') > 0\\ 
   &\Longleftrightarrow \bm{\theta}^\top (\x - \x') > 0 \\
   &\Longleftrightarrow \bm{\theta}^\top \x + \beta > \bm{\theta}^\top \x' + \beta.\label{eq:final_ineq}
\end{align}
Let $L = \bm{\theta}^\top \x + \beta, R = \bm{\theta}^\top \x' + \beta$. There are three possible value-pairs in the final inequality: (1) $L < 0, R < 0$, (2) $L < 0, R \geq 0$, and (3) $L \geq 0, R \geq 0$. %
We need to show that, in each setting, $f(L) \geq f(R)$. For the first case, $L < 0, R < 0$ implies that $f_1(L) = f_0(L) = c$, so $f(L) \geq f(R)$ clearly. For the second case, $L < 0, R \geq 0$ implies that $f_1(L) = c, f_0(L) = 0$, so $f(L) > f(R)$, from which $f(L) \geq f(R)$ is assured. Lastly, the third case is identical to the first as $f_1(L) = f_0(L) = 0$ if both $L, R \geq 0$. Thus, piecewise-monotonicity is satisfied.

Lastly, as $f_0$ is the zero function, flip-probability monotonicity follows for free from the preceding. As all three BCN conditions are satisfied, and the choice of $a$ was arbitrary, $(f_0, f_1, s, \eta)$ is BCN-admissible, which is what we wanted to show. Applying Theorem~\ref{thrm:bcn_menon}, Lemma~\ref{lemma:no_gap} concludes the proof.
\end{proof}

\begin{remark}
The generalization to non-linear decision boundaries follows naturally from using reproducing kernel Hilbert spaces $\phi(\x), \phi(\x') \in \mathcal{H}$ where the kernel for $\mathcal{H}$, $K: \mathcal{X} \times \mathcal{X} \to \R$ corresponds to a feature map $\phi: \mathcal{X} \to \mathcal{H}$ (\textit{i.e.} $\phi(\x)^\top \phi(\x') = K(\x, \x')$). Then, since a Hilbert space is a complete metric space (\textit{i.e.} inner product is well-defined), the proof proceeds identically.
\end{remark}

\begin{remark}
The generalization to $s$ of the form $s_a(\x) = \bm{\theta}_a^\top g(\x) +\beta$ where $g$ is element-wise monotonic non-decreasing is also straightforward. We say that a function $g: \R^d \to \R$ is element-wise monotonic non-decreasing if for any $\x \prec \x'$, $g(\x) \succeq g(\x')$, where $\prec, \succeq$ are element-wise inequality operators. Then, substituting $g(\x), g(\x')$ for $\x, \x'$ in the proof, in Eq.~\ref{eq:final_ineq}, since $g$ is element-wise monotonic non-decreasing, then $\bm{\theta}^\top g(\x) > \bm{\theta}^\top g(\x')$ implies $\bm{\theta}^\top \x > \bm{\theta}^\top \x'$, from which the proof proceeds identically.
\end{remark}

\section{Simulation Study Details}
\label{appdx:sim}

The simulation takes global parameters $\mu_a \in \R, \sigma^2 \in \R, \tau_a \in \R , c \in (0, 1]$, and functions $s_a: \mathcal{X} \to \R$. Note that parameters subscripted by $a \in \{0, 1\}$ may vary by group $a$. For each individual, the data generating process proceeds:
\begin{align}
    \x &\sim \max(0, \min(1, \mathcal{N}(\mu_a\mathbf{1}, \sigma^2\mathbf{I})))\label{eq:x}\\
    y &= \mathbbm{1}\left[ s_{a}(\x) > 5 \right] \label{eq:y}\\
    t &\sim \max\left(\mathbbm{1} \left[s_{a}(\x) > \tau_a \lor p = 1 \right] \right), \quad p \sim Bernoulli(c)  \label{eq:t}\\
    \tilde{y} &= y \cdot t \label{eq:y_obs}
\end{align}

Concretely, we generate $\x \in \R^{10}$ for each individual from a multivariate normal distribution. Each Gaussian is assigned mean $\mu_a\mathbf{1}$ based on $a$, where $\mathbf{1}$ is a 10-dimensional all-ones vector and $\mathbf{I}$ is the identity matrix of size $10 \times 10$. We clip all covariates between 0 and 1 (Statement~\ref{eq:x}). The process for generating $y$ follows directly from our theoretical setup.
For $s_a(\x)$, we use an 10-dimensional ceiling function (see Figure~\ref{fig:example} for a 2D example):
\begin{equation}
    s_{0}(\x) = s_1(\x) = \frac{1}{5} \left(\sum_{i=1}^{10} \lceil 5 x_i \rceil \right).\label{eq:down_staircase}
\end{equation}
This function discretizes each element of $\x$ ($x_i$) into 5 equally-spaced bins of size $1/5$. %
If the sum of these values exceeds 5, then $y = 1$ (Eq.~\ref{eq:y}). Note that $s_a(\cdot)$ can be interpreted as a true risk function for $y$ as a function of $\x$. Except where specified, $s_0(\cdot) = s_1(\cdot)$. %
We generate $t$ similarly to $y$ based on a threshold applied to $s_a(\cdot)$, but vary $\tau_a \in \R$ as an experimental threshold parameter to control the level of undertesting (Eq.~\ref{eq:t}).
If the risk for a patient in group $a$ lies above $\tau_a$, they are tested with probability 1; otherwise, they are tested with probability $c = 0.05$. %
Lastly, $\tilde{y} \triangleq y$ if $t = 1$ and is 0 otherwise (Eq.~\ref{eq:y_obs}). This models the fact that a test result is only observable when a test is ordered. %

\begin{table}[t]
    \centering
    \begin{tabular}{c c|c c}
    \toprule
         \textbf{Setting} & $\mu$ & $s_a(\x)$ \\
    \midrule
         1 & $\mu_0 = \mu_1 = 0.45$ & $s_0(\x) = s_1(\x) = s(\x)$ \\
         2 & $\mu_0 = 0.35, \mu_1 = 0.55$ & $s_0(\x) = s_1(\x) = s(\x)$  \\
         3 & $\mu_0 = \mu_1 = 0.45$ & $s_0(\x) = s(\x), s_1(\x) = s(Rot(\x; \phi, d', \x_0))$ \\
    \bottomrule
    \end{tabular}
    \caption{Simulation settings for each of the three distributional settings studied. Note that $\sigma^2_0 = \sigma^2_1 = 0.1$ for all settings. Scoring function $s(\x)$ is defined in Eq.~\ref{eq:down_staircase}.}
    \label{tab:settings}
\end{table}

\paragraph{Simulating Distributional Differences.} To induce differences in the marginal and conditional risk distributions for each patient subgroup, we vary the simulation settings following Table~\ref{tab:settings}. We define $Rot$ as
\begin{equation}
    Rot(\x; \phi, d', \x_0) =   \begin{tikzpicture}[decoration={brace,amplitude=10pt},baseline=(current bounding box.west)]
     \matrix (magic) [matrix of math nodes,left delimiter=(,right delimiter=)] {
      R(-\phi) \\
      \vdots & \ddots \\
      0 & \dots & R(-\phi) & 0  \\
      0 & \dots & 0 & \mathbf{I}_{10-d'}\\
     };
     \draw[decorate] (magic-1-1.north) -- (magic-3-3.east) node[above=8pt,midway,sloped] {$d'/2$ times};
   \end{tikzpicture}(\x - \x_0) + \x_0,
\end{equation}
where the function $r$ applies a $2\times 2$ rotation matrix $R(-\phi)$ about $\x_0 = 0.4 \cdot \mathbf{1}$ to any number of pairs of dimensions (rotating the decision boundary by $\phi$ about the point $\x_0$ in orthogonal 2D subspace(s)). 
Setting $\phi, d' \neq 0$ satisfies $P_0(y \mid \x) \distneq P_1(y \mid \x)$.
Note that $Rot$ breaks the parallelism between the censorship and decision boundaries.

\section{Evaluation Metrics}
\label{appdx:eval}

We provide further intuition for our usage of within-group AUC and xAUC. Recall that the AUC is the probability that a randomly chosen positive example, $x_i$, has a greater risk score, than a randomly chosen negative example, $x_j$ (\textit{i.e.}, $P(s(x_i)>s(x_j))$). The within-group AUC for group $a$, written as $\text{AUC}_a$, is thus the AUC considering only examples in group $a$ (Figure~\ref{fig:auc}, cyan). The xAUC has a similar interpretation: $\text{xAUC}_{a, a'}$ is the probability that a randomly chosen positive example from group $a$ is scored above a randomly chosen negative example from group $a'$ (Figure~\ref{fig:auc}, magenta). We defer to~\cite{kallus2019fairness} for details on xAUC.

 This yields a decomposition of the overall AUC in terms of within-group AUC and xAUC (Figure~\ref{fig:auc}). Let $p_y(a) = P(A = a\mid Y = y)$ for $a \in \{0, 1\}$, $y \in \{0, 1\}$. By the law of total probability, and probabilistic definitions of AUC and xAUC, we have: 
\begin{align}
    \text{Overall AUC} &= p_0(0) \cdot p_1(0) \cdot \text{AUC}_0 + p_0(1) \cdot p_1(1) \cdot \text{AUC}_1 \nonumber  \\
    &\quad + p_0(0) \cdot p_1(1) \cdot \text{xAUC}_{1,0} + p_0(1) \cdot p_1(0) \cdot \text{xAUC}_{0,1}.
    \label{eq:auc_decomp}
\end{align}

Hence, a perfectly separable problem guarantees that an AUC of 1 is possible. When an AUC of 1 is achieved, within-group AUC and xAUC for all groups must also be 1 for Eq.~\ref{eq:auc_decomp} to hold.

\section{Model Details}
\label{appdx:models}

We provide all settings used for model training here.%

\subsection{Model Training}

We train probabilistic kernel support vector machines (SVMs)~\citep{platt1999probabilistic}, but any non-linear model suffices.
For each setting of the simulation (\textit{i.e.}, unique combination of simulation parameters), we train two SVMs with each model using one set of labels $y, \tilde{y}$ on the same 100 realizations of the simulation with 2,000 training data points and evaluate all models on a simulated sample of 20,000 test samples. As preprocessing, we apply one-hot encoding to $\x$ after discretization. 

\subsection{Model Hyperparameters}

As the focus of this paper is on evaluation, we keep all default parameters for the SVM; that is, regularization weight $C = 1$ and $\gamma = (d \cdot Var(\text{vec}(\x)))^{-1}$ where $\x \in \R^{n \times d}$ for the radial basis function kernel, where $\text{vec}$ is the matrix vectorization operator (\textit{i.e.} \texttt{auto} setting in \texttt{scikit-learn}).

\subsection{Software}

We use \texttt{scikit-learn} for the SVM implementation, which is built on LIBSVM~\citep{chang2011libsvm}.

\section{Full Results}
\label{appdx:all_results}

For MIMIC results, we report the test names as well as the results of the hypothesis test.
For the simulation study, we report the raw AUC and xAUC values by group with empirical 95\% confidence intervals for all experiments.

\subsection{Disparate Censorship in MIMIC-IV}

\paragraph{Dataset description.} We restrict the analysis to all hospital admissions involving a White ($n=337630$) or Black/African-American patients ($n=80293$; total: $n=417923$). These categories were chosen as they represented the two most frequently-appearing racial/ethnic categories in the dataset. We selected the following set of common laboratory/diagnostic tests to investigate for disparate censorship: complete blood counts (CBC), with and without differential (CBC w/ diff.), base metabolic panels (BMP), Troponin T tests, D-dimer tests, arterial blood gas (ABG) tests, blood culture orders (for any organism), brain natriuretic peptide (BNP) tests, and chest X-ray (CXR) orders.\footnote{For CXR only, since the MIMIC-CXR data is sourced from 2011-16, we limit ourselves to hospital admissions in that timeframe, yielding 122860 White patient admissions and 25968 Black/African-American admissions (total: 148828).} To obtain results for ICU and ED admissions, we cross-referenced hospital admission identifiers for ICU and ED stays from the relevant tables to obtain the relevant subset of patients. %

CBC, CBC w/ diff., BMP, Troponin T, D-dimer, ABG, and BNP test results were directly available from the publicly available MIMIC concept SQL queries. Additionally, for CBC w/ diff., we excluded rows that did not contain any non-null values in the following columns: \texttt{"basophils\_abs"}, \texttt{"eosinophils\_abs"}, \texttt{"lymphocytes\_abs"}, \texttt{"monocytes\_abs"}, \linebreak \texttt{"neutrophils\_abs"}, \texttt{"basophils"}, \texttt{"eosinophils"}, \texttt{"lymphocytes"}, \texttt{"monocytes"}, \linebreak
\texttt{"neutrophils"}, \texttt{"atypical\_lymphocytes"}, \texttt{"bands"}, \texttt{"immature\_granulocytes"}, \linebreak \texttt{"metamyelocytes"}, and \texttt{"nrbc"}. For the BMP, we excluded rows that did not contain any non-null values in the following columns: \texttt{"bicarbonate"}, \texttt{"bun"} (blood urea nitrogen), \texttt{"calcium"}, \texttt{"chloride"}, \texttt{"creatinine"}, \texttt{"glucose"}, \texttt{"sodium"}, and \texttt{"potassium"}. For CXR, we extracted information using the publicly-available MIMIC-CXR processing code at \url{https://github.com/MIT-LCP/mimic-cxr/blob/master/dcm/create-mimic-cxr-jpg-metadata.ipynb}. 

We then extracted the testing rates ($P(T)$, \% of admissions featuring at least one instance of the relevant test order) in each patient group (White vs. Black/African-American) and applied a two-sided $z$-test for equality of proportions. The null hypothesis is that testing rates are equal between groups (\emph{i.e.}, the test in question does not exhibit disparate censorship in MIMIC-IV). Specifically, this tests the hypothesis that White and Black/African-American patients were equally likely to receive a particular lab test order at any point(s) during each admission. We use a 1\% significance threshold with Bonferroni correction (9 tests total; $\alpha= 1.1 \times 10^{-3}$). All $p$-values below $10^{-4}$ are reported as ``$<10^{-4}$."  In summary, significant disparate censorship with respect to race was identified in all tests examined except for Troponin T and d-dimer tests.

\begin{table}[]
    \centering
    \begin{tabular}{c | c c | c c}
    \toprule
        \textbf{Test name} & \textbf{$P(T)$, White} & \textbf{$P(T)$, Black} & $z$ & $p$\\
    \midrule
         \multirow{1}{*}{\textbf{CBC}} & 73.71 & 68.20 & 30.46 & $<10^{-4}$ \\
         \multirow{1}{*}{\textbf{CBC w/ diff.}} & 31.67 & 28.81 & 16.01 & $<10^{-4}$ \\
         \multirow{1}{*}{\textbf{BMP}} & 71.26 & 63.72 & 40.42 & $<10^{-4}$ \\
         \multirow{1}{*}{\textbf{Blood cultures}} & 15.20 & 13.01 & 16.36 & $<10^{-4}$\\
         \multirow{1}{*}{\textbf{CXR}} & 27.61 & 26.57 & 3.43 & $6.0 \times 10^{-4}$\\
         \multirow{1}{*}{\textbf{ABG}} & 13.75 & 10.42 & 27.10 & $<10^{-4}$\\
         \multirow{1}{*}{\textbf{Troponin T}} & 8.72 & 8.58 & 1.29 & 0.20  \\
         \multirow{1}{*}{\textbf{BNP}} & 3.82 & 3.48 & 4.74 & $<10^{-4}$\\
         \multirow{1}{*}{\textbf{D-dimer}} & 0.21 & 0.25 & -1.83 & 0.07\\
    \bottomrule
    \end{tabular}
    \caption{Disparate censorship in common laboratory/diagnostic tests in White vs. Black/African-American patients, MIMIC-IV v1.0, with testing rates by group, $z$-statistics, and $p$-values.}
    \label{tab:mimic_dc}
\end{table}

\begin{sidewaystable}[t]
    \centering
    \small
    \begin{tabular}{c|c| c c c c c c}
    \toprule
         \multicolumn{2}{c|}{} & \multicolumn{6}{c}{$\tau_1$ (noise rate, a=1)}  \\
    \midrule
         $\tau_0$ (noise rate, $a=0$)  & \textbf{Group} & \textbf{5.0} (0.0)& \textbf{5.4} (8.6) & \textbf{5.8} (14.5) & \textbf{6.2} (37.7)& \textbf{6.6} (46.2) & \textbf{7.0} (68.3) \\
    \midrule
         \multirow{3}{*}{\textbf{5.0} (0.0)} & a=0 & 98.3 (98.1, 98.5) & 98.2 (98.0, 98.4) & 98.0 (97.7, 98.3) & 96.4 (95.8, 96.8) &  95.1 (94.0, 95.7) & 85.0 (83.4, 87.0)  \\
         & a=1 & 99.0 (98.8, 99.1) & 98.9 (98.8, 99.1) &  98.8 (98.6, 99.0) & 97.7 (97.3, 98.1) & 96.8 (96.2, 97.3) & 90.0 (88.7, 91.6)\\
         & \cellcolor{gray!50} $\Delta \text{AUC}$ & \cellcolor{gray!50} 0.7 (0.5, 0.9) & \cellcolor{gray!50}  0.7 (0.5, 0.9) & \cellcolor{gray!50} 0.8 (0.6, 1.0) & \cellcolor{gray!50} 1.4 (1.0, 1.8) & \cellcolor{gray!50} 1.8 (1.3, 2.3) & \cellcolor{gray!50} 5.0 (4.0, 6.2) \\
    \midrule
         \multirow{3}{*}{\textbf{5.4} (11.8)} & a=0 & 98.2 (98.0, 98.4) & 98.4 (98.1, 98.5) & 98.3 (98.0, 98.5) & 97.3 (96.9, 97.7) & 96.6 (95.9, 97.1) & 90.7 (89.0, 92.1) \\
         & a=1 & 98.9 (98.8, 99.1) & 99.0 (98.9, 99.1) & 99.0 (98.8, 99.1) & 98.4 (98.1, 98.6) & 97.9 (97.5, 98.2) & 94.0 (92.9, 95.2) \\
         & \cellcolor{gray!50} $\Delta \text{AUC}$ & \cellcolor{gray!50} 0.7 (0.6, 0.9) & \cellcolor{gray!50} 0.7 (0.5, 0.8) & \cellcolor{gray!50} 0.7 (0.5, 0.9) & \cellcolor{gray!50} 1.0 (0.8, 1.3) &  \cellcolor{gray!50} 1.3 (1.0, 1.7) & 3.5 (2.7, 4.2) \cellcolor{gray!50}\\
    \midrule
         \multirow{3}{*}{\textbf{5.8} (16.2)} & a=0 & 98.1 (97.8, 98.3) & 98.3 (98.1, 98.5) & 98.3 (98.0, 98.5) & 97.6 (97.2, 97.9) & 97.1 (96.5, 97.6) & 92.7 (90.8, 93.9) \\
         & a=1 & 98.9 (98.7, 99.0) & 99.0 (98.8, 99.1) & 99.0 (98.8, 99.1) & 98.5 (98.3, 98.7) & 98.2 (97.8, 98.5) & 95.4 (94.1, 96.3) \\
        & \cellcolor{gray!50} $\Delta \text{AUC}$ & \cellcolor{gray!50} 0.8 (0.6, 0.9) & \cellcolor{gray!50} 0.7 (0.5, 0.9) & \cellcolor{gray!50} 0.7 (0.5, 0.8) & \cellcolor{gray!50} 0.9 (0.7, 1.2) &  \cellcolor{gray!50} 1.2 (0.8, 1.5) & \cellcolor{gray!50} 2.7 (2.1, 3.6)\\
    \midrule    
         \multirow{3}{*}{\textbf{6.2} (23.6)} & a=0 & 97.6 (97.3, 97.9) & 98.0 (97.6, 98.2) & 98.0 (97.7, 98.3) & 97.7 (97.3, 98.0) & 97.4 (96.9, 97.9) & 95.0 (93.6, 96.2) \\
         & a=1 & 98.6 (98.3, 98.8) & 98.8 (98.6, 98.9) & 98.8 (98.6, 99.0) &  98.6 (98.3, 98.8) &  98.4 (98.1, 98.7) & 96.9 (95.9, 97.7) \\
         & \cellcolor{gray!50} $\Delta \text{AUC}$ & \cellcolor{gray!50} 0.9 (0.7, 1.2) & \cellcolor{gray!50} 0.8 (0.6, 1.0) & \cellcolor{gray!50} 0.8 (0.6, 1.0) & \cellcolor{gray!50} 0.9 (0.7, 1.2) &  \cellcolor{gray!50} 1.0 (0.8, 1.3) & \cellcolor{gray!50} 1.9 (1.3, 2.5)\\
    \midrule         
         \multirow{3}{*}{\textbf{6.6} (24.6)} & a=0 & 97.5 (97.1, 97.9) & 97.9 (97.5, 98.2) & 98.0 (97.6, 98.2) & 97.7 (97.2, 98.0) & 97.4 (96.9, 97.8) & 95.1 (93.6, 96.4) \\
         & a=1 & 98.5 (98.2, 98.7) & 98.7 (98.5, 98.9) & 98.8 (98.5, 99.0) & 98.6 (98.3, 98.8) & 98.4 (98.1, 98.7) & 97.0 (96.1, 97.7) \\
         & \cellcolor{gray!50} $\Delta \text{AUC}$ & \cellcolor{gray!50} 1.0 (0.8, 1.2) & \cellcolor{gray!50} 0.8 (0.6, 1.1) & \cellcolor{gray!50} 0.8 (0.6, 1.0) & \cellcolor{gray!50} 0.9 (0.7, 1.2) &  \cellcolor{gray!50} 1.0 (0.8, 1.3) & \cellcolor{gray!50} 1.9 (1.4, 2.7)\\
    \midrule     
         \multirow{3}{*}{\textbf{7.0} (26.1)} & a=0 &  97.4 (97.0, 97.8) & 97.8 (97.4, 98.1) & 97.9 (97.5, 98.1) &  97.6 (97.1, 97.9)  & 97.3 (96.7, 97.7) &  95.2 (93.3, 96.2) \\
         & a=1 & 98.4 (98.1, 98.6) & 98.7 (98.4, 98.8) & 98.7 (98.4, 98.9) & 98.5 (98.3, 98.8) & 98.4 (98.0, 98.6) & 97.0 (95.9, 97.6)\\
         & \cellcolor{gray!50} $\Delta \text{AUC}$ & \cellcolor{gray!50} 1.0 (0.8, 1.2) & \cellcolor{gray!50} 0.9 (0.7, 1.1) & \cellcolor{gray!50} 0.8 (0.7, 1.1)  & \cellcolor{gray!50} 1.0 (0.7, 1.3) &  \cellcolor{gray!50} 1.1 (0.8, 1.3) & \cellcolor{gray!50} 1.9 (1.4, 2.6)\\
    \midrule      
    \end{tabular}
    \caption{Group-wise AUC and $\Delta \text{AUC}$ under marginal distributional distance for all values of $\tau_0, \tau_1$.}
    \label{tab:auc_setting2}
\end{sidewaystable}

\begin{sidewaystable}[t]
    \centering
    \small
    \begin{tabular}{c|c| c c c c c c}
    \toprule
         \multicolumn{2}{c|}{} & \multicolumn{6}{c}{$\tau_1$ (noise rate, $a=1$)}  \\
    \midrule
         $\tau_0$ (noise rate, $a=0$)  & \textbf{Group} & \textbf{5.0} (0.0)& \textbf{5.4} (8.6) & \textbf{5.8} (14.5) & \textbf{6.2} (37.7)& \textbf{6.6} (46.2) & \textbf{7.0} (68.3) \\
    \midrule
         \multirow{3}{*}{\textbf{5.0} (0.0)} & a=0 & 99.6 (99.5, 99.7) & 99.6 (99.5, 99.6) & 99.5 (99.4, 99.6) &  99.0 (98.8, 99.1) & 98.6 (98.3, 98.8) & 93.8 (92.8, 95.0) \\
         & a=1 & 95.7 (95.0, 96.1) & 95.5 (95.0, 95.9) & 95.0 (94.5, 95.7) &  91.5 (90.4, 92.6) & 89.5 (87.8, 90.7) &  77.0 (75.1, 79.6) \\
         & \cellcolor{gray!50} $\Delta \text{xAUC}$ & \cellcolor{gray!50} 3.9 (3.6, 4.6) & \cellcolor{gray!50} 4.1 (3.7, 4.6) & \cellcolor{gray!50} 4.5 (3.9, 5.0) & \cellcolor{gray!50} 7.5 (6.4, 8.5) & \cellcolor{gray!50}  9.2 (8.0, 10.6) & \cellcolor{gray!50} 16.6 (15.3, 18.2)\\
    \midrule
         \multirow{3}{*}{\textbf{5.4} (11.8)} & a=0 & 99.6 (99.5, 99.6) & 99.6 (99.6, 99.7) & 99.6 (99.6, 99.7) & 99.4 (99.2, 99.4) &99.1 (98.9, 99.3)  & 96.8 (96.1, 97.5) \\
         & a=1 & 95.5 (94.8, 95.9) & 95.7 (95.1, 96.1) & 95.5 (94.9, 96.1) & 93.5 (92.6, 94.4) & 92.1 (90.8, 93.1) & 83.4 (81.0, 85.7) \\
         & \cellcolor{gray!50} $\Delta \text{xAUC}$ & \cellcolor{gray!50} 4.1 (3.7, 4.7) & \cellcolor{gray!50} 3.9 (3.5, 4.5) & \cellcolor{gray!50} 4.1 (3.6, 4.6) & \cellcolor{gray!50} 5.9 (5.1, 6.7) & \cellcolor{gray!50} 7.0 (6.1, 8.2) & \cellcolor{gray!50} 13.4 (11.7, 15.2)\\
    \midrule
         \multirow{3}{*}{\textbf{5.8} (16.2)} & a=0 & 99.6 (99.5, 99.6) & 99.6 (99.6, 99.7) & 99.6 (99.5, 99.7) & 99.4 (99.3, 99.5) & 99.3 (99.1, 99.4) & 97.7 (97.0, 98.2) \\
         & a=1 & 95.1 (94.5, 95.8) & 95.5 (95.0, 96.1) & 95.5 (94.9, 96.0) & 94.0 (93.1, 94.8) & 92.9 (91.6, 93.9) & 85.9 (83.2, 87.7) \\
         & \cellcolor{gray!50} $\Delta \text{xAUC}$  & \cellcolor{gray!50} 4.4 (3.8, 5.0) & \cellcolor{gray!50} 4.1 (3.5, 4.6) & \cellcolor{gray!50} 4.1 (3.6, 4.7) & \cellcolor{gray!50} 5.4 (4.7, 6.2) & \cellcolor{gray!50} 6.4 (5.5, 7.5) & \cellcolor{gray!50} 11.8 (10.3, 13.9)\\
    \midrule    
         \multirow{3}{*}{\textbf{6.2} (23.6)} & a=0 & 99.4 (99.3, 99.5) & 99.5 (99.4, 99.6) & 99.6 (99.5, 99.6)& 99.5 (99.3, 99.6) & 99.4 (99.2, 99.5) & 98.6 (98.1, 99.0) \\
         & a=1 & 94.2 (93.1, 94.9) & 94.8 (94.0, 95.5) & 95.0 (94.2, 95.5) & 94.2 (93.3, 95.0) & 93.6 (92.3, 94.6) & 89.1 (87.0, 91.4) \\
         & \cellcolor{gray!50} $\Delta \text{xAUC}$  & \cellcolor{gray!50} 5.3 (4.6, 6.3) & \cellcolor{gray!50} 4.7 (4.0, 5.4) & \cellcolor{gray!50} 4.5 (4.1, 5.3) & \cellcolor{gray!50} 5.3 (4.5, 6.1) & \cellcolor{gray!50} 5.8 (4.9, 6.9) & \cellcolor{gray!50} 9.5 (7.6, 11.3)\\
    \midrule         
         \multirow{3}{*}{\textbf{6.6} (24.6)} & a=0 & 99.4 (99.3, 99.5) & 99.5 (99.4, 99.6) & 99.5 (99.4, 99.6) & 99.5 (99.3, 99.5)  & 99.4 (99.2, 99.5) & 98.7 (98.2, 99.0) \\
         & a=1 & 94.0 (92.8, 94.7) &  94.7 (93.8, 95.3) & 94.8 (93.9, 95.4) & 94.0 (93.2, 94.9) & 93.5 (92.2, 94.5) & 89.3 (86.8, 91.6) \\
         & \cellcolor{gray!50} $\Delta \text{xAUC}$  & \cellcolor{gray!50} 5.4 (4.7, 6.5) & \cellcolor{gray!50} 4.9 (4.2, 5.7) & \cellcolor{gray!50} 4.7 (4.2, 5.5) & \cellcolor{gray!50} 5.4 (4.6, 6.2) &  \cellcolor{gray!50} 5.9 (5.0, 6.9) & \cellcolor{gray!50} 9.3 (7.4, 11.5)\\
    \midrule     
         \multirow{3}{*}{\textbf{7.0} (26.1)} & a=0 & 99.4 (99.2, 99.5) & 99.5 (99.4, 99.5) & 99.5 (99.4, 99.6) & 99.4 (99.3, 99.5) & 99.3 (99.2, 99.5) & 98.7 (98.2, 99.0) \\
         & a=1 & 93.7 (92.5, 94.5) & 94.4 (93.5, 95.1) & 94.7 (93.7, 95.2) & 93.8 (92.9, 94.6) & 93.3 (92.0, 94.3) & 89.4 (86.4, 91.1) \\
         & \cellcolor{gray!50} $\Delta \text{xAUC}$  & \cellcolor{gray!50} 5.7 (5.0, 6.8) & \cellcolor{gray!50} 5.0 (4.4, 5.9) & \cellcolor{gray!50} 4.8 (4.4, 5.7) & \cellcolor{gray!50} 5.6 (4.8, 6.4) &  \cellcolor{gray!50} 6.1 (5.2, 7.2) & \cellcolor{gray!50} 9.3 (7.8, 11.8)\\
    \midrule      
    \end{tabular}
    \caption{xAUC metrics and $\Delta \text{xAUC}$ under marginal distributional distance for all values of $\tau_0, \tau_1$.}
    \label{tab:xauc_setting2}
\end{sidewaystable}

\subsection{Setting 2: Difference in marginal risk distributions}

We provide full results with empirical 95\% confidence intervals for $\Delta \text{AUC}$ (Table~\ref{tab:auc_setting2}) and $\Delta \text{xAUC}$ (Table~\ref{tab:xauc_setting2}) under marginal distributional differences. We report results for the full cross-product of $\tau_0, \tau_1 \in \{5, 5.4, 5.8, 6.2, 6.6, 7\}$ with their associated group noise rates.

\subsection{Setting 3: Difference in conditional risk distributions}

We provide full results with empirical 95\% confidence intervals for $\Delta \text{AUC}$ and $\Delta \text{xAUC}$ under marginal distributional differences organized by $d'$, the number of dimensions rotated for group $a=1$ individuals. We plot heatmap visualizations of the performance gap as a function of noise rate (\textit{i.e.} $\tau_1$) and conditional shift ($\phi$) across levels of $d'$. We index the figures with results for all choices of $d'$ below:
\begin{itemize}
    \item $d'=2$: Figure~\ref{fig:cshift2}
    \item $d'=4$: Figure~\ref{fig:cshift4}
    \item $d'=6$: Figure~\ref{fig:cshift6}
    \item $d'=8$: Figure~\ref{fig:cshift8}
    \item $d'=10$: Figure~\ref{fig:cshift10}
\end{itemize}

We index the tables with all raw AUC and xAUC values below:

\begin{itemize}
    \item $d' = 2$: AUC (Table~\ref{tab:auc_setting3_d2}), xAUC (Table~\ref{tab:xauc_setting3_d2})
    \item $d' = 4$: AUC (Table~\ref{tab:auc_setting3_d4}), xAUC (Table~\ref{tab:xauc_setting3_d4})
    \item $d' = 6$: AUC (Table~\ref{tab:auc_setting3_d6}), xAUC (Table~\ref{tab:xauc_setting3_d6})
    \item $d' = 8$: AUC (Table~\ref{tab:auc_setting3_d8}), xAUC (Table~\ref{tab:xauc_setting3_d8})
    \item $d' = 10$: AUC (Table~\ref{tab:auc_setting3_d10}), xAUC (Table~\ref{tab:xauc_setting3_d10})
\end{itemize}

In general, performance gaps worsen as $d'$ or $\phi$ increase. Note the apparent duplicated AUC values at $d' = 10, \phi = 180$; as all 100 realizations of the simulation share the same set of random seeds across all experiments---since all dimensions are rotated, all points in group $a=1$ flip across the decision boundary, yielding the exact same censorship pattern.

\begin{figure}
    \centering
    \includegraphics[width=0.75\linewidth]{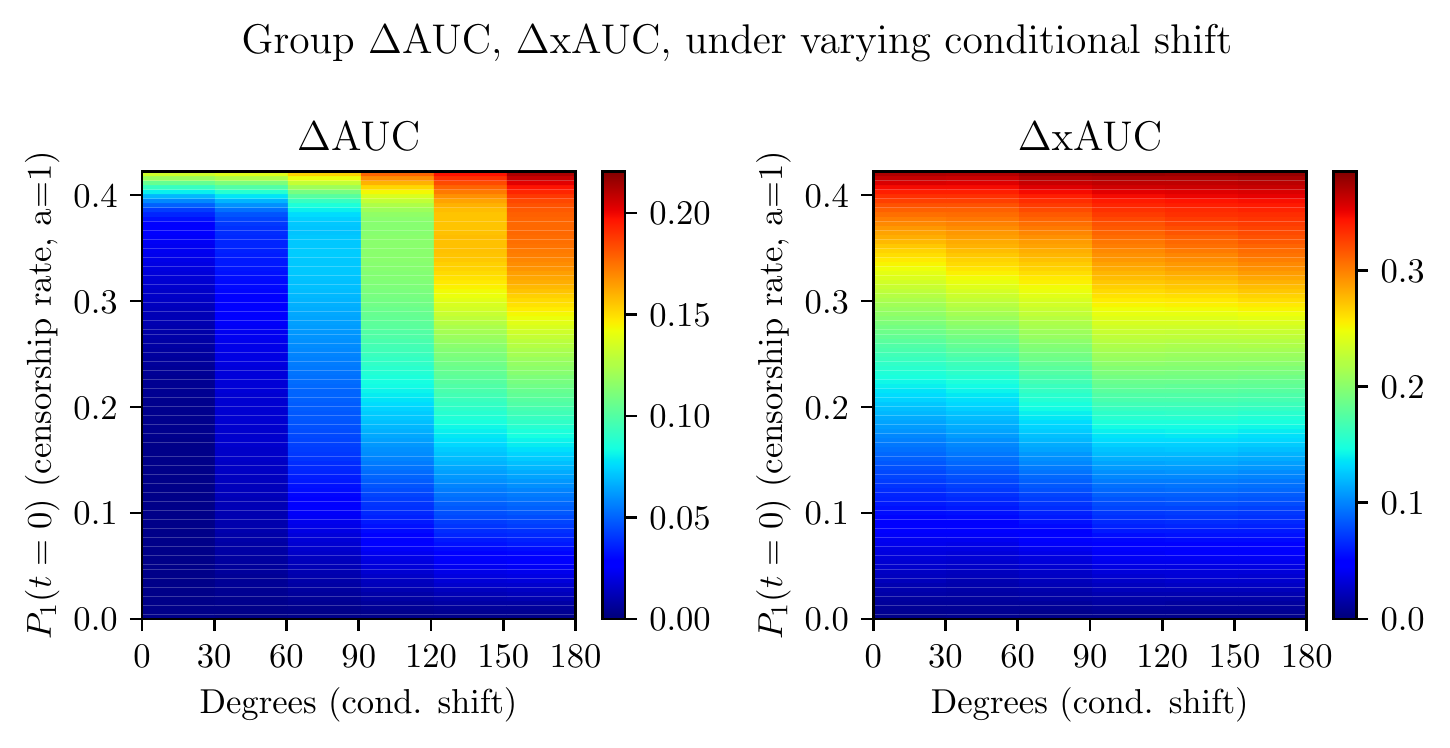}
    \caption{Heatmap showing median $\Delta \text{AUC}$ (left) and $\Delta \text{xAUC}$ (right) at varying levels of conditional shift ($x$-axis) and censorship rate in $a=1$ ($y$-axis); 2 dimensions rotated. Regions with smaller performance gap are in \textcolor{jet_min}{dark blue}, while larger performance gaps are in  \textcolor{jet_max}{dark red}.}
    \label{fig:cshift2}
\end{figure}

\begin{figure}
    \centering
    \includegraphics[width=0.75\linewidth]{images/conditional_shift_d4_v2.pdf}
    \caption{Heatmap showing median $\Delta \text{AUC}$ (left) and $\Delta \text{xAUC}$ (right) at varying levels of conditional shift ($\phi$; $x$-axis) and censorship rate in $a=1$ ($P_1(t=0)$, $y$-axis); 4 dimensions rotated. Regions with smaller performance gap are in \textcolor{jet_min}{dark blue}, while larger performance gaps are in  \textcolor{jet_max}{dark red}.}
    \label{fig:cshift4}
\end{figure}

\begin{figure}
    \centering
    \includegraphics[width=0.75\linewidth]{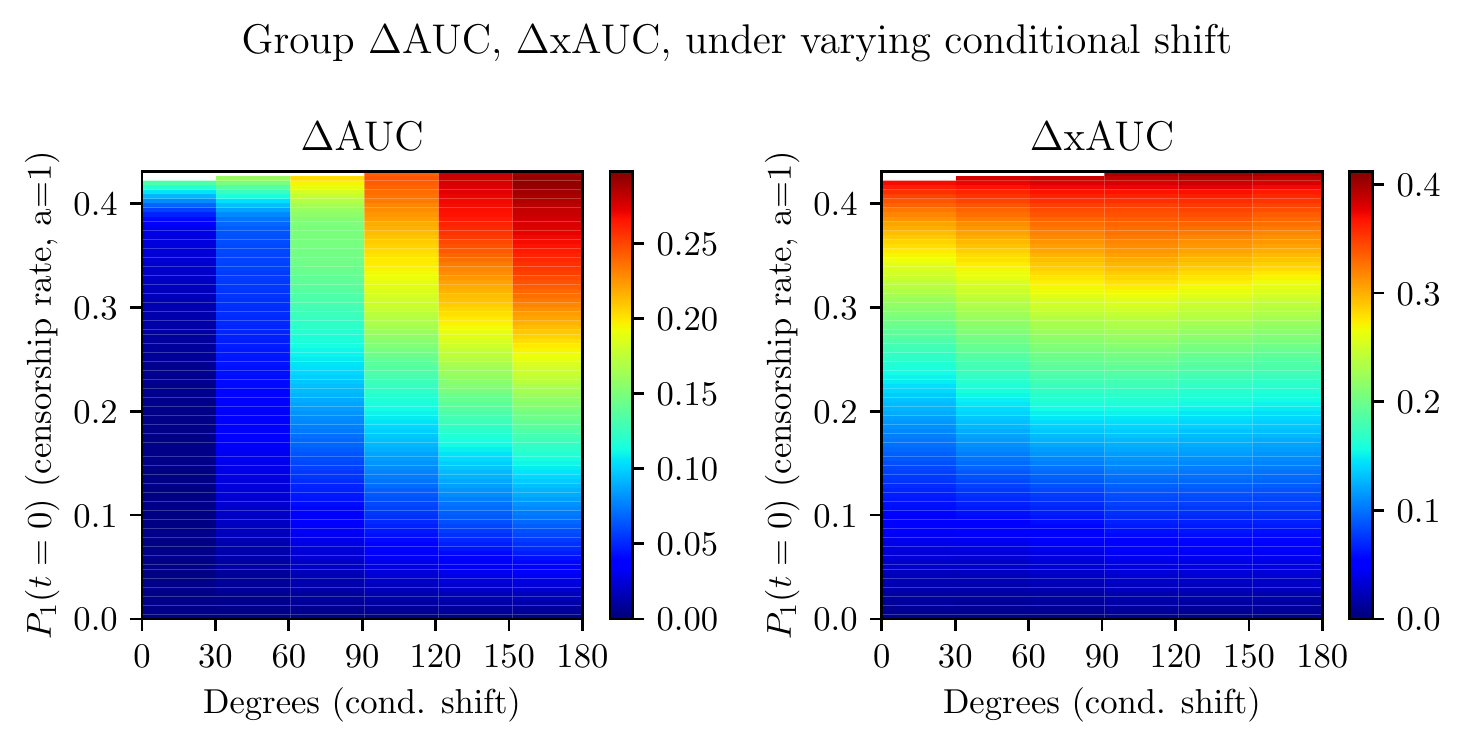}
    \caption{Heatmap showing median $\Delta \text{AUC}$ (left) and $\Delta \text{xAUC}$ (right) at varying levels of conditional shift ($\phi$; $x$-axis) and censorship rate in $a=1$ ($P_1(t=0)$, $y$-axis); 6 dimensions rotated. Regions with smaller performance gap are in \textcolor{jet_min}{dark blue}, while larger performance gaps are in  \textcolor{jet_max}{dark red}.}
    \label{fig:cshift6}
\end{figure}

\begin{figure}
    \centering
    \includegraphics[width=0.75\linewidth]{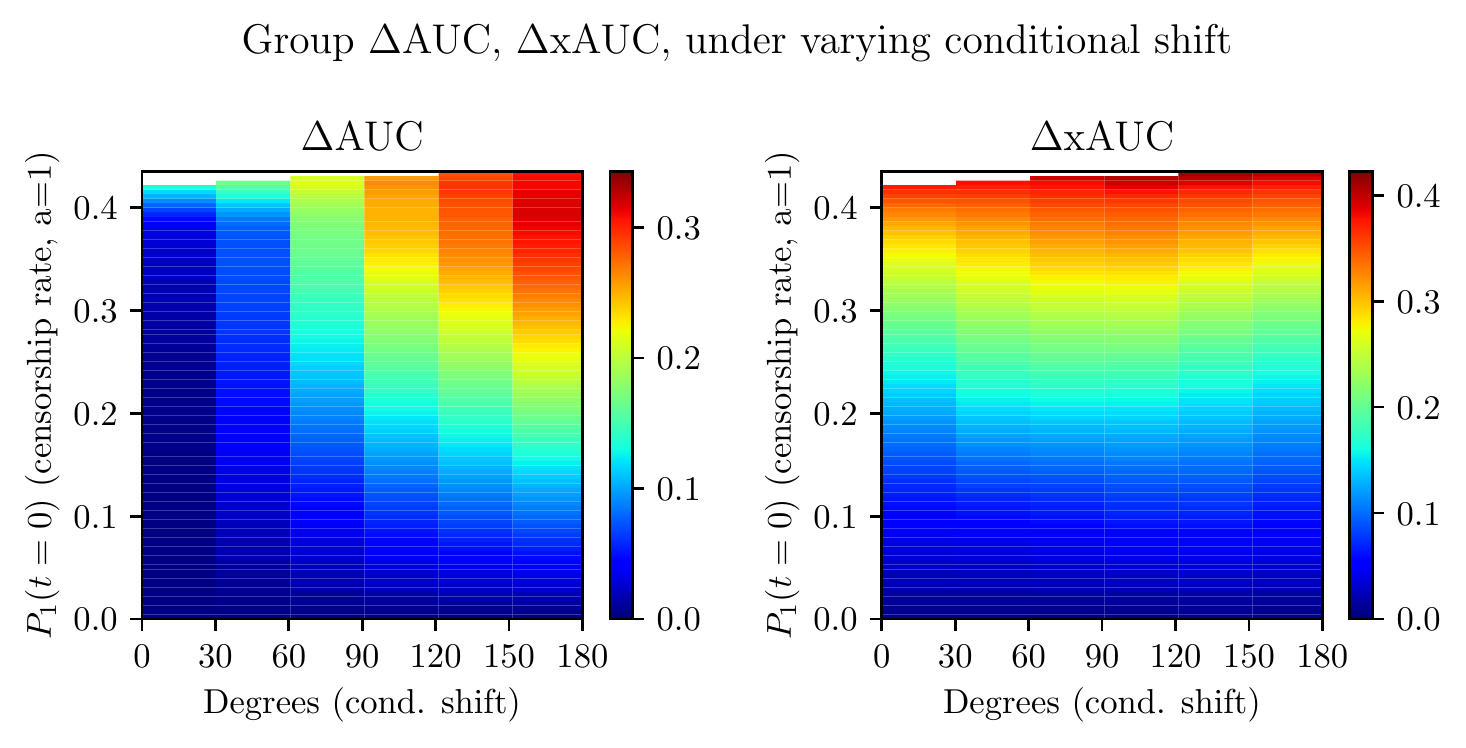}
    \caption{Heatmap showing median $\Delta \text{AUC}$ (left) and $\Delta \text{xAUC}$ (right) at varying levels of conditional shift ($\phi$; $x$-axis) and censorship rate in $a=1$ ($P_1(t=0)$, $y$-axis); 8 dimensions rotated. Regions with smaller performance gap are in \textcolor{jet_min}{dark blue}, while larger performance gaps are in  \textcolor{jet_max}{dark red}.}
    \label{fig:cshift8}
\end{figure}

\begin{figure}
    \centering
    \includegraphics[width=0.75\linewidth]{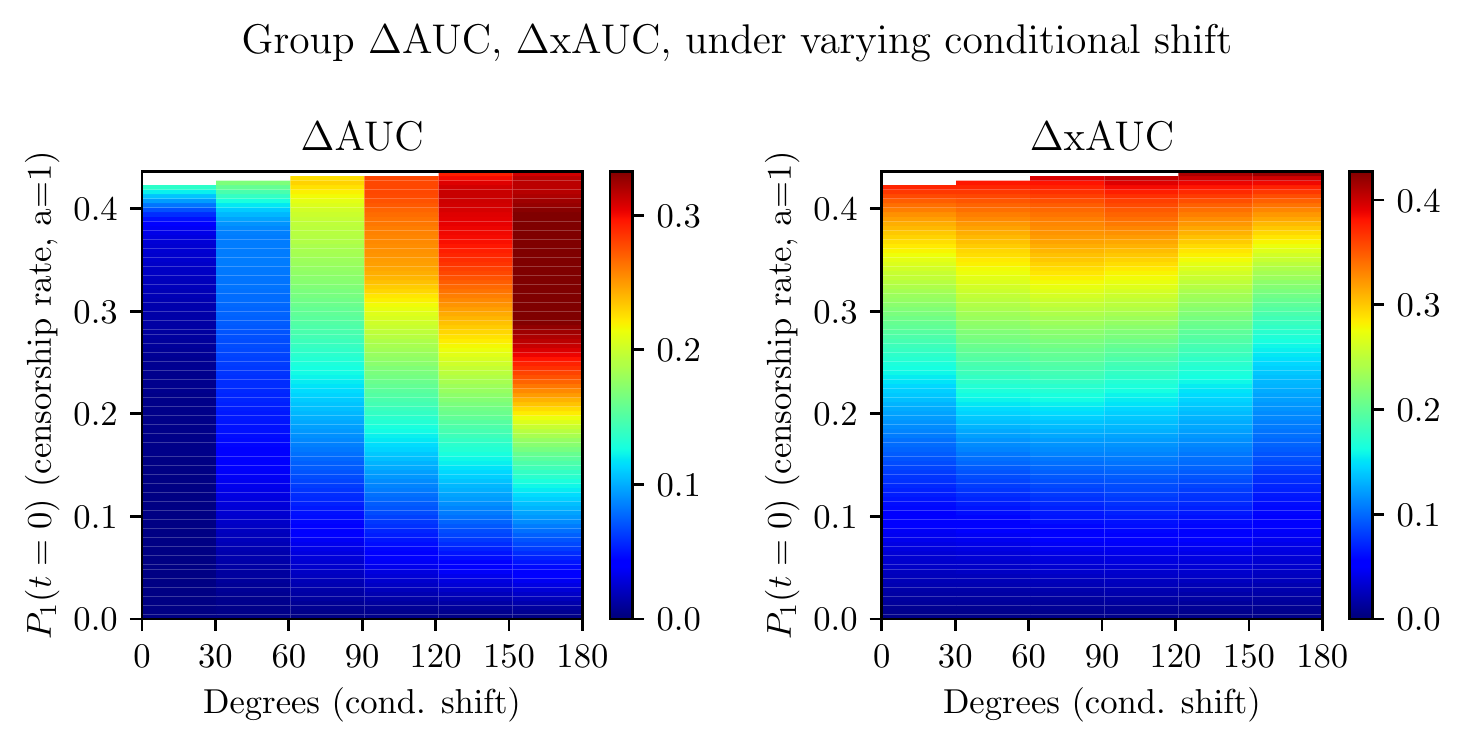}
    \caption{Heatmap showing median $\Delta \text{AUC}$ (left) and $\Delta \text{xAUC}$ (right) at varying levels of conditional shift ($\phi$; $x$-axis) and censorship rate in $a=1$ ($P_1(t=0)$, $y$-axis); 10 dimensions rotated. Regions with smaller performance gap are in \textcolor{jet_min}{dark blue}, while larger performance gaps are in  \textcolor{jet_max}{dark red}.}
    \label{fig:cshift10}
\end{figure}

\begin{sidewaystable}[t]
    \centering
    \small
    \begin{tabular}{c|c| c c c c c c}
    \toprule
         \multicolumn{2}{c|}{} & \multicolumn{6}{c}{$\tau_1$}  \\
    \midrule
         $\phi$ & \textbf{Group} & \textbf{5.0}& \textbf{5.4} & \textbf{5.8} & \textbf{6.2}& \textbf{6.6} & \textbf{7.0}  \\
    \midrule
         \multirow{3}{*}{0} & a=0 & 97.7 (97.4, 98.0) & 97.7 (97.4, 98.0) & 97.7 (97.4, 98.0) & 97.7 (97.4, 98.0) & 97.7 (97.4, 98.0) & 97.7 (97.4, 98.0) \\
         & a=1 & 97.6 (97.3, 97.9) & 97.5 (97.1, 97.8) & 97.3 (96.8, 97.6) & 97.7 (97.3, 98.1) & 93.7 (92.1, 94.9) & 83.4 (77.2, 86.9)\\
         & \cellcolor{gray!50} $\Delta \text{AUC}$ & \cellcolor{gray!50} 0.2 (0.0, 0.6) & \cellcolor{gray!50} 0.2 (0.0, 0.7) & \cellcolor{gray!50} 0.4 (0.0, 1.0) & \cellcolor{gray!50} 2.4 (1.6, 3.6) & \cellcolor{gray!50} 4.0 (2.9, 5.7) & \cellcolor{gray!50} 14.3 (10.8, 20.5) \\
    \midrule
         \multirow{3}{*}{30} & a=0 & 97.7 (97.4, 98.0) & 97.7 (97.4, 98.0) & 97.7 (97.4, 98.0) & 97.7 (97.4, 98.0) & 97.7 (97.4, 98.0) & 97.7 (97.4, 98.0) \\
         & a=1 & 97.2 (96.8, 97.7) & 96.3 (95.6, 96.9) & 96.0 (95.3, 96.7) & 94.1 (92.6, 95.0) & 92.7 (90.7, 93.8) & 83.0 (77.1, 87.3) \\
         & \cellcolor{gray!50} $\Delta \text{AUC}$ & \cellcolor{gray!50} 0.5 (0.0, 1.0) & \cellcolor{gray!50} 1.4 (0.8, 2.2) & \cellcolor{gray!50} 1.7 (1.0, 2.6) & \cellcolor{gray!50} 3.6 (2.6, 5.0) &  \cellcolor{gray!50} 5.0 (3.7, 7.1) & \cellcolor{gray!50} 14.7 (10.1, 20.5)\\
    \midrule
         \multirow{3}{*}{60} & a=0 & 97.7 (97.4, 98.0) & 97.7 (97.4, 98.0) & 97.7 (97.4, 98.0) & 97.7 (97.4, 98.0) & 97.7 (97.4, 98.0) & 97.7 (97.4, 98.0) \\
         & a=1 & 96.0 (95.2, 96.6) & 93.3 (92.3, 94.3) & 92.6 (91.2, 93.8) & 90.7 (89.1, 92.0) & 89.9 (87.5, 91.4) & 81.6 (76.8, 85.7) \\
        & \cellcolor{gray!50} $\Delta \text{AUC}$ & \cellcolor{gray!50} 1.7 (1.1, 2.5) & \cellcolor{gray!50} 4.3 (3.4, 5.5) & \cellcolor{gray!50} 5.1 (3.9, 6.5) & \cellcolor{gray!50} 7.0 (5.6, 8.7) & \cellcolor{gray!50} 7.8 (6.4, 10.2) & \cellcolor{gray!50} 16.0 (11.9, 20.8)\\
    \midrule    
         \multirow{3}{*}{90} & a=0 & 97.7 (97.4, 98.0) & 97.7 (97.4, 98.0) & 97.7 (97.4, 98.0) & 97.7 (97.4, 98.0) & 97.7 (97.4, 98.0) & 97.7 (97.4, 98.0) \\
         & a=1 & 93.9 (92.9, 94.8) & 90.6 (89.5, 92.0) & 88.9 (87.3, 90.5) &  86.5 (84.8, 88.1) & 86.0 (83.2, 88.2) & 79.5 (74.4, 83.6) \\
         & \cellcolor{gray!50} $\Delta \text{AUC}$ & \cellcolor{gray!50} 3.7 (2.9, 4.8) & \cellcolor{gray!50} 7.1 (5.9, 8.2) & \cellcolor{gray!50} 8.7 (7.2, 10.3) & \cellcolor{gray!50} 11.3 (9.5, 13.1) &  \cellcolor{gray!50} 11.7 (9.6, 14.4) & \cellcolor{gray!50} 18.3 (13.9, 23.1)\\
    \midrule         
         \multirow{3}{*}{120} & a=0 & 97.7 (97.4, 98.0) & 97.7 (97.4, 98.0) & 97.7 (97.4, 98.0) & 97.7 (97.4, 98.0) & 97.7 (97.4, 98.0) & 97.7 (97.4, 98.0) \\
         & a=1 & 92.1 (90.5, 93.2) & 88.2 (86.7, 89.4) & 86.4 (84.5, 88.2) & 82.2 (80.4, 84.5) & 82.1 (78.8, 84.4) & 77.6 (72.0, 81.8) \\
         & \cellcolor{gray!50} $\Delta \text{AUC}$ & \cellcolor{gray!50} 5.6 (4.5, 7.3) & \cellcolor{gray!50} 9.5 (8.1, 11.2) & \cellcolor{gray!50} 11.3 (9.5, 13.2) & \cellcolor{gray!50}  15.5 (13.1, 17.5) &  \cellcolor{gray!50} 15.6 (13.4, 19.1) & \cellcolor{gray!50} 20.2 (15.9, 25.7)\\
    \midrule     
         \multirow{3}{*}{150} & a=0 & 97.4 (97.0, 97.8) & 97.8 (97.4, 98.1) & 97.9 (97.5, 98.1) &  97.6 (97.1, 97.9)  & 97.3 (96.7, 97.7) &  95.2 (93.3, 96.2) \\
         & a=1 & 91.2 (89.7, 92.4) & 86.9 (85.3, 88.2) & 84.9 (83.4, 86.3) & 80.1 (77.9, 82.5) & 79.7 (76.7, 82.2) & 76.2 (70.5, 80.2)\\
         & \cellcolor{gray!50} $\Delta \text{AUC}$ & \cellcolor{gray!50} 6.6 (5.3, 8.1) & \cellcolor{gray!50} 10.8 (9.2, 12.3) & \cellcolor{gray!50} 12.8 (11.0, 14.3) & \cellcolor{gray!50} 17.5 (15.0, 19.9) &  \cellcolor{gray!50} 17.9 (15.1, 21.2) & \cellcolor{gray!50} 21.5 (17.4, 27.3)\\
    \midrule      
         \multirow{3}{*}{180} & a=0 & 97.4 (97.0, 97.8) & 97.8 (97.4, 98.1) & 97.9 (97.5, 98.1) & 97.6 (97.1, 97.9) & 97.3 (96.7, 97.7) &  95.2 (93.3, 96.2) \\
         & a=1 & 91.0 (89.5, 92.4) & 86.8 (85.0, 88.2) & 85.0 (83.0, 86.7) & 80.0 (77.6, 82.8) & 79.6 (76.6, 82.1) & 75.5 (70.0, 79.7)\\
         & \cellcolor{gray!50} $\Delta \text{AUC}$ & \cellcolor{gray!50} 6.7 (5.2, 8.1) & \cellcolor{gray!50} 11.0 (9.1, 12.8) & \cellcolor{gray!50} 12.7 (10.8, 14.4) & \cellcolor{gray!50} 17.6 (14.9, 20.2) &  \cellcolor{gray!50} 18.1 (15.3, 21.1) & \cellcolor{gray!50} 22.0 (18.0, 27.8)\\
    \midrule      
    \end{tabular}
    \caption{Group-wise AUC and $\Delta \text{AUC}$ under conditional distributional difference for all values of $\tau_1, \phi$, at $d' = 2$.}
    \label{tab:auc_setting3_d2}
\end{sidewaystable}

\begin{sidewaystable}[t]
    \centering
    \small
    \begin{tabular}{c|c| c c c c c c}
    \toprule
         \multicolumn{2}{c|}{} & \multicolumn{6}{c}{$\tau_1$}  \\
    \midrule
         $\phi$  & \textbf{Group} & \textbf{5.0}& \textbf{5.4} & \textbf{5.8} & \textbf{6.2}& \textbf{6.6} & \textbf{7.0}  \\
    \midrule
         \multirow{3}{*}{0} & a=0 & 97.6 (97.0, 98.1) & 90.9 (89.5, 92.4) & 85.8 (83.6, 88.3) & 72.3 (68.5, 77.0) & 69.0 (64.6, 73.6) & 97.6 (97.0, 98.1) \\
         & a=1 & 97.6 (97.2, 98.2) & 99.7 (99.5, 99.8) & 99.9 (99.8, 99.9) & 100.0 (99.9, 100.0) & 100.0 (99.9, 100.0) & 99.9 (99.8, 100.0)\\
         & \cellcolor{gray!50} $\Delta \text{xAUC}$ & \cellcolor{gray!50} 0.4 (0.1, 1.1) & \cellcolor{gray!50} 8.8 (7.1, 10.2) & \cellcolor{gray!50} 14.1 (11.6, 16.3) & \cellcolor{gray!50} 27.6 (23.0, 31.4) & \cellcolor{gray!50} 30.9 (26.3, 35.4) & \cellcolor{gray!50} 37.6 (32.0, 44.2) \\
    \midrule
         \multirow{3}{*}{30} & a=0 & 96.5 (95.5, 97.2) & 89.9 (88.1, 91.8) & 85.1 (82.6, 87.5) & 71.6 (66.3, 76.7) & 68.9 (63.2, 73.6) & 62.4 (56.6, 68.0) \\
         & a=1 & 98.2 (97.8, 98.6) & 99.5 (99.2, 99.6) & 99.8 (99.6, 99.9) & 99.9 (99.9, 100.0) & 100.0 (99.9, 100.0) & 99.9 (99.8, 100.0) \\
         & \cellcolor{gray!50} $\Delta \text{xAUC}$ & \cellcolor{gray!50} 1.8 (0.8, 2.9) & \cellcolor{gray!50} 9.5 (7.6, 11.4) & \cellcolor{gray!50} 14.6 (12.2, 17.2) & \cellcolor{gray!50} 28.3 (23.2, 33.7) &  \cellcolor{gray!50} 31.0 (26.3, 36.8) & \cellcolor{gray!50} 37.5 (31.9, 43.4)\\
    \midrule
         \multirow{3}{*}{60} & a=0 & 94.0 (92.8, 95.1) & 87.3 (85.1, 89.1) & 82.7 (80.1, 85.8) & 70.9 (66.5, 76.1) & 68.1 (62.4, 73.5) & 61.8 (56.5, 67.8) \\
         & a=1 & 98.6 (98.1, 98.9) & 99.2 (98.9, 99.4) & 99.5 (99.2, 99.7) & 99.9 (99.8, 99.9) & 99.9 (99.8, 100.0) & 99.9 (99.8, 100.0) \\
        & \cellcolor{gray!50} $\Delta \text{xAUC}$ & \cellcolor{gray!50} 4.5 (3.2, 5.9) & \cellcolor{gray!50} 11.9 (9.7, 14.2)  & \cellcolor{gray!50} 16.7 (13.5, 19.5) & \cellcolor{gray!50} 28.9 (23.8, 33.4)& \cellcolor{gray!50} 31.8 (26.4, 37.6) & \cellcolor{gray!50} 38.2 (32.0, 43.5)\\
    \midrule    
         \multirow{3}{*}{90} & a=0 & 91.4 (89.8, 93.1) & 83.7 (81.2, 87.3) & 79.6 (76.3, 83.5) & 69.4 (64.7, 75.2) & 67.0 (61.0, 73.1) & 61.8 (55.9, 68.6) \\
         & a=1 & 98.6 (98.1, 98.9) & 99.1 (98.8, 99.4) & 99.3 (99.0, 99.5) & 99.8 (99.6, 99.9) & 99.9 (99.7, 99.9) & 99.9 (99.8, 100.0)\\
         & \cellcolor{gray!50} $\Delta \text{xAUC}$ & \cellcolor{gray!50} 7.3 (5.4, 9.1) & \cellcolor{gray!50} 15.4 (11.6, 18.1) & \cellcolor{gray!50} 19.7 (15.5, 23.1) & \cellcolor{gray!50} 30.3 (24.5, 35.1) &  \cellcolor{gray!50} 32.8 (26.6, 38.9) & \cellcolor{gray!50} 38.1 (31.2, 44.1)\\
    \midrule         
         \multirow{3}{*}{120} & a=0 & 89.4 (87.7, 91.3) & 81.7 (78.8, 85.5) & 77.8 (74.1, 81.6) & 68.5 (62.8, 73.8) & 66.3 (61.1, 72.5) & 61.8 (56.3, 67.6) \\
         & a=1 & 98.6 (98.1, 98.9) & 99.1 (98.8, 99.3) & 99.3 (99.0, 99.5) & 99.7 (99.5, 99.8) & 99.8 (99.7, 99.9) & 99.9 (99.7, 100.0) \\
         & \cellcolor{gray!50} $\Delta \text{xAUC}$ & \cellcolor{gray!50} 9.2 (7.0, 10.9) & \cellcolor{gray!50} 17.5 (13.4, 20.4) & \cellcolor{gray!50} 21.4 (17.4, 25.2) & \cellcolor{gray!50} 31.2 (25.7, 37.0) &  \cellcolor{gray!50} 33.5 (27.2, 38.8) & \cellcolor{gray!50} 38.1 (32.2, 43.7)\\
    \midrule     
         \multirow{3}{*}{150} & a=0 & 88.3 (86.7, 90.6) & 80.9 (77.8, 84.4) & 77.1 (73.5, 81.4) &  67.6 (62.7, 73.9) & 66.1 (60.7, 72.1) &  61.7 (55.8, 67.3) \\
         & a=1 & 98.6 (98.2, 99.0) & 99.0 (98.6, 99.3) & 99.2 (98.9, 99.5) & 99.6 (99.4, 99.8) & 99.8 (99.6, 99.9) & 99.9 (99.7, 100.0) \\
         & \cellcolor{gray!50} $\Delta \text{xAUC}$ & \cellcolor{gray!50} 10.3 (7.8, 12.1) & \cellcolor{gray!50} 18.2 (14.3, 21.1) & \cellcolor{gray!50} 22.2 (17.5, 25.8) & \cellcolor{gray!50} 31.9 (25.6, 37.0) &  \cellcolor{gray!50} 33.7 (27.5, 39.1) & \cellcolor{gray!50} 38.2 (32.4, 44.1))\\
    \midrule      
         \multirow{3}{*}{180} & a=0 & 88.5 (86.5, 90.6) & 81.1 (78.3, 84.4) & 77.0 (72.9, 81.2) & 67.5 (63.2, 72.9) & 65.3 (60.4, 71.1) &  61.3 (56.0, 67.8) \\
         & a=1 & 98.6 (98.2, 98.9) & 99.0 (98.6, 99.2) & 99.2 (98.9, 99.4) & 99.6 (99.4, 99.8) & 99.8 (99.6, 99.9) & 99.9 (99.7, 100.0)\\
         & \cellcolor{gray!50} $\Delta \text{xAUC}$ & \cellcolor{gray!50} 10.1 (7.7, 12.1) & \cellcolor{gray!50} 17.8 (14.4, 20.5) & \cellcolor{gray!50} 22.3 (17.6, 26.4) & \cellcolor{gray!50} 32.2 (26.6, 36.6) &  \cellcolor{gray!50} 34.5 (28.3, 39.5) & \cellcolor{gray!50} 38.5 (32.0, 44.0)\\
    \midrule      
    \end{tabular}
    \caption{xAUC and $\Delta \text{xAUC}$ under conditional distributional difference for all values of $\tau_1, \phi$, at $d' = 2$.}
    \label{tab:xauc_setting3_d2}
\end{sidewaystable}

\begin{sidewaystable}[t]
    \centering
    \small
    \begin{tabular}{c|c| c c c c c c}
    \toprule
         \multicolumn{2}{c|}{} & \multicolumn{6}{c}{$\tau_1$}  \\
    \midrule
         $\phi$ & \textbf{Group} & \textbf{5.0}& \textbf{5.4} & \textbf{5.8} & \textbf{6.2}& \textbf{6.6} & \textbf{7.0}  \\
    \midrule
         \multirow{3}{*}{0} & a=0 & 97.7 (97.4, 98.0) & 97.7 (97.4, 98.0) & 97.7 (97.4, 98.0) & 97.7 (97.4, 98.0) & 97.7 (97.4, 98.0) & 97.7 (97.4, 98.0) \\
         & a=1 & 97.6 (97.3, 97.9) & 97.5 (97.1, 97.8) & 97.3 (96.8, 97.6) & 97.7 (97.3, 98.1) & 93.7 (92.1, 94.9) & 83.4 (77.2, 86.9)\\
         & \cellcolor{gray!50} $\Delta \text{AUC}$ & \cellcolor{gray!50} 0.2 (0.0, 0.6) & \cellcolor{gray!50} 0.2 (0.0, 0.7) & \cellcolor{gray!50} 0.4 (0.0, 1.0) & \cellcolor{gray!50} 2.4 (1.6, 3.6) & \cellcolor{gray!50} 4.0 (2.9, 5.7) & \cellcolor{gray!50} 14.3 (10.8, 20.5) \\
    \midrule
         \multirow{3}{*}{30} & a=0 & 97.7 (97.4, 98.0) & 97.7 (97.4, 98.0) & 97.7 (97.4, 98.0) & 97.7 (97.4, 98.0) & 97.7 (97.4, 98.0) & 97.7 (97.4, 98.0) \\
         & a=1 & 97.0 (96.5, 97.4) & 95.4 (94.4, 96.0) & 94.8 (93.9, 95.6) & 93.0 (91.9, 94.1) & 91.7 (90.0, 93.1)& 82.9 (77.2, 86.6) \\
         & \cellcolor{gray!50} $\Delta \text{AUC}$ & \cellcolor{gray!50} 0.7 (0.1, 1.3) & \cellcolor{gray!50} 2.3 (1.6, 3.3) & \cellcolor{gray!50} 2.9 (1.9, 3.8) & \cellcolor{gray!50} 4.7 (3.5, 5.8) &  \cellcolor{gray!50} 5.9 (4.5, 8.0) & \cellcolor{gray!50} 14.7 (11.2, 20.7)\\
    \midrule
         \multirow{3}{*}{60} & a=0 & 97.7 (97.4, 98.0) & 97.7 (97.4, 98.0) & 97.7 (97.4, 98.0) & 97.7 (97.4, 98.0) & 97.7 (97.4, 98.0) & 97.7 (97.4, 98.0) \\
         & a=1 & 94.5 (93.5, 95.3) & 91.2 (89.5, 92.2) & 89.5 (87.7, 91.0) & 86.6 (84.7, 88.3) & 85.8 (83.5, 87.7) & 79.6 (75.7, 83.8) \\
        & \cellcolor{gray!50} $\Delta \text{AUC}$ & \cellcolor{gray!50} 3.2 (2.3, 4.2) & \cellcolor{gray!50} 6.5 (5.4, 8.0) & \cellcolor{gray!50} 8.2 (6.7, 10.0) & \cellcolor{gray!50} 11.2 (9.4, 13.1) & \cellcolor{gray!50} 11.8 (9.9, 14.2) & \cellcolor{gray!50} 18.2 (14.1, 22.4)\\
    \midrule    
         \multirow{3}{*}{90} & a=0 & 97.7 (97.4, 98.0) & 97.7 (97.4, 98.0) & 97.7 (97.4, 98.0) & 97.7 (97.4, 98.0) & 97.7 (97.4, 98.0) & 97.7 (97.4, 98.0) \\
         & a=1 & 91.4 (90.1, 92.6) & 87.0 (85.5, 88.4) & 85.2 (83.2, 87.0) & 80.7 (78.7, 83.3) & 80.3 (77.7, 83.1) & 76.3 (71.9, 79.5) \\
         & \cellcolor{gray!50} $\Delta \text{AUC}$ & \cellcolor{gray!50}  6.2 (5.0, 7.8) & \cellcolor{gray!50} 10.6 (9.2, 12.2) & \cellcolor{gray!50} 12.5 (10.6, 14.6) & \cellcolor{gray!50} 17.0 (14.4, 19.0) &  \cellcolor{gray!50} 17.6 (14.5, 19.9) & \cellcolor{gray!50} 21.5 (18.1, 25.8)\\
    \midrule         
         \multirow{3}{*}{120} & a=0 & 97.7 (97.4, 98.0) & 97.7 (97.4, 98.0) & 97.7 (97.4, 98.0) & 97.7 (97.4, 98.0) & 97.7 (97.4, 98.0) & 97.7 (97.4, 98.0) \\
         & a=1 & 87.9 (85.9, 89.7) & 82.9 (81.0, 84.7) & 80.7 (78.4, 83.0) & 75.8 (73.6, 79.0) & 75.0 (72.3, 79.0) & 73.0 (68.8, 77.3) \\
         & \cellcolor{gray!50} $\Delta \text{AUC}$ & \cellcolor{gray!50} 9.8 (8.1, 11.6) & \cellcolor{gray!50} 14.7 (12.7, 16.9) & \cellcolor{gray!50} 16.9 (14.7, 19.3) & \cellcolor{gray!50}  21.9 (18.8, 24.2) &  \cellcolor{gray!50} 22.7 (18.6, 25.5) & \cellcolor{gray!50} 24.8 (20.4, 28.9)\\
    \midrule     
       \multirow{3}{*}{150} & a=0 & 97.7 (97.4, 98.0) & 97.7 (97.4, 98.0) & 97.7 (97.4, 98.0) & 97.7 (97.4, 98.0) & 97.7 (97.4, 98.0) & 97.7 (97.4, 98.0) \\
         & a=1 & 85.4 (82.9, 87.4) & 80.3 (77.6, 82.5) & 77.9 (75.4, 80.7) & 73.5 (70.7, 76.8) & 72.7 (69.9, 76.4) & 71.6 (66.1, 76.1)\\
         & \cellcolor{gray!50} $\Delta \text{AUC}$ & \cellcolor{gray!50} 12.3 (10.4, 14.7) & \cellcolor{gray!50} 17.4 (15.1, 20.2) & \cellcolor{gray!50} 19.8 (17.1, 22.0) & \cellcolor{gray!50} 24.3 (20.9, 27.2) &  \cellcolor{gray!50} 25.1 (21.2, 28.0) & \cellcolor{gray!50} 26.2 (21.7, 31.7)\\
    \midrule      
        \multirow{3}{*}{180} & a=0 & 97.7 (97.4, 98.0) & 97.7 (97.4, 98.0) & 97.7 (97.4, 98.0) & 97.7 (97.4, 98.0) & 97.7 (97.4, 98.0) & 97.7 (97.4, 98.0) \\
         & a=1 & 84.4 (82.4, 85.8) &  79.1 (76.7, 81.2) & 77.1 (75.1, 79.9) & 72.6 (69.2, 75.2) & 72.3 (69.1, 76.0) & 71.0 (66.0, 75.4)\\
         & \cellcolor{gray!50} $\Delta \text{AUC}$ & \cellcolor{gray!50} 13.4 (11.9, 15.4) & \cellcolor{gray!50} 18.5 (16.7, 21.2) & \cellcolor{gray!50} 20.6 (18.0, 22.6) & \cellcolor{gray!50} 25.1 (22.4, 28.4) &  \cellcolor{gray!50} 25.3 (21.7, 28.6) & \cellcolor{gray!50} 26.6 (22.4, 31.9)\\
    \midrule      
    \end{tabular}
    \caption{Group-wise AUC and $\Delta \text{AUC}$ under conditional distributional difference for all values of $\tau_1, \phi$, at $d' = 4$.}
    \label{tab:auc_setting3_d4}
\end{sidewaystable}

\begin{sidewaystable}[t]
    \centering
    \small
    \begin{tabular}{c|c| c c c c c c}
    \toprule
         \multicolumn{2}{c|}{} & \multicolumn{6}{c}{$\tau_1$}  \\
    \midrule
         $\phi$ & \textbf{Group} & \textbf{5.0}& \textbf{5.4} & \textbf{5.8} & \textbf{6.2}& \textbf{6.6} & \textbf{7.0}  \\
    \midrule
         \multirow{3}{*}{0} & a=0 & 97.6 (97.0, 98.1) & 90.9 (89.5, 92.4) & 85.8 (83.6, 88.3) & 72.3 (68.5, 77.0) & 69.0 (64.6, 73.6) & 62.3 (55.8, 67.7) \\
         & a=1 & 97.6 (97.2, 98.2) & 99.7 (99.5, 99.8) & 99.9 (99.8, 99.9) & 100.0 (99.9, 100.0) & 100.0 (99.9, 100.0) & 99.9 (99.8, 100.0)\\
         & \cellcolor{gray!50} $\Delta \text{xAUC}$ & \cellcolor{gray!50} 0.4 (0.1, 1.1) & \cellcolor{gray!50} 8.8 (7.1, 10.2) & \cellcolor{gray!50} 14.1 (11.6, 16.3) & \cellcolor{gray!50} 27.6 (23.0, 31.4) & \cellcolor{gray!50} 30.9 (26.3, 35.4) & \cellcolor{gray!50} 37.6 (32.0, 44.2) \\
    \midrule
         \multirow{3}{*}{30} & a=0 & 95.7 (94.8, 96.6) & 89.1 (87.0, 91.0) & 83.9 (81.6, 86.7) & 71.5 (67.0, 76.6) & 68.3 (63.7, 74.6) & 62.1 (56.4, 68.2) \\
         & a=1 & 98.5 (98.0, 98.8) & 99.4 (99.1, 99.5) & 99.7 (99.5, 99.8) & 99.9 (99.9, 100.0) & 99.9 (99.9, 100.0) & 99.9 (99.8, 100.0) \\
         & \cellcolor{gray!50} $\Delta \text{xAUC}$ & \cellcolor{gray!50} 2.7 (1.7, 3.7) & \cellcolor{gray!50} 10.2 (8.1, 12.4) & \cellcolor{gray!50} 15.7 (12.7, 18.1) & \cellcolor{gray!50} 28.4 (23.2, 33.0) &  \cellcolor{gray!50} 31.6 (25.3, 36.3) & \cellcolor{gray!50} 37.9 (31.6, 43.5)\\
    \midrule
         \multirow{3}{*}{60} & a=0 & 92.3 (90.9, 93.7) & 85.2 (82.3, 87.7) & 80.7 (77.7, 83.7) & 69.9 (64.3, 74.9) & 67.5 (62.2, 72.7) & 62.4 (56.7, 68.3) \\
         & a=1 & 98.6 (98.1, 98.9) & 99.1 (98.7, 99.3) &  99.3 (99.0, 99.5) & 99.9 (99.7, 99.9) & 99.8 (99.6, 99.9) & 99.9 (99.8, 100.0) \\
        & \cellcolor{gray!50} $\Delta \text{xAUC}$ & \cellcolor{gray!50} 6.3 (4.6, 7.9) & \cellcolor{gray!50} 14.1 (11.2, 16.9) & \cellcolor{gray!50} 18.6 (15.6, 21.6) & \cellcolor{gray!50} 29.9 (24.7, 35.6) & \cellcolor{gray!50} 32.3 (27.1, 37.7) & \cellcolor{gray!50} 37.6 (31.5, 43.3)\\
    \midrule    
         \multirow{3}{*}{90} & a=0 & 89.0 (86.9, 90.8) & 81.6 (79.4, 84.7) & 77.3 (73.7, 81.0) & 68.3 (63.5, 72.7) & 66.0 (61.0, 72.2) & 62.0 (56.1, 67.9) \\
         & a=1 & 98.6 (98.2, 99.0) & 99.0 (98.6, 99.2) & 99.2 (99.0, 99.4) & 99.7 (99.5, 99.8) & 99.8 (99.6, 99.9) & 99.9 (99.7, 100.0)\\
         & \cellcolor{gray!50} $\Delta \text{xAUC}$ & \cellcolor{gray!50} 9.7 (7.7, 11.8) & \cellcolor{gray!50} 17.4 (14.1, 19.6) & \cellcolor{gray!50} 22.0 (18.1, 25.6) & \cellcolor{gray!50} 31.4 (26.8, 36.3) &  \cellcolor{gray!50} 33.7 (27.5, 38.9) & \cellcolor{gray!50} 37.9 (31.9, 43.7)\\
    \midrule         
         \multirow{3}{*}{120} & a=0 & 97.7 (97.4, 98.0) & 97.7 (97.4, 98.0) & 97.7 (97.4, 98.0) & 97.7 (97.4, 98.0) & 97.7 (97.4, 98.0) & 97.7 (97.4, 98.0) \\
         & a=1 & 92.1 (90.5, 93.2) & 88.2 (86.7, 89.4) & 86.4 (84.5, 88.2) & 82.2 (80.4, 84.5) & 82.1 (78.8, 84.4) & 77.6 (72.0, 81.8) \\
         & \cellcolor{gray!50} $\Delta \text{xAUC}$ & \cellcolor{gray!50} 5.6 (4.5, 7.3) & \cellcolor{gray!50} 9.5 (8.1, 11.2) & \cellcolor{gray!50} 11.3 (9.5, 13.2) & \cellcolor{gray!50}  15.5 (13.1, 17.5) &  \cellcolor{gray!50} 15.6 (13.4, 19.1) & \cellcolor{gray!50} 20.2 (15.9, 25.7)\\
    \midrule     
         \multirow{3}{*}{150} & a=0 & 85.5 (83.2, 87.8) & 78.4 (75.1, 81.7) & 74.7 (70.5, 78.6) & 66.0 (62.3, 71.5) & 64.9 (59.8, 70.8) & 61.0 (56.0, 66.8) \\
         & a=1 & 98.6 (98.2, 98.9) & 99.0 (98.6, 99.2) & 99.2 (98.9, 99.4) & 99.6 (99.4, 99.8) & 99.7 (99.6, 99.9) & 99.9 (99.7, 100.0)\\
         & \cellcolor{gray!50} $\Delta \text{xAUC}$ & \cellcolor{gray!50} 13.3 (10.7, 15.5) & \cellcolor{gray!50} 20.6 (17.3, 24.0) & \cellcolor{gray!50} 24.6 (20.7, 28.7) & \cellcolor{gray!50} 33.6 (28.0, 37.5) &  \cellcolor{gray!50} 34.9 (28.8, 40.0) & \cellcolor{gray!50} 38.9 (33.0, 43.8)\\
    \midrule      
         \multirow{3}{*}{180} & a=0 & 83.2 (80.8, 85.9) & 76.3 (72.2, 80.2) & 72.7 (68.4, 76.7) & 65.4 (60.7, 70.9) & 64.2 (58.7, 69.4) & 60.1 (54.7, 65.6) \\
         & a=1 & 98.5 (98.0, 98.9) & 98.9 (98.5, 99.2) & 99.2 (98.8, 99.5) & 99.6 (99.4, 99.8) & 99.8 (99.6, 99.9) & 99.9 (99.8, 100.0)\\
         & \cellcolor{gray!50} $\Delta \text{xAUC}$ & \cellcolor{gray!50} 15.5 (12.2, 17.9) & \cellcolor{gray!50} 22.6 (18.7, 26.8) & \cellcolor{gray!50} 26.6 (22.2, 30.7) & \cellcolor{gray!50} 34.3 (28.7, 39.0) &  \cellcolor{gray!50} 35.6 (30.3, 41.1) & \cellcolor{gray!50} 39.8 (34.2, 45.3)\\
    \midrule      
    \end{tabular}
    \caption{Group-wise xAUC and $\Delta \text{xAUC}$ under conditional distributional difference for all values of $\tau_1, \phi$, at $d' = 4$.}
    \label{tab:xauc_setting3_d4}
\end{sidewaystable}

\begin{sidewaystable}[t]
    \centering
    \small
    \begin{tabular}{c|c| c c c c c c}
    \toprule
         \multicolumn{2}{c|}{} & \multicolumn{6}{c}{$\tau_1$}  \\
    \midrule
         $\phi$ & \textbf{Group} & \textbf{5.0}& \textbf{5.4} & \textbf{5.8} & \textbf{6.2}& \textbf{6.6} & \textbf{7.0}  \\
    \midrule
         \multirow{3}{*}{0} & a=0 & 97.7 (97.4, 98.0) & 97.7 (97.4, 98.0) & 97.7 (97.4, 98.0) & 97.7 (97.4, 98.0) & 97.7 (97.4, 98.0) & 97.7 (97.4, 98.0) \\
         & a=1 & 97.6 (97.3, 97.9) & 97.5 (97.1, 97.8) & 97.3 (96.8, 97.6) & 97.7 (97.3, 98.1) & 93.7 (92.1, 94.9) & 83.4 (77.2, 86.9)\\
         & \cellcolor{gray!50} $\Delta \text{AUC}$ & \cellcolor{gray!50} 0.2 (0.0, 0.6) & \cellcolor{gray!50} 0.2 (0.0, 0.7) & \cellcolor{gray!50} 0.4 (0.0, 1.0) & \cellcolor{gray!50} 2.4 (1.6, 3.6) & \cellcolor{gray!50} 4.0 (2.9, 5.7) & \cellcolor{gray!50} 14.3 (10.8, 20.5) \\
    \midrule
         \multirow{3}{*}{30} & a=0 & 97.7 (97.4, 98.0) & 97.7 (97.4, 98.0) & 97.7 (97.4, 98.0) & 97.7 (97.4, 98.0) & 97.7 (97.4, 98.0) & 97.7 (97.4, 98.0) \\
         & a=1 & 96.5 (95.8, 97.1) & 94.6 (93.4, 95.4) & 93.7 (92.4, 94.6) & 91.9 (90.2, 93.0) & 90.7 (88.6, 92.0) & 82.1 (77.1, 86.8) \\
        & \cellcolor{gray!50} $\Delta \text{AUC}$ & \cellcolor{gray!50} 1.2 (0.6, 1.9) & \cellcolor{gray!50} 3.1 (2.3, 4.2) & \cellcolor{gray!50} 3.9 (2.9, 5.3) & \cellcolor{gray!50} 5.8 (4.6, 7.5) & \cellcolor{gray!50} 7.1 (5.7, 9.1) & \cellcolor{gray!50} 15.6 (11.2, 20.6)\\
    \midrule
         \multirow{3}{*}{60} & a=0 & 97.7 (97.4, 98.0) & 97.7 (97.4, 98.0) & 97.7 (97.4, 98.0) & 97.7 (97.4, 98.0) & 97.7 (97.4, 98.0) & 97.7 (97.4, 98.0) \\
         & a=1 & 92.9 (91.4, 93.8) & 89.0 (87.6, 90.3) & 87.0 (85.5, 88.7) & 83.1 (81.0, 85.2) & 89.9 (87.5, 91.4) & 77.9 (72.6, 81.6) \\
        & \cellcolor{gray!50} $\Delta \text{AUC}$ & \cellcolor{gray!50} 4.7 (3.9, 6.3) & \cellcolor{gray!50} 8.6 (7.3, 10.0) & \cellcolor{gray!50} 10.7 (8.9, 12.2) & \cellcolor{gray!50} 14.6 (12.4, 16.8) & \cellcolor{gray!50} 15.1 (13.2, 18.0) & \cellcolor{gray!50} 19.8 (16.1, 25.2)\\
    \midrule    
         \multirow{3}{*}{90} & a=0 & 97.7 (97.4, 98.0) & 97.7 (97.4, 98.0) & 97.7 (97.4, 98.0) & 97.7 (97.4, 98.0) & 97.7 (97.4, 98.0) & 97.7 (97.4, 98.0) \\
         & a=1 & 88.1 (86.4, 89.7) & 83.4 (81.6, 85.4) & 80.9 (78.8, 83.2) & 76.4 (74.1, 79.3) & 75.3 (71.9, 78.7) & 79.5 (74.4, 83.6) \\
         & \cellcolor{gray!50} $\Delta \text{AUC}$ & \cellcolor{gray!50} 9.7 (8.0, 11.3) & \cellcolor{gray!50} 14.4 (12.3, 16.0) & \cellcolor{gray!50} 16.9 (14.4, 18.9) & \cellcolor{gray!50} 11.3 (9.5, 13.1) &  \cellcolor{gray!50} 21.3 (18.4, 23.6) & \cellcolor{gray!50} 22.5 (18.6, 25.9)\\
    \midrule         
         \multirow{3}{*}{120} & a=0 & 97.7 (97.4, 98.0) & 97.7 (97.4, 98.0) & 97.7 (97.4, 98.0) & 97.7 (97.4, 98.0) & 97.7 (97.4, 98.0) & 97.7 (97.4, 98.0) \\
         & a=1 & 82.7 (81.0, 84.9) & 77.6 (75.7, 80.4) & 75.6 (72.9, 77.8) & 71.2 (68.0, 75.3) & 70.8 (67.4, 75.5) & 69.8 (65.1, 74.7) \\
         & \cellcolor{gray!50} $\Delta \text{AUC}$ & \cellcolor{gray!50} 14.9 (12.6, 16.7) & \cellcolor{gray!50} 20.0 (17.1, 21.9) & \cellcolor{gray!50} 22.1 (19.9, 24.8) & \cellcolor{gray!50} 26.5 (22.5, 29.6) &  \cellcolor{gray!50} 26.9 (22.2, 30.6) & \cellcolor{gray!50} 27.8 (22.8, 32.9)\\
    \midrule     
         \multirow{3}{*}{150} & a=0 & 97.7 (97.4, 98.0) & 97.7 (97.4, 98.0) & 97.7 (97.4, 98.0) & 97.7 (97.4, 98.0) & 97.7 (97.4, 98.0) & 97.7 (97.4, 98.0) \\
         & a=1 & 78.6 (76.4, 80.9) & 73.2 (70.5, 76.2) & 71.4 (68.4, 74.3) & 68.9 (64.7, 72.9) & 68.5 (64.4, 73.5) & 69.0 (32.8, 74.5)\\
         & \cellcolor{gray!50} $\Delta \text{AUC}$ & \cellcolor{gray!50} 19.3 (16.7, 21.2) & \cellcolor{gray!50} 24.5 (21.3, 27.3) & \cellcolor{gray!50} 26.2 (23.4, 29.2) & \cellcolor{gray!50} 28.7 (24.8, 33.0) &  \cellcolor{gray!50} 29.1 (24.1, 33.4) & \cellcolor{gray!50} 28.5 (23.4, 64.9)\\
    \midrule      
         \multirow{3}{*}{180} & a=0 & 97.7 (97.4, 98.0) & 97.7 (97.4, 98.0) & 97.7 (97.4, 98.0) & 97.7 (97.4, 98.0) & 97.7 (97.4, 98.0) & 97.7 (97.4, 98.0) \\
         & a=1 & 77.0 (74.2, 79.3) & 71.4 (68.4, 74.3) & 69.8 (66.6, 72.8) & 67.8 (63.0, 73.2) & 68.0 (63.3, 72.9) & 67.9 (33.0, 72.9)\\
         & \cellcolor{gray!50} $\Delta \text{AUC}$ & \cellcolor{gray!50} 20.8 (18.3, 23.6) & \cellcolor{gray!50} 26.4 (23.5, 29.3) & \cellcolor{gray!50} 27.8 (24.9, 31.0) & \cellcolor{gray!50} 29.8 (24.6, 34.8) &  \cellcolor{gray!50} 29.6 (24.7, 34.5) & \cellcolor{gray!50} 29.8 (24.9, 64.7)\\
    \midrule      
    \end{tabular}
    \caption{Group-wise AUC and $\Delta \text{AUC}$ under conditional distributional difference for all values of $\tau_1, \phi$, at $d' = 6$.}
    \label{tab:auc_setting3_d6}
\end{sidewaystable}

\begin{sidewaystable}[t]
    \centering
    \small
    \begin{tabular}{c|c| c c c c c c}
    \toprule
         \multicolumn{2}{c|}{} & \multicolumn{6}{c}{$\tau_1$}  \\
    \midrule
         $\phi$ & \textbf{Group} & \textbf{5.0}& \textbf{5.4} & \textbf{5.8} & \textbf{6.2}& \textbf{6.6} & \textbf{7.0}  \\
    \midrule
         \multirow{3}{*}{0} & a=0 & 97.6 (97.0, 98.1) & 90.9 (89.5, 92.4) & 85.8 (83.6, 88.3) & 72.3 (68.5, 77.0) & 69.0 (64.6, 73.6) & 62.3 (55.8, 67.7) \\
         & a=1 & 97.6 (97.2, 98.2) & 99.7 (99.5, 99.8) & 99.9 (99.8, 99.9) & 100.0 (99.9, 100.0) & 100.0 (99.9, 100.0) & 99.9 (99.8, 100.0)\\
         & \cellcolor{gray!50} $\Delta \text{xAUC}$ & \cellcolor{gray!50} 0.4 (0.1, 1.1) & \cellcolor{gray!50} 8.8 (7.1, 10.2) & \cellcolor{gray!50} 14.1 (11.6, 16.3) & \cellcolor{gray!50} 27.6 (23.0, 31.4) & \cellcolor{gray!50} 30.9 (26.3, 35.4) & \cellcolor{gray!50} 37.6 (32.0, 44.2) \\
    \midrule
         \multirow{3}{*}{30} & a=0 & 95.1 (94.1, 96.1) & 88.4 (86.4, 90.6) & 83.5 (81.2, 86.4) & 71.1 (66.4, 75.9) & 68.1 (63.8, 73.7) & 62.1 (56.5, 67.9) \\
         & a=1 & 98.5 (98.0, 98.8) & 99.3 (99.0, 99.4) & 99.6 (99.3, 99.7) & 99.9 (99.8, 99.9) & 99.9 (99.8, 100.0) & 99.9 (99.8, 100.0) \\
         & \cellcolor{gray!50} $\Delta \text{xAUC}$ & \cellcolor{gray!50} 3.3 (2.3, 4.5) & \cellcolor{gray!50} 10.9 (8.7, 13.0) & \cellcolor{gray!50} 16.1 (12.9, 18.4) & \cellcolor{gray!50} 28.8 (23.9, 33.5) & \cellcolor{gray!50} 31.9 (26.2, 36.2) & \cellcolor{gray!50} 37.8 (31.9, 43.4)\\
    \midrule
         \multirow{3}{*}{60} & a=0 & 90.4 (88.7, 92.0) & 82.9 (80.1, 85.7) & 79.3 (76.3, 81.7) & 69.9 (64.3, 74.9) & 66.6 (60.5, 72.3) & 61.6 (56.1, 67.8) \\
         & a=1 & 98.5 (98.1, 99.0) & 99.0 (98.6, 99.3) & 99.2 (98.9, 99.4) & 99.7 (99.5, 99.8) & 99.8 (99.6, 99.9) & 99.9 (99.7, 100.0)\\
        & \cellcolor{gray!50} $\Delta \text{xAUC}$ & \cellcolor{gray!50} 8.2 (6.3, 10.5) & \cellcolor{gray!50} 16.1 (13.0, 19.1) & \cellcolor{gray!50} 20.0 (17.4, 23.1) & \cellcolor{gray!50} 30.8 (25.5, 36.8) & \cellcolor{gray!50} 33.3 (27.3, 39.4) & \cellcolor{gray!50} 38.3 (31.9, 43.9)\\
    \midrule    
         \multirow{3}{*}{90} & a=0 & 89.0 (86.9, 90.8) & 81.6 (79.4, 84.7) & 75.4 (71.3, 78.9) & 66.8 (59.3, 71.6) & 65.5 (58.9, 70.8) & 61.3 (54.3, 66.9) \\
         & a=1 & 98.6 (98.2, 99.0) & 99.0 (98.6, 99.2) & 99.1 (98.8, 99.4) & 99.7 (99.4, 99.8) & 99.7 (99.5, 99.9) & 99.9 (99.7, 100.0)\\
         & \cellcolor{gray!50} $\Delta \text{xAUC}$ & \cellcolor{gray!50} 9.7 (7.7, 11.8) & \cellcolor{gray!50} 17.4 (14.1, 19.6) & \cellcolor{gray!50} 23.8 (20.1, 28.0) & \cellcolor{gray!50} 32.8 (27.8, 40.3) &  \cellcolor{gray!50} 34.2 (28.8, 41.0) & \cellcolor{gray!50} 38.6 (32.7, 45.6) \\
    \midrule         
         \multirow{3}{*}{120} & a=0 & 81.8 (79.1, 85.0) & 75.2 (71.3, 79.1) & 71.9 (66.7, 76.1) & 65.1 (59.1, 70.9) & 63.6 (57.3, 70.2) & 60.0 (53.4, 66.2) \\
         & a=1 & 98.5 (98.1, 98.9) & 99.0 (98.5, 99.3) & 99.2 (98.8, 99.4) & 99.6 (99.4, 99.8) & 99.8 (99.5, 99.9) & 99.9 (99.7, 100.0) \\
         & \cellcolor{gray!50} $\Delta \text{xAUC}$ & \cellcolor{gray!50} 16.8 (13.3, 19.6) & \cellcolor{gray!50} 23.9 (19.7, 27.7) & \cellcolor{gray!50} 27.4 (23.0, 32.6) & \cellcolor{gray!50} 34.6 (28.5, 40.7) &  \cellcolor{gray!50} 36.3 (29.4, 42.5) & \cellcolor{gray!50} 39.9 (33.5, 46.5)\\
    \midrule     
         \multirow{3}{*}{150} & a=0 & 78.8 (76.3, 82.8) & 72.5 (67.9, 77.0) & 69.5 (65.1, 74.6) & 63.7 (57.9, 69.4) & 63.7 (57.9, 69.4) & 59.0 (52.2, 65.8) \\
         & a=1 & 98.6 (98.1, 99.0) & 99.1 (98.6, 99.4) & 99.3 (98.9, 99.6) & 99.7 (99.5, 99.9) & 99.7 (99.5, 99.9) & 99.9 (99.7, 100.0)\\
         & \cellcolor{gray!50} $\Delta \text{xAUC}$ & \cellcolor{gray!50} 19.6 (15.7, 22.3) & \cellcolor{gray!50} 26.6 (21.9, 31.1) & \cellcolor{gray!50} 29.9 (24.6, 34.3) & \cellcolor{gray!50} 36.1 (30.1, 41.9) &  \cellcolor{gray!50} 36.1 (30.1, 41.9) & \cellcolor{gray!50} 40.9 (33.8, 47.8)\\
    \midrule      
         \multirow{3}{*}{180} & a=0 & 78.4 (75.4, 81.6) & 71.6 (67.6, 76.7) & 68.8 (64.1, 73.0) & 63.4 (57.7, 68.5) & 61.9 (55.8, 67.4) & 58.7 (52.6, 64.9) \\
         & a=1 & 98.5 (98.1, 98.9) & 99.1 (98.6, 99.3) & 99.4 (99.0, 99.6) & 99.8 (99.5, 99.9) & 99.8 (99.6, 100.0) & 99.9 (99.7, 100.0)\\
         & \cellcolor{gray!50} $\Delta \text{xAUC}$ & \cellcolor{gray!50} 20.3 (16.7, 23.2) & \cellcolor{gray!50} 27.5 (22.2, 31.6) & \cellcolor{gray!50} 30.6 (26.3, 35.2) & \cellcolor{gray!50} 36.4 (30.9, 42.2) &  \cellcolor{gray!50} 38.0 (32.2, 44.1) & \cellcolor{gray!50} 41.2 (35.0, 47.4)\\
    \midrule      
    \end{tabular}
    \caption{Group-wise xAUC and $\Delta \text{xAUC}$ under conditional distributional difference for all values of $\tau_1, \phi$, at $d' = 6$.}
    \label{tab:xauc_setting3_d6}
\end{sidewaystable}

\begin{sidewaystable}[t]
    \centering
    \small
    \begin{tabular}{c|c| c c c c c c}
    \toprule
         \multicolumn{2}{c|}{} & \multicolumn{6}{c}{$\tau_1$}  \\
    \midrule
         $\phi$ & \textbf{Group} & \textbf{5.0}& \textbf{5.4} & \textbf{5.8} & \textbf{6.2}& \textbf{6.6} & \textbf{7.0}  \\
    \midrule
         \multirow{3}{*}{0} & a=0 & 97.7 (97.4, 98.0) & 97.7 (97.4, 98.0) & 97.7 (97.4, 98.0) & 97.7 (97.4, 98.0) & 97.7 (97.4, 98.0) & 97.7 (97.4, 98.0) \\
         & a=1 & 97.6 (97.3, 97.9) & 97.5 (97.1, 97.8) & 97.3 (96.8, 97.6) & 97.7 (97.3, 98.1) & 93.7 (92.1, 94.9) & 83.4 (77.2, 86.9)\\
         & \cellcolor{gray!50} $\Delta \text{AUC}$ & \cellcolor{gray!50} 0.2 (0.0, 0.6) & \cellcolor{gray!50} 0.2 (0.0, 0.7) & \cellcolor{gray!50} 0.4 (0.0, 1.0) & \cellcolor{gray!50} 2.4 (1.6, 3.6) & \cellcolor{gray!50} 4.0 (2.9, 5.7) & \cellcolor{gray!50} 14.3 (10.8, 20.5) \\
    \midrule
         \multirow{3}{*}{30} & a=0 & 97.7 (97.4, 98.0) & 97.7 (97.4, 98.0) & 97.7 (97.4, 98.0) & 97.7 (97.4, 98.0) & 97.7 (97.4, 98.0) & 97.7 (97.4, 98.0) \\
         & a=1 & 96.2 (95.7, 96.9) & 93.9 (92.8, 94.8) & 92.8 (91.1, 94.1) & 90.8 (89.2, 92.0) & 89.8 (87.5, 91.3) & 81.4 (76.4, 85.6) \\
        & \cellcolor{gray!50} $\Delta \text{AUC}$ & \cellcolor{gray!50} 1.5 (0.8, 2.0) & \cellcolor{gray!50} 3.8 (2.6, 4.9) & \cellcolor{gray!50} 4.9 (3.3, 6.6) & \cellcolor{gray!50} 7.0 (5.7, 8.6) & \cellcolor{gray!50} 7.1 (5.7, 9.1) & \cellcolor{gray!50} 16.4 (12.1, 21.4)\\
    \midrule
         \multirow{3}{*}{60} & a=0 & 97.7 (97.4, 98.0) & 97.7 (97.4, 98.0) & 97.7 (97.4, 98.0) & 97.7 (97.4, 98.0) & 97.7 (97.4, 98.0) & 97.7 (97.4, 98.0) \\
         & a=1 & 91.5 (90.2, 92.7) & 87.3 (85.5, 88.8) & 85.2 (83.0, 87.1) & 81.1 (78.4, 83.5) & 80.3 (77.3, 82.6) & 76.5 (72.5, 79.9) \\
        & \cellcolor{gray!50} $\Delta \text{AUC}$ & \cellcolor{gray!50} 6.1 (5.0, 7.3) & \cellcolor{gray!50} 10.4 (8.8, 12.4) & \cellcolor{gray!50} 12.4 (10.4, 14.6) & \cellcolor{gray!50} 16.5 (14.2, 19.5) & \cellcolor{gray!50} 17.5 (14.7, 20.3) & \cellcolor{gray!50} 21.2 (17.9, 25.1)\\
    \midrule    
         \multirow{3}{*}{90} & a=0 & 97.7 (97.4, 98.0) & 97.7 (97.4, 98.0) & 97.7 (97.4, 98.0) & 97.7 (97.4, 98.0) & 97.7 (97.4, 98.0) & 97.7 (97.4, 98.0) \\
         & a=1 & 84.9 (83.2, 86.9) & 79.9 (77.5, 81.9) & 77.7 (74.9, 79.9) & 73.2 (70.0, 76.7) & 72.9 (69.6, 75.9) & 71.6 (66.3, 75.1) \\
         & \cellcolor{gray!50} $\Delta \text{AUC}$ & \cellcolor{gray!50} 12.8 (10.9, 14.5) & \cellcolor{gray!50} 17.8 (15.8, 20.0) & \cellcolor{gray!50} 20.0 (17.7, 22.9) & \cellcolor{gray!50} 24.5 (21.2, 27.6) &  \cellcolor{gray!50} 24.7 (21.8, 28.0) & \cellcolor{gray!50} 26.1 (22.6, 31.4)\\
    \midrule         
         \multirow{3}{*}{120} & a=0 & 97.7 (97.4, 98.0) & 97.7 (97.4, 98.0) & 97.7 (97.4, 98.0) & 97.7 (97.4, 98.0) & 97.7 (97.4, 98.0) & 97.7 (97.4, 98.0) \\
         & a=1 & 78.2 (75.7, 80.9) & 73.1 (69.4, 76.1) & 71.1 (67.7, 73.6) & 69.0 (64.0, 73.0) & 68.5 (63.8, 73.1) & 69.5 (58.7, 73.5) \\
         & \cellcolor{gray!50} $\Delta \text{AUC}$ & \cellcolor{gray!50} 19.5 (17.0, 22.0) & \cellcolor{gray!50} 24.6 (21.6, 28.4) & \cellcolor{gray!50} 26.5 (24.0, 30.0)& \cellcolor{gray!50} 28.7 (24.7, 33.7) &  \cellcolor{gray!50} 29.1 (24.7, 34.0) & \cellcolor{gray!50} 28.1 (24.4, 38.7)\\
    \midrule     
          \multirow{3}{*}{150} & a=0 & 97.7 (97.4, 98.0) & 97.7 (97.4, 98.0) & 97.7 (97.4, 98.0) & 97.7 (97.4, 98.0) & 97.7 (97.4, 98.0) & 97.7 (97.4, 98.0) \\
         & a=1 & 71.0 (68.3, 74.8) & 73.2 (70.5, 76.2) & 66.2 (61.9, 70.6) & 67.0 (61.0, 72.6) & 67.3 (31.5, 72.4) & 68.1 (32.4, 73.7)\\
         & \cellcolor{gray!50} $\Delta \text{AUC}$ & \cellcolor{gray!50} 26.7 (23.0, 29.5) & \cellcolor{gray!50} 30.6 (26.9, 34.7) & \cellcolor{gray!50} 31.4 (26.9, 36.0) & \cellcolor{gray!50} 30.7 (25.0, 36.5) &  \cellcolor{gray!50} 30.4 (25.2, 66.4) & \cellcolor{gray!50} 29.5 (24.0, 65.5)\\
    \midrule      
         \multirow{3}{*}{180} & a=0 & 97.7 (97.4, 98.0) & 97.7 (97.4, 98.0) & 97.7 (97.4, 98.0) & 97.7 (97.4, 98.0) & 97.7 (97.4, 98.0) & 97.7 (97.4, 98.0) \\
         & a=1 & 67.7 (63.4, 70.7) & 63.6 (59.9, 68.4) & 63.8 (59.0, 69.0) & 65.8 (35.2, 70.8) & 66.3 (31.7, 72.3) & 67.2 (31.0, 72.7)\\
         & \cellcolor{gray!50} $\Delta \text{AUC}$ & \cellcolor{gray!50} 30.0 (27.0, 34.3) & \cellcolor{gray!50} 34.3 (29.3, 38.0) & \cellcolor{gray!50} 33.8 (28.7, 38.6) & \cellcolor{gray!50} 31.7 (26.6, 62.6) &  \cellcolor{gray!50} 31.6 (25.4, 66.0) & \cellcolor{gray!50} 30.5 (24.8, 66.6)\\
    \midrule      
    \end{tabular}
    \caption{Group-wise AUC and $\Delta \text{AUC}$ under conditional distributional difference for all values of $\tau_1, \phi$, at $d' = 8$.}
    \label{tab:auc_setting3_d8}
\end{sidewaystable}

\begin{sidewaystable}[t]
    \centering
    \small
    \begin{tabular}{c|c| c c c c c c}
    \toprule
         \multicolumn{2}{c|}{} & \multicolumn{6}{c}{$\tau_1$}  \\
    \midrule
         $\phi$ & \textbf{Group} & \textbf{5.0}& \textbf{5.4} & \textbf{5.8} & \textbf{6.2}& \textbf{6.6} & \textbf{7.0}  \\
    \midrule
         \multirow{3}{*}{0} & a=0 & 97.6 (97.0, 98.1) & 90.9 (89.5, 92.4) & 85.8 (83.6, 88.3) & 72.3 (68.5, 77.0) & 69.0 (64.6, 73.6) & 62.3 (55.8, 67.7) \\
         & a=1 & 97.6 (97.2, 98.2) & 99.7 (99.5, 99.8) & 99.9 (99.8, 99.9) & 100.0 (99.9, 100.0) & 100.0 (99.9, 100.0) & 99.9 (99.8, 100.0)\\
         & \cellcolor{gray!50} $\Delta \text{xAUC}$ & \cellcolor{gray!50} 0.4 (0.1, 1.1) & \cellcolor{gray!50} 8.8 (7.1, 10.2) & \cellcolor{gray!50} 14.1 (11.6, 16.3) & \cellcolor{gray!50} 27.6 (23.0, 31.4) & \cellcolor{gray!50} 30.9 (26.3, 35.4) & \cellcolor{gray!50} 37.6 (32.0, 44.2) \\
    \midrule
         \multirow{3}{*}{30} & a=0 & 94.6 (93.6, 95.5) & 87.9 (85.6, 89.8) & 82.8 (80.7, 85.9) & 70.9 (66.2, 75.6) & 68.2 (62.9, 73.9) & 62.4 (56.7, 68.1) \\
         & a=1 & 98.5 (98.1, 98.9) & 99.2 (98.9, 99.4) & 99.5 (99.3, 99.7) & 99.9 (99.8, 99.9) & 99.9 (99.8, 100.0) & 99.9 (99.8, 100.0) \\
         & \cellcolor{gray!50} $\Delta \text{xAUC}$ & \cellcolor{gray!50} 3.9 (2.9, 5.2) & \cellcolor{gray!50} 11.5 (9.3, 13.6) & \cellcolor{gray!50} 16.6 (13.6, 18.9) & \cellcolor{gray!50} 29.0 (24.2, 33.7) & \cellcolor{gray!50} 31.7 (25.9, 37.0) & \cellcolor{gray!50} 37.5 (31.7, 43.2)\\
    \midrule
         \multirow{3}{*}{60} & a=0 & 89.1 (86.8, 90.6) & 81.8 (78.7, 84.4) & 77.9 (74.5, 80.3) & 68.4 (61.9, 72.9) & 66.2 (60.5, 71.1) & 62.0 (55.6, 67.4) \\
         & a=1 & 98.6 (98.1, 98.9) & 99.0 (98.6, 99.3) & 99.2 (98.8, 99.4) & 99.7 (99.4, 99.8) & 99.8 (99.6, 99.9) & 99.9 (99.7, 100.0)\\
        & \cellcolor{gray!50} $\Delta \text{xAUC}$ & \cellcolor{gray!50} 9.5 (7.6, 11.9) & \cellcolor{gray!50} 17.2 (14.4, 20.5) & \cellcolor{gray!50} 21.2 (18.6, 24.9) & \cellcolor{gray!50} 31.3 (26.6, 37.7) & \cellcolor{gray!50} 33.5 (28.5, 39.3) & \cellcolor{gray!50} 38.0 (32.4, 44.3)\\
    \midrule    
         \multirow{3}{*}{90} & a=0 & 83.6 (80.8, 86.0) & 76.5 (72.6, 79.7) & 73.1 (67.8, 76.2) & 66.1 (59.3, 70.5) & 64.6 (58.2, 69.7) & 60.5 (54.2, 66.1) \\
         & a=1 & 98.5 (98.1, 98.9) & 98.9 (98.5, 99.2) & 99.1 (98.8, 99.4) & 99.6 (99.4, 99.8) & 99.7 (99.5, 99.9) & 99.9 (99.7, 100.0)\\
         & \cellcolor{gray!50} $\Delta \text{xAUC}$ & \cellcolor{gray!50} 15.0 (12.2, 17.6) & \cellcolor{gray!50} 22.4 (19.0, 26.4) & \cellcolor{gray!50} 26.1 (22.2, 31.5) & \cellcolor{gray!50} 33.5 (28.9, 40.3) &  \cellcolor{gray!50} 35.2 (29.8, 41.5) & \cellcolor{gray!50} 39.4 (33.6, 45.6) \\
    \midrule         
         \multirow{3}{*}{120} & a=0 & 79.0 (75.1, 81.9) & 72.3 (67.0, 76.5) & 69.4 (63.4, 74.0) & 63.8 (57.9, 69.5) & 62.1 (55.9, 67.5) & 58.9 (52.6, 65.0) \\
         & a=1 & 98.6 (98.1, 99.0) & 99.1 (98.7, 99.4) & 99.3 (98.9, 99.6) & 99.8 (99.5, 99.9) & 99.8 (99.6, 100.0) & 99.9 (99.7, 100.0) \\
         & \cellcolor{gray!50} $\Delta \text{xAUC}$ & \cellcolor{gray!50} 19.7 (16.3, 23.6) & \cellcolor{gray!50} 26.9 (22.5, 32.0) & \cellcolor{gray!50} 29.8 (25.2, 36.2) & \cellcolor{gray!50} 35.9 (30.1, 42.1) &  \cellcolor{gray!50} 36.3 (29.4, 42.5) & \cellcolor{gray!50} 41.1 (34.8, 47.4)\\
    \midrule     
         \multirow{3}{*}{150} & a=0 & 75.1 (71.4, 79.3) & 68.6 (63.4, 73.3) & 66.3 (61.0, 71.5) & 60.7 (54.6, 66.0) & 59.4 (52.3, 65.4) & 58.0 (51.9, 63.8) \\
         & a=1 & 98.8 (98.3, 99.1) & 99.4 (99.1, 99.6) & 99.6 (99.3, 99.8) & 99.9 (99.7, 100.0) & 99.9 (99.7, 100.0) & 99.9 (99.8, 100.0)\\
         & \cellcolor{gray!50} $\Delta \text{xAUC}$ & \cellcolor{gray!50} 19.6 (15.7, 22.3) & \cellcolor{gray!50} 30.7 (25.9, 35.8) & \cellcolor{gray!50} 33.4 (27.9, 38.8) & \cellcolor{gray!50} 39.2 (33.7, 45.3) &  \cellcolor{gray!50} 40.5 (34.4, 47.7) & \cellcolor{gray!50} 41.9 (36.1, 48.1)\\
    \midrule      
         \multirow{3}{*}{180} & a=0 & 74.0 (70.3, 77.8) & 67.5 (62.1, 72.2) & 64.4 (58.9, 69.7) & 59.4 (53.0, 65.2) & 58.3 (51.6, 63.5) & 57.7 (50.7, 63.7) \\
         & a=1 & 98.9 (98.4, 99.3) & 99.5 (99.2, 99.7) & 99.7 (99.5, 99.9) & 99.9 (99.7, 100.0) & 99.9 (99.7, 100.0) & 99.9 (99.8, 100.0)\\
         & \cellcolor{gray!50} $\Delta \text{xAUC}$ & \cellcolor{gray!50} 24.9 (20.7, 29.0) & \cellcolor{gray!50} 31.9 (27.1, 37.5) & \cellcolor{gray!50} 35.2 (29.8, 40.9) & \cellcolor{gray!50} 40.5 (34.5, 46.9) &  \cellcolor{gray!50} 41.7 (36.3, 48.3) & \cellcolor{gray!50} 42.2 (36.1, 49.3)\\
    \midrule      
    \end{tabular}
    \caption{Group-wise xAUC and $\Delta \text{xAUC}$ under conditional distributional difference for all values of $\tau_1, \phi$, at $d' = 8$.}
    \label{tab:xauc_setting3_d8}
\end{sidewaystable}

\begin{sidewaystable}[t]
    \centering
    \small
    \begin{tabular}{c|c| c c c c c c}
    \toprule
         \multicolumn{2}{c|}{} & \multicolumn{6}{c}{$\tau_1$}  \\
    \midrule
         $\phi$ & \textbf{Group} & \textbf{5.0}& \textbf{5.4} & \textbf{5.8} & \textbf{6.2}& \textbf{6.6} & \textbf{7.0}  \\
    \midrule
         \multirow{3}{*}{0} & a=0 & 97.7 (97.4, 98.0) & 97.7 (97.4, 98.0) & 97.7 (97.4, 98.0) & 97.7 (97.4, 98.0) & 97.7 (97.4, 98.0) & 97.7 (97.4, 98.0) \\
         & a=1 & 97.6 (97.3, 97.9) & 97.5 (97.1, 97.8) & 97.3 (96.8, 97.6) & 97.7 (97.3, 98.1) & 93.7 (92.1, 94.9) & 83.4 (77.2, 86.9)\\
         & \cellcolor{gray!50} $\Delta \text{AUC}$ & \cellcolor{gray!50} 0.2 (0.0, 0.6) & \cellcolor{gray!50} 0.2 (0.0, 0.7) & \cellcolor{gray!50} 0.4 (0.0, 1.0) & \cellcolor{gray!50} 2.4 (1.6, 3.6) & \cellcolor{gray!50} 4.0 (2.9, 5.7) & \cellcolor{gray!50} 14.3 (10.8, 20.5) \\
    \midrule
         \multirow{3}{*}{30} & a=0 & 97.7 (97.4, 98.0) & 97.7 (97.4, 98.0) & 97.7 (97.4, 98.0) & 97.7 (97.4, 98.0) & 97.7 (97.4, 98.0) & 97.7 (97.4, 98.0) \\
         & a=1 & 95.9 (95.0, 96.6) & 93.2 (91.9, 94.4) & 92.0 (90.3, 93.2) & 989.4 (87.7, 91.2) & 88.7 (86.1, 90.3) & 81.2 (76.0, 84.5) \\
        & \cellcolor{gray!50} $\Delta \text{AUC}$ & \cellcolor{gray!50} 1.8 (1.1, 2.7) & \cellcolor{gray!50} 4.5 (3.2, 5.9) & \cellcolor{gray!50} 5.7 (4.5, 7.4) & \cellcolor{gray!50} 8.2 (6.6, 10.0) & \cellcolor{gray!50} 9.0 (7.3, 11.5) & \cellcolor{gray!50} 16.5 (13.1, 21.7))\\
    \midrule
         \multirow{3}{*}{60} & a=0 & 97.7 (97.4, 98.0) & 97.7 (97.4, 98.0) & 97.7 (97.4, 98.0) & 97.7 (97.4, 98.0) & 97.7 (97.4, 98.0) & 97.7 (97.4, 98.0) \\
         & a=1 & 89.7 (88.5, 91.4) & 85.3 (83.1, 87.1) & 83.1 (80.5, 85.5) & 78.8 (76.0, 81.0) & 78.1 (75.1, 80.5) & 74.9 (71.0, 78.5) \\
        & \cellcolor{gray!50} $\Delta \text{AUC}$ & \cellcolor{gray!50} 8.0 (6.3, 9.2) & \cellcolor{gray!50} 12.4 (10.5, 14.5) & \cellcolor{gray!50} 14.6 (12.3, 17.1) & \cellcolor{gray!50} 18.7 (16.7, 21.5) & \cellcolor{gray!50} 19.6 (17.3, 22.6) & \cellcolor{gray!50} 22.8 (19.3, 26.6)\\
    \midrule    
         \multirow{3}{*}{90} & a=0 & 97.7 (97.4, 98.0) & 97.7 (97.4, 98.0) & 97.7 (97.4, 98.0) & 97.7 (97.4, 98.0) & 97.7 (97.4, 98.0) & 97.7 (97.4, 98.0) \\
         & a=1 & 81.3 (79.2, 83.1) & 76.0 (73.4, 78.3) & 73.8 (70.8, 76.2) & 71.1 (67.1, 73.2) & 70.3 (65.2, 72.9) & 70.0 (65.0, 73.9) \\
         & \cellcolor{gray!50} $\Delta \text{AUC}$ & \cellcolor{gray!50} 16.4 (14.6, 18.6) & \cellcolor{gray!50} 21.6 (19.2, 24.2) & \cellcolor{gray!50} 24.0 (21.5, 26.6) & \cellcolor{gray!50} 26.7 (24.2, 30.7)  &  \cellcolor{gray!50} 27.4 (24.7, 32.5) & \cellcolor{gray!50} 27.7 (24.0, 32.4)\\
    \midrule         
         \multirow{3}{*}{120} & a=0 & 97.7 (97.4, 98.0) & 97.7 (97.4, 98.0) & 97.7 (97.4, 98.0) & 97.7 (97.4, 98.0) & 97.7 (97.4, 98.0) & 97.7 (97.4, 98.0) \\
         & a=1 & 72.4 (69.2, 75.7) & 68.1 (65.2, 71.9) & 67.6 (63.0, 71.8) & 68.4 (61.7, 71.8) & 68.5 (63.8, 73.1) & 68.9 (31.4, 72.8) \\
         & \cellcolor{gray!50} $\Delta \text{AUC}$ & \cellcolor{gray!50} 25.4 (22.2, 28.5) & \cellcolor{gray!50} 29.6 (25.7, 32.6) & \cellcolor{gray!50} 30.4 (26.7, 33.9) & \cellcolor{gray!50} 30.0 (26.1, 34.6) &  \cellcolor{gray!50} 29.3 (26.0, 35.7) & \cellcolor{gray!50} 28.9 (24.9, 66.1)\\
    \midrule     
          \multirow{3}{*}{150} & a=0 & 97.7 (97.4, 98.0) & 97.7 (97.4, 98.0) & 97.7 (97.4, 98.0) & 97.7 (97.4, 98.0) & 97.7 (97.4, 98.0) & 97.7 (97.4, 98.0) \\
         & a=1 & 64.4 (60.8, 68.5) & 66.2 (57.1, 70.2) & 66.8 (32.9, 71.1) & 67.8 (31.0, 71.6) & 67.8 (31.0, 71.6) & 67.8 (31.0, 71.6)\\
         & \cellcolor{gray!50} $\Delta \text{AUC}$ & \cellcolor{gray!50} 33.2 (28.9, 37.0) & \cellcolor{gray!50} 31.4 (27.1, 40.0) & \cellcolor{gray!50} 31.0 (26.4, 64.7) & \cellcolor{gray!50} 30.1 (26.0, 66.7) &  \cellcolor{gray!50} 30.1 (26.0, 66.7) & \cellcolor{gray!50} 30.1 (26.0, 66.7)\\
    \midrule      
         \multirow{3}{*}{180} & a=0 & 97.7 (97.4, 98.0) & 97.7 (97.4, 98.0) & 97.7 (97.4, 98.0) & 97.7 (97.4, 98.0) & 97.7 (97.4, 98.0) & 97.7 (97.4, 98.0) \\
         & a=1 & 66.0 (31.4, 72.1) & 66.0 (31.4, 72.1) & 63.8 (59.0, 69.0) & 65.8 (35.2, 70.8) & 66.3 (31.7, 72.3) & 67.2 (31.0, 72.7)\\
         & \cellcolor{gray!50} $\Delta \text{AUC}$ & \cellcolor{gray!50} 31.7 (25.6, 66.2) & \cellcolor{gray!50} 31.7 (25.6, 66.2) & \cellcolor{gray!50} 31.7 (25.6, 66.2) & \cellcolor{gray!50} 31.7 (25.6, 66.2) &  \cellcolor{gray!50} 31.7 (25.6, 66.2)  & \cellcolor{gray!50} 31.7 (25.6, 66.2) \\
    \midrule      
    \end{tabular}
    \caption{Group-wise AUC and $\Delta \text{AUC}$ under conditional distributional difference for all values of $\tau_1, \phi$, at $d' = 10$.}
    \label{tab:auc_setting3_d10}
\end{sidewaystable}

\begin{sidewaystable}[t]
    \centering
    \small
    \begin{tabular}{c|c| c c c c c c}
    \toprule
         \multicolumn{2}{c|}{} & \multicolumn{6}{c}{$\tau_1$}  \\
    \midrule
         $\phi$ & \textbf{Group} & \textbf{5.0}& \textbf{5.4} & \textbf{5.8} & \textbf{6.2}& \textbf{6.6} & \textbf{7.0}  \\
    \midrule
         \multirow{3}{*}{0} & a=0 & 97.6 (97.0, 98.1) & 90.9 (89.5, 92.4) & 85.8 (83.6, 88.3) & 72.3 (68.5, 77.0) & 69.0 (64.6, 73.6) & 62.3 (55.8, 67.7) \\
         & a=1 & 97.6 (97.2, 98.2) & 99.7 (99.5, 99.8) & 99.9 (99.8, 99.9) & 100.0 (99.9, 100.0) & 100.0 (99.9, 100.0) & 99.9 (99.8, 100.0)\\
         & \cellcolor{gray!50} $\Delta \text{xAUC}$ & \cellcolor{gray!50} 0.4 (0.1, 1.1) & \cellcolor{gray!50} 8.8 (7.1, 10.2) & \cellcolor{gray!50} 14.1 (11.6, 16.3) & \cellcolor{gray!50} 27.6 (23.0, 31.4) & \cellcolor{gray!50} 30.9 (26.3, 35.4) & \cellcolor{gray!50} 37.6 (32.0, 44.2) \\
    \midrule
         \multirow{3}{*}{30} & a=0 & 94.0 (93.0, 95.0) & 87.1 (85.1, 89.2) & 82.2 (79.6, 85.1) & 70.6 (65.8, 75.3) & 67.8 (62.4, 73.5) & 62.2 (56.7, 67.9) \\
         & a=1 & 98.5 (98.1, 98.9) & 99.1 (98.8, 99.4) & 99.4 (99.2, 99.6) & 99.9 (99.7, 99.9) & 99.9 (99.8, 100.0) & 99.9 (99.8, 100.0) \\
         & \cellcolor{gray!50} $\Delta \text{xAUC}$ & \cellcolor{gray!50} 4.5 (3.4, 5.8) & \cellcolor{gray!50} 12.0 (9.8, 14.0) & \cellcolor{gray!50} 17.3 (14.2, 19.8) & \cellcolor{gray!50} 29.2 (24.4, 34.1) & \cellcolor{gray!50} 32.1 (26.3, 37.5) & \cellcolor{gray!50} 37.6 (31.8, 43.2)\\
    \midrule
         \multirow{3}{*}{60} & a=0 & 87.4 (85.4, 89.5) & 80.4 (77.2, 83.0) & 77.9 (74.5, 80.3) & 67.7 (61.4, 72.3) & 66.2 (59.3, 70.8) & 61.7 (55.9, 67.9) \\
         & a=1 & 98.6 (98.1, 98.9) & 98.9 (98.6, 99.2) & 99.1 (98.8, 99.4) & 99.6 (99.5, 99.8) & 99.8 (99.5, 99.9) & 99.9 (99.7, 100.0)\\
        & \cellcolor{gray!50} $\Delta \text{xAUC}$ & \cellcolor{gray!50} 11.1 (9.1, 13.3) & \cellcolor{gray!50} 18.8 (15.5, 22.1) & \cellcolor{gray!50} 22.4 (18.9, 26.1) & \cellcolor{gray!50} 31.9 (27.2, 38.4) & \cellcolor{gray!50} 33.5 (28.8, 40.5) & \cellcolor{gray!50} 38.1 (31.9, 44.0)\\
    \midrule    
         \multirow{3}{*}{90} & a=0 & 81.0 (77.5, 83.8) & 74.1 (70.1, 77.7) & 71.2 (66.4, 75.6) & 64.7 (57.9, 70.0) & 63.2 (57.0, 69.3) & 59.8 (52.5, 65.5) \\
         & a=1 & 98.5 (98.1, 98.9) & 99.0 (98.5, 99.2) & 99.2 (98.8, 99.4) & 99.7 (99.5, 99.9) & 99.8 (99.6, 99.9) & 99.9 (99.7, 100.0)\\
         & \cellcolor{gray!50} $\Delta \text{xAUC}$ & \cellcolor{gray!50} 17.6 (14.2, 21.1) & \cellcolor{gray!50} 22.4 (19.0, 26.4) & \cellcolor{gray!50} 27.9 (23.5, 32.8) & \cellcolor{gray!50} 35.0 (29.5, 41.8) &  \cellcolor{gray!50} 36.5 (30.2, 42.9) & \cellcolor{gray!50} 40.1 (34.3, 47.4) \\
    \midrule         
         \multirow{3}{*}{120} & a=0 & 75.5 (71.5, 79.6) & 69.2 (63.2, 74.1) & 66.4 (58.9, 71.5) & 60.7 (53.8, 66.7) & 59.5 (51.8, 65.5) & 58.2 (49.7, 63.4) \\
         & a=1 & 98.8 (98.2, 99.2) & 99.4 (98.9, 99.6) & 99.6 (99.2, 99.8) & 99.9 (99.6, 100.0) & 99.9 (99.7, 100.0) & 99.9 (99.7, 100.0) \\
         & \cellcolor{gray!50} $\Delta \text{xAUC}$ & \cellcolor{gray!50} 23.2 (18.8, 27.9) & \cellcolor{gray!50} 30.1 (24.9, 36.5) & \cellcolor{gray!50} 33.1 (28.0, 40.7) & \cellcolor{gray!50} 39.2 (32.8, 46.1) &  \cellcolor{gray!50} 40.3 (34.3, 48.1) & \cellcolor{gray!50} 41.7 (36.3, 50.3)\\
    \midrule     
         \multirow{3}{*}{150} & a=0 & 67.9 (62.8, 72.4) & 61.0 (54.3, 67.3) & 58.7 (50.6, 64.8) & 60.7 (54.6, 66.0) & 59.4 (52.3, 65.4) & 57.3 (50.5, 63.9) \\
         & a=1 & 99.5 (99.1, 99.7) & 99.9 (99.6, 100.0) & 99.6 (99.3, 99.8) & 99.9 (99.7, 100.0) & 99.9 (99.8, 100.0) & 99.9 (99.8, 100.0)\\
         & \cellcolor{gray!50} $\Delta \text{xAUC}$ & \cellcolor{gray!50} 31.6 (26.9, 37.0) & \cellcolor{gray!50} 38.8 (32.4, 45.7) & \cellcolor{gray!50} 33.4 (27.9, 38.8) & \cellcolor{gray!50} 41.2 (35.0, 49.4) &  \cellcolor{gray!50} 42.5 (35.9, 49.5) & \cellcolor{gray!50} 42.5 (35.9, 49.5)
         \\
    \midrule      
         \multirow{3}{*}{180} & a=0 & 57.2 (48.3, 63.9) & 57.2 (48.3, 63.9) & 57.2 (48.3, 63.9) & 57.2 (48.3, 63.9) & 57.2 (48.3, 63.9) & 57.2 (48.3, 63.9) \\
         & a=1 & 99.9 (99.7, 100.0) & 99.9 (99.7, 100.0) & 99.9 (99.7, 100.0) & 99.9 (99.7, 100.0) & 99.9 (99.7, 100.0) & 99.9 (99.7, 100.0)\\
         & \cellcolor{gray!50} $\Delta \text{xAUC}$ & \cellcolor{gray!50} 42.7 (35.8, 51.7) & \cellcolor{gray!50} 42.7 (35.8, 51.7) & \cellcolor{gray!50} 42.7 (35.8, 51.7) & \cellcolor{gray!50} 42.7 (35.8, 51.7) &  \cellcolor{gray!50} 42.7 (35.8, 51.7) & \cellcolor{gray!50} 42.7 (35.8, 51.7)\\
    \midrule      
    \end{tabular}
    \caption{Group-wise xAUC and $\Delta \text{xAUC}$ under conditional distributional difference for all values of $\tau_1, \phi$, at $d' = 10$.}
    \label{tab:xauc_setting3_d10}
\end{sidewaystable}

\subsection{Additional Results for Varying Marginal Distributional Distance (Setting 2)}

\paragraph{Increasing divergence between the marginal risk distributions correlates with larger performance gaps under disparate censorship.} We set $\tau_0 = 5, \tau_1 = 6.6$, a setting in which we observed performance gaps under disparate censorship, and vary $\Delta\mu$ (defined as $\mu_1 - \mu_0$) and $\sigma$. We also fix $\mu_0 + \mu_1 = 0.35 + 0.55 = 0.9$. We are interested whether varying the KL divergence between the marginals, given by $\frac{1}{2 \sigma} \lVert \Delta \mu \rVert_2^2$, impacts the magnitude of performance gaps. To that end, we first explore the impacts of changing $\sigma$ for both groups on subgroup performance gaps.
We then vary $\Delta\mu$, or the difference of the means between the marginal risk distributions, on subgroup performance gaps. In summary, we find that increasing the KL divergence by either decreasing the within-group variance or increasing the mean difference exacerbates the negative impacts of disparate censorship.

\begin{figure}[t]
    \centering
    \includegraphics[width=0.85\linewidth]{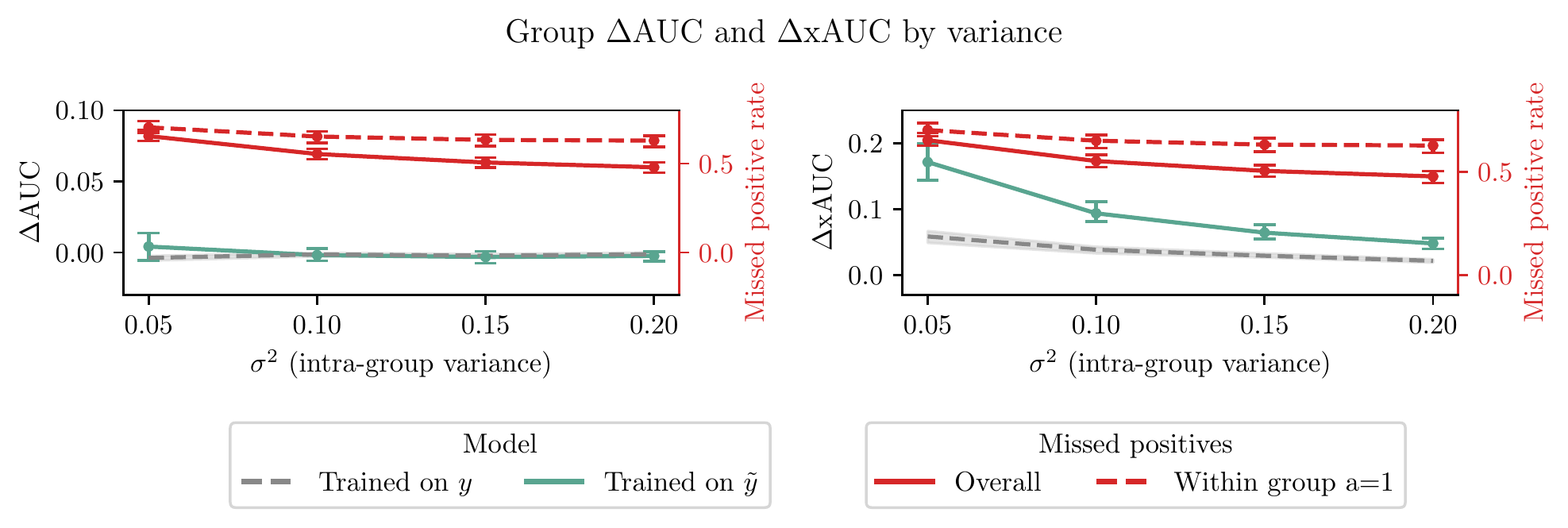}
     \caption{
     $\Delta \text{AUC}$ (left) and $\Delta \text{xAUC}$ (right) with 95\% empirical CIs for the model trained on $y$ (gray) and on $\tilde{y}$ (green). As $\sigma^2$ increases to the right, increasing overlap between the distributions, the number of missed positives increases, and $\Delta \text{AUC}, \Delta \text{xAUC}$ decrease.} %
    \label{fig:variance}
\end{figure}

We first evaluate the effect of within-group variance. As we decrease $\sigma^2$ from 0.2 to 0.05, we observe increasing AUC and xAUC gaps. As seen in Figure~\ref{fig:variance}, at a variance of 0.05 (left side, all graphs), the median AUC and xAUC gaps are 0.09 and 0.22. However, as we increase the within-group variance to 0.2 (right side, all graphs), the median AUC and xAUC gaps narrow to 0.01 and 0.06. Decreasing $\sigma^2$ also decreases the overlap between the two distributions, leading to greater performance disparities. %

\begin{figure}[t]
    \centering
    \includegraphics[width=0.85\linewidth]{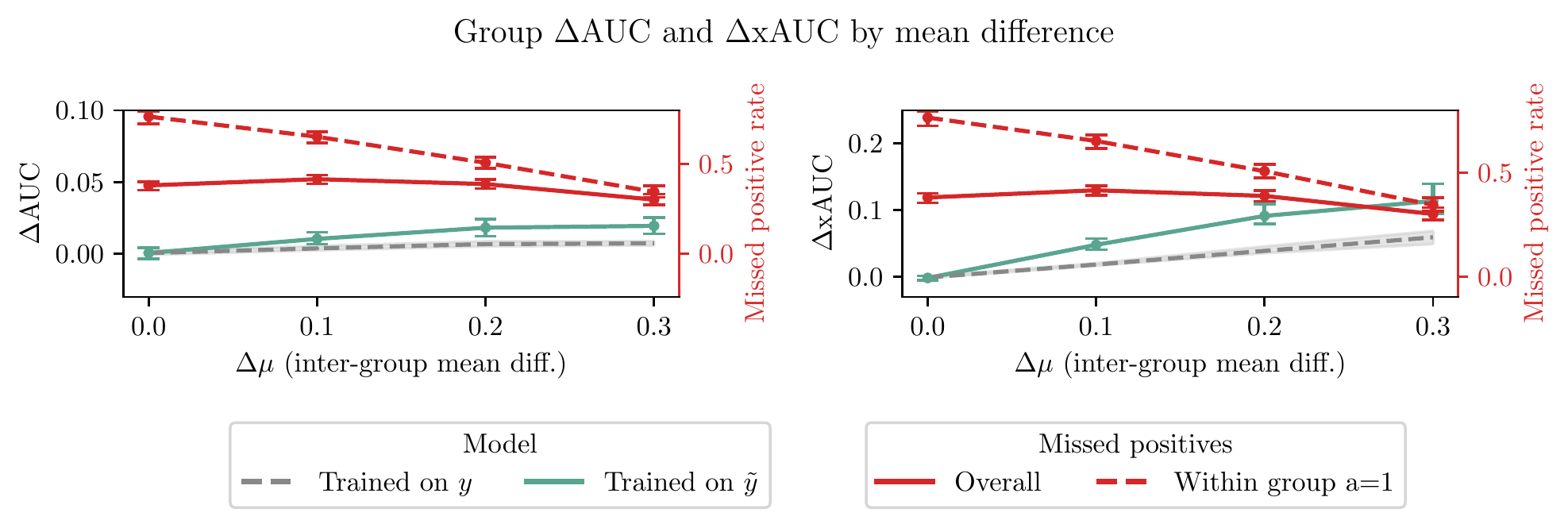}
    \caption{
    $\Delta \text{AUC}$ (left) and $\Delta \text{xAUC}$ (right) with 95\% empirical CIs for the model trained on $y$ (gray) and on $\tilde{y}$ (green). As $\Delta\mu$ increases to the right, the AUC and xAUC gaps \textit{widen}. Note that $P_0(\tilde{y} = 0 \mid y=1) = 0$---there are no missed positives in group $a=0$. The number of missed positives first increases, then decreases, since sufficiently high $\Delta\mu$ means that untested patients' risk largely lies in one tail of the risk distribution.}%
    \label{fig:md}
\end{figure}

We then evaluate the effect of mean difference on performance disparities.
As between-group mean difference increases, disparities between groups generally worsen with respect to oracle performance. We increase the per-element mean difference between the distributions from 0 to 0.4, keeping the decision boundary constant. Figure~\ref{fig:md} shows that as mean difference $\Delta\mu$ increases, disparities in AUC, xAUC emerge. At $\Delta\mu = 0$, the performance is identical to the oracle, as expected. At $\Delta\mu = 0$, we are in Setting 1: there are no marginal or conditional differences between groups. However, as $\Delta\mu$ grows to 0.3, the AUC and xAUC gaps increase to 0.03, 0.14, respectively. %

Our results suggest that distributional distance between patient subgroups as measured using mean difference and variance is correlated with performance disparities. Characterizing distributional differences in patient subgroup risk could provide expected levels of performance disparities under disparate censorship. Full results are provided in Table~\ref{tab:md} (differences in $\Delta\mu$) and Table~\ref{tab:var} (differences in $\sigma^2$).

\begin{table}[t]
    \centering
    \small
    \begin{tabular}{c|c | cc}
    \toprule
        \textbf{Intergroup mean diff. ($\Delta \mu$)} & \textbf{Group} & \textbf{AUC} & \textbf{xAUC} \\
    \midrule
        \multirow{3}{*}{0} & $a=0$ & 95.8 (95.2, 96.2) & 95.7 (95.0, 96.2)\\ 
        & $a=1$ & 95.8 (95.2, 96.3) & 95.9 (95.2, 96.3)\\
        & \cellcolor{gray!50} $\Delta$ & \cellcolor{gray!50} 0.1 (0.0, 0.4) & \cellcolor{gray!50} 0.2 (0.0, 0.5) \\
    \midrule
        \multirow{3}{*}{0.1} & $a=0$ & 94.7 (93.8, 95.4) & 97.1 (96.5, 97.5) \\ 
        & $a=1$ & 95.8 (95.0, 96.3) & 92.4 (91.1, 93.2) \\
        & \cellcolor{gray!50} $\Delta$ & \cellcolor{gray!50} 1.0 (0.7, 1.5) & \cellcolor{gray!50} 4.8 (4.2, 5.6) \\
    \midrule
        \multirow{3}{*}{0.2} & $a=0$ & 95.1 (94.0, 95.7) & 98.6 (98.3, 98.8)\\ 
        & $a=1$ & 96.8 (96.2, 97.3) & 89.5 (87.8, 90.7)\\
        & \cellcolor{gray!50} $\Delta$ & \cellcolor{gray!50} 1.8 (1.3, 2.3) & \cellcolor{gray!50} 9.2 (8.0, 10.6)\\
    \midrule
        \multirow{3}{*}{0.3} & $a=0$ & 96.5 (95.8, 97.1) & 99.6 (99.5, 99.7)\\ 
        & $a=1$ & 98.5 (98.1, 98.7) & 88.2 (86.0, 89.9)\\
        & \cellcolor{gray!50} $\Delta$ & \cellcolor{gray!50} 1.9 (1.5, 2.5) & \cellcolor{gray!50} 11.4 (9.8, 13.4)\\    
    \bottomrule
    \end{tabular}
    \caption{AUC, xAUC with empirical 95\% CIs at varying settings of $\Delta\mu$, $\tau_0 = 5, \tau_1 = 6.6$.}
    \label{tab:md}
\end{table}

\begin{table}[t]
    \centering
    \small
    \begin{tabular}{c|c | cc}
    \toprule
        \textbf{Within-group var. ($\sigma^2$)} & \textbf{Group} & \textbf{AUC} & \textbf{xAUC} \\
    \midrule
        \multirow{3}{*}{0.05} & $a=0$ & 93.7 (91.6, 95.5) & 98.2 (97.5, 98.7)\\ 
        & $a=1$ & 94.1 (92.7, 95.3) & 81.0 (78.3, 83.5) \\
        & \cellcolor{gray!50} $\Delta$ & \cellcolor{gray!50} 0.5 (0.0, 1.3) & 17.2 (15.2, 19.2) \cellcolor{gray!50} \\
    \midrule
        \multirow{3}{*}{0.1} & $a=0$ & 95.9 (95.1, 96.6) & 98.4 (98.1, 98.7) \\ 
        & $a=1$ & 95.7 (95.0, 96.4) & 89.0 (87.3, 90.4) \\
        & \cellcolor{gray!50} $\Delta$ & \cellcolor{gray!50} 0.2 (0.0, 0.5) & \cellcolor{gray!50} 9.4 (8.2, 10.7) \\
    \midrule
        \multirow{3}{*}{0.15} & $a=0$ & 96.4 (95.9, 96.9) & 98.3 (98.0, 98.6) \\ 
        & $a=1$ & 96.1 (95.5, 96.6) & 91.8 (90.5, 92.9) \\
        & \cellcolor{gray!50} $\Delta$ & \cellcolor{gray!50} 0.3 (0.0, 0.7) &  \cellcolor{gray!50} 6.5 (5.6, 7.5) \\
    \midrule
        \multirow{3}{*}{0.2} & $a=0$ & 96.7 (96.3, 97.2) & 98.2 (97.9, 98.6)\\ 
        & $a=1$ &  96.4 (95.9, 97.0) & 93.4 (92.5, 94.4) \\
        & \cellcolor{gray!50} $\Delta$ & \cellcolor{gray!50} 0.2 (0.0, 0.6) & \cellcolor{gray!50} 4.8 (4.1, 5.4)\\    
    \bottomrule
    \end{tabular}
    \caption{AUC, xAUC with empirical 95\% CIs at varying settings of $\sigma^2$, $\tau_0 = 5, \tau_1 = 6.6$.}
    \label{tab:var}
\end{table}

\section{Code}
For reproducibility, code used to generate all figures and experimental results after review is provided at the MLD3 Github.

\end{document}